\documentclass[10pt]{article} %
\usepackage[preprint]{tmlr}

\usepackage{hyperref}
\usepackage{url}

\usepackage{microtype}
\usepackage{graphicx}
\usepackage{subcaption}
\usepackage{booktabs} %

\usepackage{hyperref}

\usepackage{amsmath}
\usepackage{amssymb}
\usepackage{mathtools}
\usepackage{amsthm}

\usepackage[capitalize,noabbrev]{cleveref}

\usepackage[utf8]{inputenc} %
\usepackage{booktabs}       %
\usepackage{amsfonts}       %
\usepackage{nicefrac}       %
\usepackage{microtype}      %
\usepackage{xcolor}         %

\usepackage{hyperref}
\usepackage{url}

\usepackage{caption}

\usepackage{amssymb}        %

\usepackage{booktabs}
\usepackage{multirow}

\usepackage{xfrac}
\usepackage{pdflscape}

\usepackage[linesnumbered, ruled, vlined]{algorithm2e}

\usepackage[most]{tcolorbox}
\usepackage{blindtext}
\usepackage{multicol}

\usepackage{adjustbox}
\usepackage{wrapfig}

\usepackage{comment}

\usepackage[subtle]{savetrees}
\usepackage{enumitem}

\newtheorem{theorem}{Theorem}[section] %
\theoremstyle{definition}              %
\newtheorem{definition}{Definition}[section] %
\theoremstyle{lemma}              %

\DeclareMathOperator*{\argmin}{arg\,min}

\newcommand{\R}{\mathbb{R}}
\newcommand{\T}{^\top}
\newcommand{\pinv}{^{+}}
\newcommand{\norm}[1]{\left\lVert #1 \right\rVert}

\title{Concept-Based Abductive and Contrastive Explanations for Behaviors of Vision Models}

\author{\name Ronaldo Canizales \email rcanizal@colostate.edu\\
      \addr Colorado State University, Fort Collins CO, USA
      \AND
      \name Divya Gopinath \email divya.gopinath@nasa.gov \\
      \addr KBR Inc., NASA Ames, Moffett Field CA, USA
      \AND
      \name Corina Păsăreanu \email pcorina@andrew.cmu.edu\\
      \addr Carnegie Mellon University, Pittsburgh PA, USA
      \AND
      \name Ravi Mangal \email ravi.mangal@colostate.edu\\
      \addr Colorado State University, Fort Collins CO, USA}

\def\month{MM}  %
\def\year{YYYY} %
\def\openreview{\url{https://openreview.net/forum?id=XXXX}} %

\begin{document}

\maketitle

\begin{abstract}

\emph{Concept-based explanations} offer a promising approach for explaining the predictions of deep neural networks in terms of high-level, human-understandable concepts. However, existing methods either do not establish a causal connection between the concepts and model predictions or are limited in expressivity and only able to infer causal explanations involving single concepts.  At the same time, the parallel line of work on \emph{formal abductive and contrastive explanations} computes the minimal set of input features causally relevant for model outcomes but only considers low-level features such as pixels. Merging these two threads, in this work, we propose the notion of \emph{concept-based abductive and contrastive explanations} that capture the minimal sets of high-level concepts causally relevant for model outcomes. We then present a family of algorithms that enumerate all minimal explanations while using \emph{concept erasure} procedures to establish causal relationships. By appropriately aggregating such explanations, we are not only able to understand model predictions on individual images but also on collections of images where the model exhibits a user-specified, common \emph{behavior}. We evaluate our approach on multiple models, datasets, and behaviors, and demonstrate its effectiveness in computing helpful, user-friendly explanations.\footnote{Code is available at \url{https://github.com/CSU-TrustLab/behavior-explainer}}

\end{abstract}

\section{Introduction}
\label{sec:intro}

Concept-based explanations (ConXps) for deep neural networks used in vision tasks relate the outputs (\emph{behaviors}) of models with internal neural representations that encode human-understandable semantic concepts or symbols. ConXps are computed in two steps---first, the internal encoding of concepts are inferred, typically, as vectors in the activation space of the models; second, the impact of the concepts on the model outputs is determined. 
The type of impact analysis dictates the nature of the ConXps. A seminal approach~\cite{TCAV} estimates the sensitivity of a model's output to changes in concept activation values\footnote{\emph{Concept activation value} (or \emph{strength}) is the degree to which a concept is activated for a particular input. See Definition~\ref{def:str_vec}}. A limitation of such approaches is that they do not allow one to judge if the activation of a concept is necessary or sufficient for the model's output. Other approaches use \emph{concept erasure} methods~\cite{SPLICE,LEACE,holstege2025preserving,ravfogel2022linear} to determine the necessity of a concept for the model's output by erasing the concept from the internal activations on an input. If erasing the concept changes the output, then the concept is necessary. %
However, these methods only consider the impact of individual concepts and are not able to judge when multiple concepts are necessary for changing (or sufficient for retaining) the model output.

A parallel line of work on \emph{formal abductive and contrastive explanations}~\cite{ignatiev2020contrastive,Marques-Silva_Ignatiev_2022,bassan2026unifying} computes the minimal sets of sufficient (abductive) and necessary (contrastive) features for a model's output. Each explanation can comprise of more than one feature. These explanations are \emph{formal} since they guarantee that the inferred sets of features are indeed sufficient/necessary and minimal. Computing such explanations is typically reduced to solving constraint satisfaction problems via tools such as SAT and SMT solvers. However, these methods have been applied only in the context of low-level features such as pixels, which often lack conceptual meaning. These methods also struggle to scale to state-of-the-art models because of the costs of constraint solving. %

To address the limitations of these previous lines of work, we propose the notion of  \emph{concept-based abductive and contrastive explanations}.  Concept-based abductive explanations (ConAXps) are minimal sets of concepts such that retaining these while erasing all other concepts from the representation keeps the model behavior unchanged. Concept-based contrastive explanations (ConCXps) are minimal sets of concepts such that erasing these while retaining all other concepts changes the behavior. 

To infer internal representations of concepts, we leverage known techniques~\cite{mangal2024concept,Dreyer_Samek_2025} that use vision-language models, such as CLIP~\cite{CLIP}, as a semantic lens for understanding the representations learned by a (separate) vision model. These approaches rely on three key observations. First, extracting concept vectors from text encoders is cheap and requires no labeled data---one can simply perform a forward pass through the text encoders using sentences that describe the relevant concept to yield a concept vector. Second, CLIP's representation space is shared by its text and image encoders such that embeddings of images and respective text captions are aligned.  Third, the internal representation spaces of vision models and CLIP can be accurately mapped to each other.

Given concept vectors, we present three different algorithms to compute ConXps: (1) a naive algorithm (NaiveEnum) that exhaustively enumerates every possible set of concepts, starting from sets with single concepts, and checks if the set is a ConAXp or ConCXp; (2) an algorithm (XpEnum) adapted from the formal explanations literature~\cite{ignatiev2020contrastive} that uses the notion of hitting sets and exploits a known duality between abductive and contrastive explanations; and (3) a novel algorithm (XpSatEnum) that is explicitly geared towards finding explanations that are common across a given set of images. All three algorithms are parametric with respect to the concept erasure procedure. In this work, we evaluate three different concept erasure methods. The methods differ in how they erase the concept from the representation. Two of the methods have been proposed in prior work~\cite{SPLICE,LEACE}. The new erasure method we propose is based on the intuition that after erasure, the representation should be orthogonal to all the concept vectors that represent the erased concepts while being as similar as possible to the original representation.

We evaluate our approach on three different vision classifiers, namely, zero-shot CLIP, ResNet18, and VGG19 and two datasets (RIVAL-10~\cite{rival10} and EuroSAT~\cite{eurosat}). To obtain explanations that are {\em general}, i.e., they explain model outputs on sets of images where the model exhibits a common behavior, %
we aggregate the ConXps computed per image to construct a histogram of ConXps as the explanation for model behavior on a set of input images. Figure~\ref{fig:interaction} gives an example of the types of explanations we are able to infer.  Our experiments reveal that the ConXps computed using our methods are sparse---a few short explanations are able to explain model output for a large majority of the images in the considered set. Our explanations are generalizable in the sense that they are applicable even for unseen images where the model exhibits the same behavior. Moreover, the concept-based nature of the explanations makes it easy to judge if the model relies on relevant and irrelevant concepts.

\begin{figure*}[t] \centering
    \includegraphics[width=\textwidth,trim={5mm 88mm 38mm 55mm},clip]{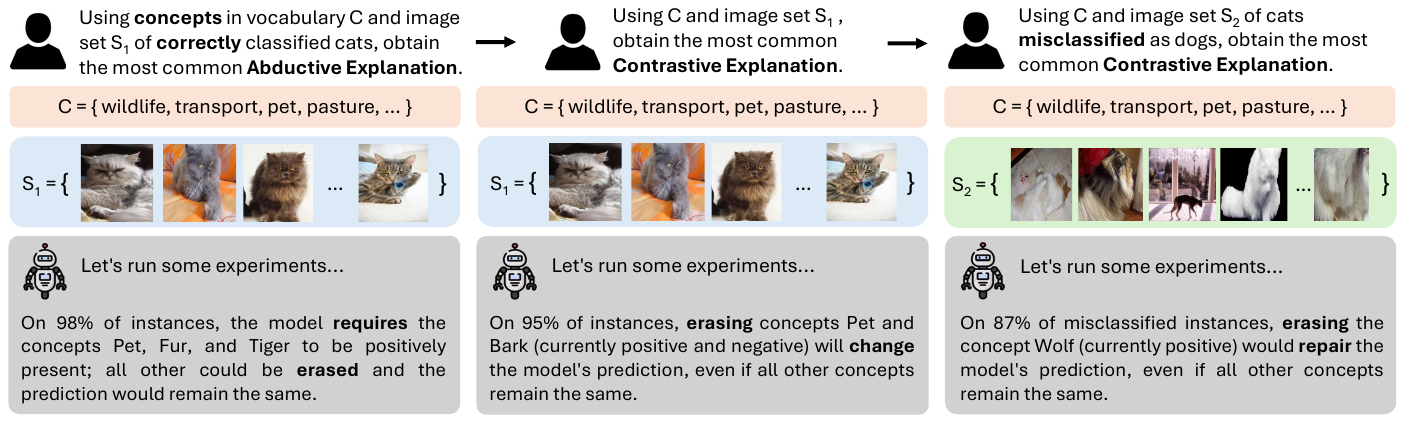}
    \caption{%
    Examples of concept-based explanations of model behaviors. Concept polarity is determined by the sign of the concept activation values.
    \textbf{Left:} Abductive explanations identify the minimal required concepts for correct classification in set $S_1$. \textbf{Middle:} Contrastive explanations highlight concepts whose erasure triggers a prediction change. \textbf{Right:} Analysis on misclassified set $S_2$ identifies specific concept-based repairs to rectify the model's decision.}
    \label{fig:interaction}
\end{figure*}

\section{Preliminaries}
\label{sec:prelim}

The paper assumes a deep neural network $M^{vis}$ trained to solve a visual classification task. %
The techniques we propose, however, are generic and applicable to models designed for other tasks.
Model inputs are images from a set $Img$. The model output is a prediction $k\in\mathcal{K}$ where $\mathcal{K}=\{k_1,k_2,\dots,k_m\}$ is a set of classes. Model $M^{vis}$ is a function of type $Img\rightarrow\mathcal{K}$. Throughout the paper, we will assume that indices start from 1.

\subsection{Formal Explanations}
For the purpose of this subsection, we assume that the input of the model consists of a set of features $\mathcal{F}=\{f_1,f_2,\dots,f_n\}$, each feature $f_i$ taking values from a domain $D_i$. Thus, our feature space is defined by $\mathbb{F}=\prod_i D_i$.  Model $M$ is a function of type $\mathbb{F}\rightarrow\mathcal{K}$. An instance $(v,k)$ is an assignment of values to features, $v=(v_1,v_2,\dots,v_n)\in\mathbb{F}$, that leads to a prediction $k=M(v),k\in\mathcal{K}$. %

\begin{definition}[Weak Abductive Explanation] Given an instance $(v,k)$ s.t. $M(v)=k$, a weak abductive explanation (WeakAXp) is a subset of features $\mathcal{X}\subseteq\mathcal{F}$ which, if unchanged, is sufficient to ensure prediction $k$, irrespective of the value taken by the features not in set $\mathcal{X}$.
We can think of WeakAXps as answers to the  question: \textit{Why is the prediction $k$?} \cite{ignatiev2020contrastive}.
$$\texttt{WeakAXp}(\mathcal{X},M,v,k):=\forall(x\in\mathbb{F}).\bigwedge_{j\in\mathcal{X}}(x_j=v_j)\rightarrow(M(x)=k)$$
\end{definition}

\begin{definition}[Abductive Explanation] Given an instance $(v,k)$ s.t. $M(v)=k$, an abductive explanation (AXp) is a subset-minimal set of features $\mathcal{X}\subseteq\mathcal{F}$ which, if unchanged, is sufficient to ensure prediction $k$. In other words, an AXp is a subset-minimal WeakAXp, meaning that it doesn't contain a smaller subset that can itself be a WeakAXp.
$$\texttt{AXp}(\mathcal{X},M,v,k):=\texttt{WeakAXp}(\mathcal{X},M,v,k)\wedge(\forall(\mathcal{X'}\subsetneq\mathcal{X}).\neg\texttt{WeakAXp}(\mathcal{X'},M,v,k))$$
\end{definition}

Intuitively, we can think of an AXp as a WeakAXp that can not be further shrunk; no element can be further subtracted from it without failing to ensure that the original prediction holds.

\begin{definition}[Weak Contrastive Explanation] Given an instance $(v,k)$ s.t. $M(v)=k$, a weak contrastive explanation (WeakCXp) is a subset of features $\mathcal{X}\subseteq\mathcal{F}$ which, if changed, is sufficient to change the prediction $k$. We can think of WeakCXps as answers to the question: \textit{`How can the prediction be changed to $\neg k$?'} \cite{ignatiev2020contrastive}.
$$\texttt{WeakCXp}(\mathcal{X},M,v,k):=\exists(x\in\mathbb{F}).\bigwedge_{j\notin\mathcal{X}}(x_j=v_j)\;\wedge\;(M(x)\neq k)$$
\end{definition}

\begin{definition}[Contrastive Explanation] Given an instance $(v,k)$ s.t. $M(v)=k$, a contrastive explanation (CXp) is a subset-minimal set of features $\mathcal{X}\subseteq\mathcal{F}$ which, if changed, is sufficient to change the prediction $k$. Analogous to AXps, a CXp is a subset-minimal WeakCXp, meaning that it doesn't contain a smaller subset that can itself be a WeakCXp.
$$\texttt{CXp}(\mathcal{X},M,v,k):=\texttt{WeakCXp}(\mathcal{X},M,v,k)\wedge(\forall(\mathcal{X'}\subsetneq\mathcal{X}).\neg\texttt{WeakCXp}(\mathcal{X'}))$$
\end{definition}

In the same sense, we can think of a CXp as a WeakCXp that can not be further shrunk; no element in the CXp can be allowed to retain its original value without failing to change the original prediction.

\subsection{Internal Representations of Concepts}

Our goal is to reason about model behaviors via explanations containing sets of human-understandable concepts. For this, we leverage the \textit{linear representation hypothesis}~\cite{TCAV,park2024linear,elhage2022superposition}, i.e., the observation that DNNs learn to represent semantic concepts as directions in their representation spaces
These concept representations are referred to as \emph{concept vectors}. Formally, given a model $M^{vis}$, the hypothesis states that the model can be decomposed as $M^{vis} = M^{head}\circ M^{enc}$ where $M^{enc}:Img\rightarrow \mathbb{Z}$, $M^{head}:\mathbb{Z}\rightarrow\mathcal{K}$, and $\mathbb{Z}$ is the representation (or embedding) space. The encoder translates low-level features (e.g., pixels in an image) into high-level representations and the head predicts an appropriate class $k\in\mathcal{K}$ based on these representations. Concept representations are vectors in the space $\mathbb{Z}$. Given any model embedding $z \in \mathbb{Z}$, we can compute the \emph{strength} of each concept in $z$. A number of methods~\cite{TCAV,mangal2024concept} have been proposed in the literature for, both, extracting concept vectors from vision models and computing the \emph{strength} of concepts in image embeddings.

\subsubsection{CLIP-based Representation Surrogates}
\label{sec:CLIP_concept_vectors}
In this work, we leverage methods that use the vision-language model CLIP as a semantic lens to understand the representations learned by vision models. These methods rely on the observation that there exist linear mappings between the embeddings of arbitrary vision models and the joint vision-language embeddings of CLIP~\cite{moayeri2023text}.  The multi-modal nature of the CLIP embedding space makes it straightforward to extract concept vectors in the CLIP space without needing concept-labeled data. The linear maps between the embedding spaces allow analyzing the semantic content of vision model embeddings by mapping them to the CLIP space.

\paragraph{Extracting Concept Vectors in CLIP Space.}
As in past work~\cite{mangal2024concept}, we use the CLIP text encoder to extract concept vectors. For a given concept, we construct a set of captions referring to that concept (see Appendix~\ref{appendix:CLIP_concepts} for an example). These captions are all passed through CLIP's text encoder and the resulting embeddings are averaged and normalized to obtain the corresponding concept vector. As observed by~\cite{liang2022mind}, there exists a modality gap between the image and text embeddings in CLIP's joint embedding space---these embeddings lie in separate cones. To address this gap, following~\cite{SPLICE}, we \emph{mean-center} the image and the concept embeddings (see Appendix~\ref{appendix:CLIP_concepts} for details).

\paragraph{Mapping from Vision Models to CLIP and Back.}
We frame the problem of learning the map $Wz + d$ from the vision model to the CLIP embedding space as the following optimization problem:
\begin{equation}
\label{eqn:opt}
    W, d := \argmin_{W, d} \frac{1}{|D_{train}|} \sum_{x \in D_{train}} \|W\cdot M^{enc}(x) + d - M^{CLIP}_{img}(x)\|_2^2.
\end{equation}
where $D_{train}$ is the set of training images and $M^{CLIP}_{img}$ is CLIP's image encoder.
The map from CLIP space to vision model space is learned similarly.
We design a sanity check to judge the quality of these maps. See Appendix~\ref{appendix:linear_map} for details.

\subsubsection{Concept Strengths and Erasure}

Intuitively, the notion of concept strength or concept activation value is the degree to which a concept is activated for a particular input. The concept strengths for all the concepts in the vocabulary together constitute a strength vector. 
\begin{definition}[Concept Strength Vector]
\label{def:str_vec}
    Given an image embedding $z\in\mathbb{Z}$, a vocabulary of concepts $\mathcal{C}=\{c_1,c_2,\dots,c_n\}$ and their corresponding concept vectors $\vec{c_1}, \vec{c_2},\dots,\vec{c_n}$, the concept strength vector, $st(z)$, is given by $\forall i \in \{1,\dots,n\}.~st(z)_i := cos(z,\vec{c_i}).$\footnote{Concept strengths can be calculated using other methods such as via projection of the embedding onto the concept vectors or through a sparse linear decomposition of the embedding in terms of the concept vectors.}
\end{definition}

Note that the \emph{strength} of concept in the strength vector can be positive, negative, or zero.

Concept erasure ensures that a concept is no longer present in the embedding of an image.
\begin{definition}[Concept Erasure]
    Given an image embedding $z\in\mathbb{Z}$, a vocabulary of concepts $\mathcal{C}=\{c_1,c_2,\dots,c_n\}$, and a concept $c_i$ to be erased, $erase(z,c_i) \in \mathbb{Z}$ is a $c_i$-erased embedding if
    $
    \forall j \in \{1.\dots,n\}.~st(erase(z,c_i))_j = 0 \text{ if } i= j \text{ else } st(z)_j.
    $
\end{definition}
We will abuse notation and write $erase(z,b)$, where $b$ is a boolean vector of length $n$, to refer to an embedding $z$ where all the concepts $c_i$ such that $b_i=0$ are erased.
Note that, in practice, erasure procedures might also modify the strength of the concepts not being erased. We discuss this further in Section~\ref{sec:erasure_algos}.

\section{Concept-based Formal Explanations}
\label{sec:theory}

We extend the existing notions of abductive and contrastive explanations to concept-based versions of such explanations. We focus on vision models that can be decomposed into an encoder and a head, i.e., $M^{vis} = M^{head}\circ M^{enc}$.  For the purpose of these explanations, the input of $M^{head}$, which is an activation vector $z \in \mathbb{Z}$, is treated as being comprised of a set of concepts $\mathcal{C}=\{c_1,c_2,\dots,c_n\}$, each concept $c_i$ being present or not. Thus, the feature space of $M^{head}$ is defined by $\mathbb{B}=\prod_i \{0,1\}$.  An instance $(v,k)$ is an input image $v$ that leads to a prediction $k=M^{vis}(v),k\in\mathcal{K}$. 

\begin{definition}[Weak Abductive Concept-based Explanation] Given an instance $(v,k)$ s.t. $M^{vis}(v)=k$, a weak concept-based abductive explanation (WeakConAXp) is a subset of concepts $\mathcal{X}\subseteq\mathcal{C}$ which, if unchanged, is sufficient to ensure prediction $k$. 
$$\texttt{WeakConAXp}(\mathcal{X},M^{vis},v,k):=\forall(b\in\mathbb{B}).\bigwedge_{j\in\mathcal{X}}(b_j=1)\rightarrow(M^{head}(erase(M^{enc}(v),b))=k)$$
\end{definition}

\begin{definition}[Concept-based Abductive Explanation] Similar to an abductive explanation (AXp), a concept-based abductive explanation is a subset-minimal set of concepts $\mathcal{X}\subseteq\mathcal{C}$ which, if unchanged, is sufficient to ensure prediction $k$. 
$$\texttt{ConAXp}(\mathcal{X},M^{vis},v,k):=\texttt{WeakConAXp}(\mathcal{X},M^{vis},v,k)\wedge(\forall(\mathcal{X'}\subsetneq\mathcal{X}).\neg\texttt{WeakConAXp}(\mathcal{X'},M^{vis},v,k))$$
\end{definition}

\begin{definition}[Weak Contrastive Explanation] Given an instance $(v,k)$ s.t. $M^{vis}(v)=k$, a weak concept-based contrastive explanation (WeakConCXp) is a subset of concepts $\mathcal{X}\subseteq\mathcal{C}$ which, if their presence or absence is changed, is sufficient to change the prediction $k$.
$$\texttt{WeakConCXp}(\mathcal{X},M^{vis},v,k):=\exists(b\in\mathbb{B}).\bigwedge_{j\notin\mathcal{X}}(b_j=1)~\wedge~(M^{head}(erase(M^{enc}(v),b))\neq k)$$
\end{definition}

\begin{definition}[Concept-based Contrastive Explanation] Similar to a contrastive explanation, a concept-based contrastive explanation (ConCXp) is a subset-minimal set of concepts $\mathcal{X}\subseteq\mathcal{C}$ which, if changed, is sufficient to change the prediction $k$.
$$\texttt{ConCXp}(\mathcal{X},M^{vis},v,k):=\texttt{WeakConCXp}(\mathcal{X},M^{vis},v,k)\wedge(\forall(\mathcal{X'}\subsetneq\mathcal{X}).\neg\texttt{WeakConCXp}(\mathcal{X'}))$$
\end{definition}

The checks for \texttt{WeakConAXp} and \texttt{WeakConCXp} are exponential in the size of the concept vocabulary. For instance, ensuring \texttt{WeakConAXp} requires us to check for all possible combinations of boolean values for concepts not in set $\mathcal{X}$. We show below that, under a monotonicity assumption about the model head, these checks can be reduced to a single forward pass through the model. We provide empirical evidence that supports this assumption in Appendix~\ref{appendix:mono_empiric}.

\begin{definition}[Monotonicity of Head]
Given a vision model $M^{vis}$, its head is monotonic with respect to concepts from a vocabulary $\mathcal{C}=\{c_1,c_2,\dots,c_n\}$ %
if adding a concept to a representation does not change the prediction of $M^{head}$ away from the original prediction $k=M^{vis}(v)$. Formally, for $k=M^{vis}(v)$,
$$\forall v\in Img,\forall b \in \mathbb{B},\forall j \in\{1,\dots,n\}.~M^{head}(erase(M^{enc}(v),b))=k\rightarrow M^{head}(erase(M^{enc}(v),b[j\mapsto 1]))=k$$
where $b[j\mapsto 1]$ denotes the version of b where the $j^{th}$ index is 1 and the rest are unchanged. Intuitively, this indicates that concept $c_j$ should no longer be erased.
\end{definition}

\begin{theorem}[Concept-based Explanations Under Monotonicity] 
\label{thm:mono}
Given a vision model $M^{vis}$, if we assume that $M^{head}$ is monotonic, then the definitions of weak concept-based abductive and contrastive explanations can be simplified as follows:
$$\texttt{WeakConAXp}(\mathcal{X},M^{vis},v,k):=\forall(b\in\mathbb{B}).\bigwedge_{j\in\mathcal{X}}(b_j=1)\bigwedge_{j\not\in\mathcal{X}}(b_j=0)\rightarrow(M^{head}(erase(M^{enc}(v),b))=k)$$
$$\texttt{WeakConCXp}(\mathcal{X},M^{vis},v,k):=\exists(b\in\mathbb{B}).\bigwedge_{j\notin\mathcal{X}}(b_j=1)\bigwedge_{j\in\mathcal{X}}(b_j=0)\;\wedge\;(M^{head}(erase(M^{enc}(v),b))\neq k)$$
\end{theorem}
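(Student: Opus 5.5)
We prove each of the two simplified characterizations by showing it is equivalent to the original definition; each equivalence has a trivial direction and a direction that uses monotonicity. For a fixed set $\mathcal{X}\subseteq\mathcal{C}$, write $b^{\mathcal{X}}\in\mathbb{B}$ for the unique boolean vector with $b^{\mathcal{X}}_j=1$ iff $j\in\mathcal{X}$. With this notation the simplified abductive condition says that $M^{head}(erase(M^{enc}(v),b^{\mathcal{X}}))=k$. The simplified contrastive condition says that $M^{head}(erase(M^{enc}(v),b^{\mathcal{C}\setminus\mathcal{X}}))\neq k$. Both are a single forward pass.

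\emph{Abductive case.}
\begin{itemize}
\item The original definition quantifies over every $b$ with $b_j=1$ for $j\in\mathcal{X}$. Since $b^{\mathcal{X}}$ is one such vector, the original definition implies the simplified one.
\item For the converse, assume $M^{head}(erase(M^{enc}(v),b^{\mathcal{X}}))=k$. Take any $b$ with $b_j=1$ on $\mathcal{X}$. Then $b$ is obtained from $b^{\mathcal{X}}$ by switching on the finitely many indices $j_1,\dots,j_r\notin\mathcal{X}$ where $b_{j_i}=1$.
\item We argue by induction on $r$. At each step, apply the monotonicity definition with the current vector $b'$ and index $j_i$. This gives: if $M^{head}(erase(M^{enc}(v),b'))=k$, then $M^{head}(erase(M^{enc}(v),b'[j_i\mapsto 1]))=k$.
\item After $r$ steps we reach $b$ with prediction still $k$, which is the original \texttt{WeakConAXp} condition.
\end{itemize}

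\emph{Contrastive case.}
\begin{itemize}
\item The vector $b^{\mathcal{C}\setminus\mathcal{X}}$ has $b_j=1$ for all $j\notin\mathcal{X}$. So if the prediction differs from $k$ at this vector, it witnesses the existential in the original definition. Hence the simplified condition implies the original one.
\item For the converse, suppose some $b$ with $b_j=1$ off $\mathcal{X}$ satisfies $M^{head}(erase(M^{enc}(v),b))\neq k$. Note that $b$ is obtained from $b^{\mathcal{C}\setminus\mathcal{X}}$ by switching on some indices inside $\mathcal{X}$.
\item Argue by contradiction: suppose instead $M^{head}(erase(M^{enc}(v),b^{\mathcal{C}\setminus\mathcal{X}}))=k$. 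Then the same chain-of-monotonicity argument as in the abductive case gives prediction $k$ at $b$, a contradiction.
\item Equivalently, we can use the contrapositive of the abductive argument: the contrastive statement is the negation of the abductive one with $\mathcal{X}$ replaced by $\mathcal{C}\setminus\mathcal{X}$.
\end{itemize}

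The only step needing care is the chaining of monotonicity along a path in the boolean lattice. The monotonicity definition is a single-index, single-step statement. It is quantified over all $b$ but is tied to the fixed prediction $k=M^{vis}(v)$, which matches the $k$ in the explanation definitions. So iterated application is legitimate, and a short induction on the number of flipped bits, i.e.\ the Hamming distance between $b^{\mathcal{X}}$ and $b$, makes this precise. I expect no real obstacle beyond stating this induction cleanly, and observing that the definitions coincide with the original ones as propositions, so the ``simplification'' is a logical equivalence under the monotonicity hypothesis.
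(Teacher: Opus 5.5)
Your proposal is correct and follows essentially the same route as the paper's proof. In both, one direction of each equivalence is immediate (the canonical vector is a special case or a witness), the abductive converse chains monotonicity upward from $b^{\mathcal{X}}$, and your contradiction argument for the contrastive converse is just the paper's repeated use of the contrapositive of monotonicity, walking from the witness $b$ down to $b^{\mathcal{C}\setminus\mathcal{X}}$. Your extra remark that $\texttt{WeakConCXp}(\mathcal{X})$ is exactly the negation of $\texttt{WeakConAXp}(\mathcal{C}\setminus\mathcal{X})$, in both the original and simplified forms, is a neat shortcut that turns the contrastive case into a corollary of the abductive one, but it does not change the underlying argument.
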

\begin{proof} See Appendix~\ref{appendix:mono_proof}.
\end{proof}

We define the notion of an \emph{explanation enumerator} that enumerates all the ConAXps and ConCXps for a model $M^{vis}$ with respect to an image $v$. In the next section, we describe three different implementations of explanation enumerators that we use in this work.

\begin{definition}[Explanation Enumerator] An explanation enumerator $\mathcal{E}$ is a function that takes in a triple of model $M^{vis}$, input image $v$, and concept vocabulary $\mathcal{C}$ and returns a pair of sets of explanations, namely, all ConAXps and ConCXps for $M^{vis}$ with respect to $v$. Note that, for a single image $v$, there need not be a unique ConAXp and ConCXp.
\end{definition}

In this work, rather than inferring concept-based explanations for individual instances $(v,k)$, our primary interest is in explaining \emph{behaviors} of the model under analysis.

\begin{figure*}[t]\centering
    \includegraphics[width=\textwidth,trim={8mm 83mm 65mm 63mm},clip]{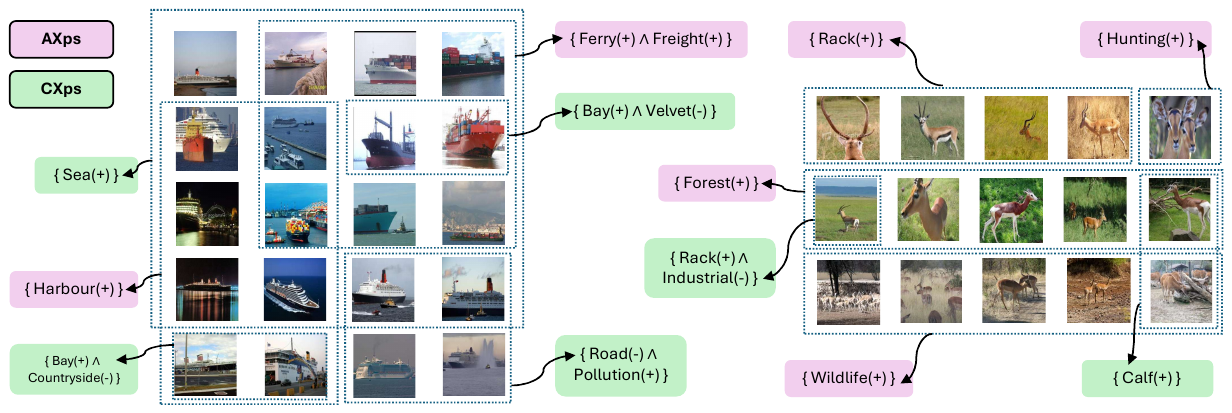}
    \caption{Concept-based abductive and contrastive explanations for correctly classified images of \textsc{Ship} (left) and \textsc{Deer} (right). Abductive explanations (ConAXps, pink) identify concepts sufficient for the model's prediction, such as \{Harbour(+)\} or \{Forest(+)\}. Contrastive explanations (ConCXps, green) identify concept whose erasure would trigger a decision change, such as \{Bay(+) $\wedge$ Countryside(-)\} or \{Rack(+) $\wedge$ Industrial(-)\}. Different subsets of images within the same behavior are captured by distinct explanations.} %
    \label{fig:qualitative}
\end{figure*}

\begin{definition}[Behavior]\label{def:behavior}
    Given a model $M^{vis}:Img\rightarrow \mathcal{K}$ and a dataset $Img$ where each instance has been labeled with one of a finite set of classes $\mathcal{K} = \{k_1, k_2, \dots, k_m\}$, a behavior $B_{M^{vis}}\subseteq Img$ is a subset of images such that $\forall v_1,v_2 \in B_{M^{vis}}.~f(v_1)=f(v_2)$ where $f$ is some function defined over the set $Img$. \end{definition}
In this work, we focus on two types of behaviors. First, \emph{misclassification behavior} where the model $M^{vis}$ misclassifies images with label $k_i$ as label $k_j$--- denoted as $B_{M^{vis}}(k_i,k_j)$ and formally defined as $f(v):=(M^*(v)=k_i) \wedge (M^{vis}(v)=k_j)$ where $M^*$ refers to the oracle labeling function. Second, \emph{correct classification behavior} where $M^{vis}$ correctly classifies images with label $k_i$---denoted as $B_{M^{vis}}(k_i)$ and defined as $f(v):=(M^*(v)=k_i) \wedge (M^{vis}(v)=k_i)$.

\begin{definition}[Signed Explanation]
\label{def:signed_exp}
Given a model $M^{vis}$, an input $v$, a concept vocabulary $\mathcal{C}$, and a concept-based explanation $\mathcal{X}\subseteq \mathcal{C}$, we define the signed version of $\mathcal{X}$ as the pair $(\mathcal{X},smap)$ where $smap:\mathcal{X}\rightarrow \{+1,-1,0\}$ is a map such that $\forall c_i \in \mathcal{X}.~smap(c_i) = sgn(st(M^{enc}(v))_i)$. Here, $st$ is the concept strength vector for the embedding of $v$ and $sgn$ is the sign function. 

Intuitively, the signed explanation is a set of concepts along with an indication of whether those concepts are positively or negatively present in the embedding.  We will abuse notation and use $sgn(\mathcal{X},M^{vis},v)$ to refer to the $smap$ of an unsigned explanation $\mathcal{X}$
\end{definition}

Figure~\ref{fig:qualitative} shows examples of signed concept-based explanations. To explain a model behavior, we aggregate the set of signed explanations for each image in the behavior.
\begin{definition}[Concept-based Explanations of a  Model Behavior] 
\label{def:global_exp}
Given model $M^{vis}:Img\rightarrow \mathcal{K}$, %
behavior $B_{M^{vis}}$, concept vocabulary $\mathcal{C}$, and explanation enumerator $\mathcal{E}$, the concept-based explanation of $B_{M^{vis}}$ is
a pair $(h_A, h_C)$ such that
$$h_A((\mathcal{X},smap)) = |\{v \in B_{M^{vis}} \mid \mathcal{X} \in \pi_1(\mathcal{E}(M^{vis}, v, \mathcal{C}))~\wedge~smap=sgn(\mathcal{X},M^{vis},v)\}|$$
$$h_C((\mathcal{X},smap)) = |\{v \in B_{M^{vis}} \mid \mathcal{X} \in \pi_2(\mathcal{E}(M^{vis}, v, \mathcal{C})) ~\wedge~smap=sgn(\mathcal{X},M^{vis},v)\}|$$
where $\pi_1$ and $\pi_2$ denote projections onto the first and second components of the pair returned by $\mathcal{E}$, i.e., the sets of ConAXps and ConCXps respectively. 
The map $h_A$ is a histogram over signed ConAXps and $h_C$ is a histogram over signed ConCXps: they aggregate the per-image explanations across the entire behavior, recording how frequently each explanation occurs while taking the signs into account..
\end{definition}

\section{Algorithms}
\label{sec:methods}
Figure~\ref{fig:pipeline} gives an overview of our approach for computing ConXps. The approach relies on erasure and explanation enumeration algorithms which we describe in this section.

\subsection{Erasure Algorithms}
\label{sec:erasure_algos}
Various methods have been developed to erase concepts from model representations. All these methods attempt to strike a balance between ensuring successful erasure and minimizing the modifications to the original embedding. In this paper, we use and compare three different methods for concept erasure, namely, orthogonal erasure (Ortho), SPLiCE-style erasure (SPLiCE)~\cite{SPLICE}, and LEACE~\cite{LEACE}.

\textbf{Ortho.} We propose the Ortho erasure procedure guided by the intuition that the erased embedding should be orthogonal to all the concept vectors that represent the erased concepts while maintaining the scalar projections of the original embedding on the unerased concepts. This can be expressed as a constrained optimization with the following closed-form solution (see Appendix~\ref{appendix:ortho} for a detailed derivation),
$$
r = z - C\,G\pinv (C\T z - t),
$$
where $r \in \mathbb{Z}=\R^d$ is the erased embedding, $z \in \mathbb{Z}=\R^d$ is the original embedding, $c_1,\dots,c_n \in \R^d$ are \emph{concept vectors} and $C=\; [\vec{c_1}\;\;\vec{c_2}\;\;\cdots\;\;\vec{c_n}] \in \R^{d\times n}$ is the matrix obtained by stacking them, $t\in\R^n$ is the target score vector where $t_j=0$ if concept $c_j$ is to be erased otherwise $t_j=\vec{c_j} \cdot z$, $G=C\T C$, and $G\pinv$ is the Moore-Penrose pseudoinverse of $G$.

\textbf{SPLiCE~\cite{SPLICE}.} This method first computes a sparse, nonnegative concept decomposition of the embedding $z \in \mathbb{Z}=\R^d$ by solving the following optimization problem,
$$
w := \min_{w\in \R^n} \|w\|_0~~~~~\text{s.t.}~~~~\sigma(z)\cdot \sigma(w\T C) > 1 - \epsilon
$$
where, as before, $C\in\R^{d\times n}$ is the stack of concept vectors and $\sigma(x)=x/\|x\|_2$. Given the solution $w^*$ to the optimization problem, to erase a concept $c_j$, we first define $t\in\R^n$ as $t_i=w^*_i$ for $i\neq j$ and $t_j=0$. Then, the erased embedding $r\in\R^d$ is given by $t\T C$.

\textbf{LEACE~\cite{LEACE}.} This is a closed-form method which
provably prevents all linear classifiers from detecting a concept after its erased from an embedding while changing the embedding as little as possible. The idea is to find an affine transformation to apply to embeddings $z$ such that, after application, there is zero covariance, as measured using the training data, between the erased embeddings and the binary random variable $\mathbf{c}_i$ representing the human-assigned present/absent labels for concept $c_i$ while ensuring that the original embedding is minimally modified as measured using a broad class of norms. This can be posed as the following constrained optimization problem, which \citet{LEACE} show has a data-specific closed-form solution.
$$
P^* := \argmin_{P\in \R^d\times\R^d} \|Pz -z\|^2_M~~~~~\text{s.t.}~~~~P\cdot Cov(\mathbf{z},\mathbf{c}_i) = 0
$$
where $Pz+b$ is the affine transformation applied to the original embedding $z$, $Cov(\cdot,\cdot)$ is the covariance matrix, and $\mathbf{z}$ is the random variable representing original embeddings.

\begin{figure*}[t] %
    \includegraphics[width=\textwidth,trim={5mm 78mm 32mm 5cm},clip]{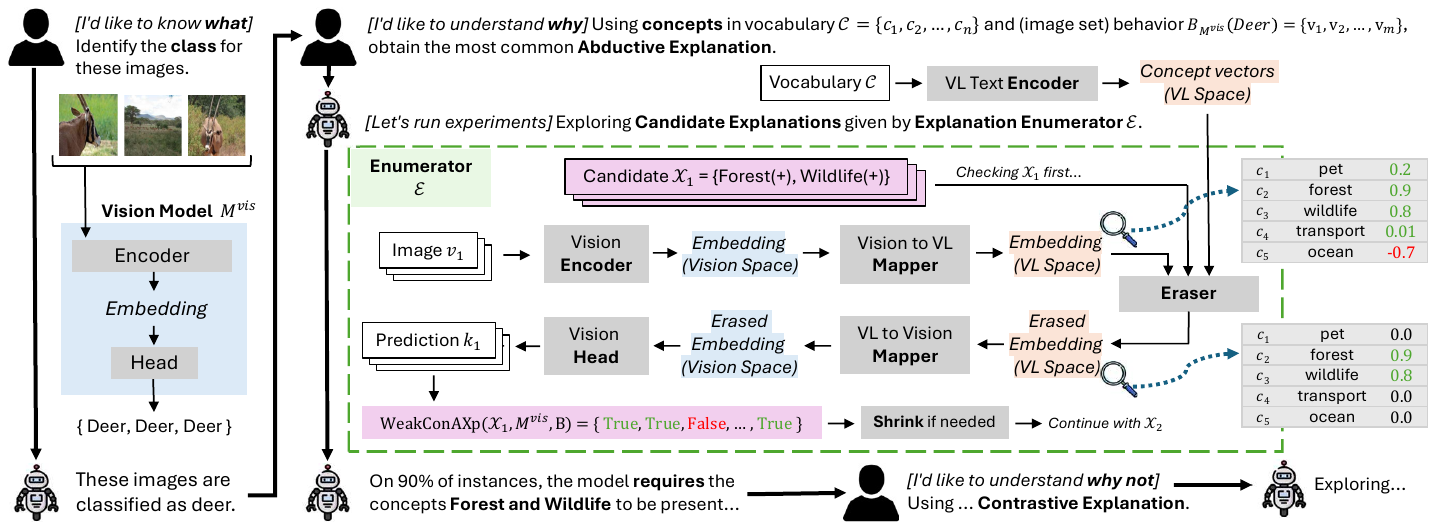}
    \caption{Overview of pipeline for finding concept-based explanations. The process begins with a vision model $M^{vis}$ exhibiting a behavior of interest, e.g. $B_{M^{vis}}$(Deer), correctly classified \textsc{Deer} images. 
    To find abductive explanations comprising of concepts from vocabulary $\mathcal{C}$, an enumerator $\mathcal{E}$ %
    explores a search space of size $2^{|\mathcal{C}|}$. 
    For each candidate explanation $\mathcal{X}_i$, all images $v_i\in B$ are encoded and mapped into the embedding space of a Vision-Language (VL) model where vectors for each concept $c_i\in\mathcal{C}$ have previously been found. 
    Then, an erasure algorithm is used to find a minimal modification of the embedding such that a subset of concepts, based on the current candidate explanation, are the only ones present (strength of all other concepts is reduced to zero). This modified embedding is mapped back to the vision space and passed through the vision model's head. A prediction change indicates that a weak abductive explanation has been found, which is shrunk if needed. The process for contrastive explanations is similar.}
    \label{fig:pipeline}
\end{figure*}

\subsection{Explanation Enumeration Algorithms}
\label{sec:explain_algos}

\begin{figure*}[t]
\begin{minipage}[t]{0.48\textwidth}
\begin{algorithm}[H]
   \caption{Depth-Bounded Naive Enum.}
   \label{alg:naive}
   \KwIn{Model $M^{vis}$, Image $v$, Vocabulary $\mathcal{C}$, Max.\ depth $K$}
   \KwOut{Set of explanations $Xps$}
   $Xps \gets \emptyset$;~ $cnds \gets \{\emptyset\}$\;
   \If{\texttt{WeakConXp}$(\emptyset,M^{vis},v)$}{
      \Return{$\{\emptyset\}$}\;
   }
   \For{$k = 1$ \KwTo $K$}{
      $cnds' \gets \emptyset$\;
      \ForEach{$cnd \in cnds$}{ %
          \ForEach{$c \in \mathcal{C}$}{
            $cnd' \gets cnd \cup \{c\}$\;
            \eIf{\texttt{WeakConXp}$(cnd',M^{vis},v)$}{
                $Xps \gets Xps \cup \{cnd'\}$\;
            }{
                $cnds' \gets cnds' \cup \{cnd'\}$\;
            }
          }
          $cnds \gets cnds'$\;
      }
   }
   \Return{$Xps$}\;
\end{algorithm}
\end{minipage}
\hfill
\begin{minipage}[t]{0.48\textwidth}
\begin{algorithm}[H]
   \caption{XpSatEnum}
   \label{alg:saturation}
   \KwIn{Model $M^{vis}$, Behavior $B_{M^{vis}}$, Vocabulary $\mathcal{C}$, Initial explanations $\{Xps_v \mid v \in B_{M^{vis}}\}$}
   \KwOut{Saturated explanations $\{Xps'_v \mid v \in B_{M^{vis}}\}$}
   $Xps_{global} \gets \bigcup_{v \in B_{M^{vis}}} Xps_v$ \tcp*{Collect unique xps}
   \ForEach{$v \in B_{M^{vis}}$}{
      $Xps'_v \gets Xps_v$\;
      \ForEach{$\mathcal{X} \in Xps_{global}$}{
         \If{$\mathcal{X} \notin Xps'_v$}{
            \If{\texttt{WeakConXp}$(\mathcal{X}, M^{vis}, v)$}{
                $\mathcal{X} \gets \texttt{ShrinkXp}(\mathcal{X}, M^{vis}, v)$\;
               $Xps'_v \gets Xps'_v \cup \{\mathcal{X}\}$\;
            }
         }
      }
   }
   \Return{$\{Xps'_v \mid v \in B_{M^{vis}}\}$}\;
\end{algorithm}
\end{minipage}
\end{figure*}

Given a model $M^{vis}=M^{head}\circ M^{enc}$, an instance $(v,k)$ where $v$ is the input image and $k=M^{vis}(v)$, and a vocabulary of concepts $\mathcal{C} = \{c_1,c_2,\dots,c_n\}$, we present three different algorithms for computing the corresponding ConAXps and ConCXps.
Given an instance $(v,k)$, the ConAXps and ConCXps for $M^{vis}$ are not unique. In fact, as we will later show in our experimental results, the number of explanations can be in the range of hundreds to thousands for realistic models. The space of all possible ConAXps and ConCXps is defined by $2^\mathcal{C}$.  Each of the three presented algorithms takes a different approach to (bounded) exploration of this combinatorial space but they all assume that $M^{head}$ is monotonic which allows the use of the cheap checks defined in Theorem~\ref{thm:mono}.

\paragraph{\textbf{NaiveEnum.}} Algorithm~\ref{alg:naive} describes a naive approach that exhaustively enumerates all the elements of $2^\mathcal{C}$, starting from the empty set of concepts. While this method is guaranteed finding all ConAXps and ConCXps, it is computationally prohibitive for the large concept vocabularies with hundreds or thousands of concepts that are needed in practice. We use a depth-bounded search tree to maintain tractability. By fixing the maximum search depth at a value $K$, we limit the size of the resulting explanations to a maximum of $K$ concepts. Although this constraint excludes valid explanations whose size exceeds $K$, it is empirically justified (see Section~\ref{sec:parsimony}) by the observation that the shortest explanations tend to be most effective, i.e., are most commonly shared across images on which the model under analysis exhibits the same \emph{behavior} (for instance, all images of trucks that the model misclassifies as a car). To improve scalability, the algorithm prunes the search tree wherever possible---if a set of concepts $\mathcal{X}$ is confirmed as an explanation, then the algorithm never checks if any extension of $\mathcal{X}$ is an explanation. The same algorithm is used for enumerating ConAXps and ConCXps with the appropriate checks being used in lines 2 and 9.

\paragraph{\textbf{XpEnum.}}
We adapt the XpEnum algorithm of~\cite{ignatiev2020contrastive}, which exploits the hitting set duality between AXps and CXps to simultaneously enumerate both types of explanations. The algorithm maintains the sets of ConAXps and ConCXps found so far. At each iteration, it computes a minimal hitting set of the found ConCXps (subject to not being a superset of any found ConAXp) as a candidate. If the candidate passes the WeakConAXp check---which, under monotonicity, reduces to a single oracle call---it is shrunk to a ConAXp. Otherwise, its complement is a WeakConCXp and is shrunk to a ConCXp. The newly found explanation is added to the corresponding collection, and the process repeats until no new candidates exist. Full details are in Appendix~\ref{appendix:xpenum}.

\paragraph{\textbf{XpSatEnum.}}  Algorithm~\ref{alg:saturation} describes a complementary procedure designed to address the incompleteness inherent in the previously described budget-bounded enumeration algorithms. Given that computing all ConAXps and ConCXps for a single image is often infeasible, we prioritize the discovery of those that occur frequently across images in a behavior $B_{M^{vis}}$. The algorithm takes as input a set of images $B_{M^{vis}}$ that corresponds to a behavior, together with initial per-image (contrastive or abductive) concept-based explanations $\{Xps_v \mid v \in B_{M^{vis}}\}$ produced by any budget-bounded enumerator (e.g., NaiveEnum or XpEnum)
It collects all unique explanations $Xps_{global} = \bigcup_{v \in B_{M^{vis}}} Xps_v$ and then, for each image $v \in B_{M^{vis}}$, checks via $\texttt{WeakConXp}$ whether each explanation $\mathcal{X} \in Xps_{global}$ not already in $Xps_v$ is also valid for $v$. If so, the explanation is shrunk to a \texttt{ConXp} (details of the \texttt{ShrinkXp} procedure are in Appendix~\ref{appendix:xpenum}). This cross-image saturation maximizes the overlap of the final explanation sets across images. The same algorithm is used for saturating both ConAXps and ConCXps with the appropriate checks. This method leverages the empirical observation that a minority of explanations are usually representative of most instances of a given behavior.

\section{Experiments}
\label{sec:experiments}

In this section, we investigate the following research questions:

\begin{tcolorbox}[colback=green!5!white,colframe=green!75!black]
\noindent\textbf{RQ1.} 
(Generalizability) To what extent do explanations generalize over samples of the same behavior?

\noindent\textbf{RQ2.}
(Coverage) Can a small subset of explanations cover the majority of images in a behavior?

\noindent\textbf{RQ3.} %
(Parsimony) Do short explanations have more coverage?

\noindent\textbf{RQ4.} %
(Plausibility) To what extent do models rely on relevant versus irrelevant concepts?

\noindent\textbf{RQ5.} %
(Efficiency) How does the choice of the enumeration and erasure algorithms affect the end-to-end computation time?
\end{tcolorbox}

\subsection{Empirical Setup}
\label{sec:setup}

\textbf{Models.} We use CLIP ~\cite{CLIP} as our reference Vision-Language model. We explore the vision models ResNet18 ~\cite{ResNet}, VGG19 ~\cite{vgg}, and CLIP itself as a zero-shot classifier. 
\textbf{Datasets.} All experimental evaluations are conducted on RIVAL10 \cite{rival10}, which provides 10 classes from ImageNet about animals and objects (2000 images per category), and EuroSAT \cite{eurosat}, which consists of satellite images across 10 distinct geographical categories (2700 images per category).
\textbf{Transfer Learning.} Except for zero-shot CLIP, each model had its final linear layer fine-tuned on the respective dataset before experimentation. We utilized VGG and ResNet weights pretrained on ImageNet for the RIVAL10 experiments. For EuroSAT, the ResNet model was initialized with weights pretrained on the SSL4EO-S12 ~\cite{SSL4EO} dataset.
\textbf{Vocabularies.} 
We use concept vocabularies pruned from a task-agnostic base vocabulary as described in Appendix \ref{sec:vocabulary}.
\textbf{Behaviors.} For each model $M^{vis}\in\{M_1,\dots,M_5\}$ in Table \ref{tab:models}, we randomly choose one behavior of each type: misclassification and correct classification. 
Following Definition \ref{def:behavior}, we denote misclassification and correct classification behaviors as $B_{M^{vis}}(k_i,k_j)$ and $B_{M^{vis}}(k_i)$, respectively, where $k_i,k_j\in\mathcal{K}$ are classes defined in the RIVAL10 and EuroSAT datasets. 
For each behavior, we only consider the subset of images for which at least the trivial explanation (i.e., the one involving all concepts in the vocabulary) can be obtained, irrespective of which of the three erasure algorithms is used. 
When the number of such images in a behavior exceeds 700, we select a random subset of 700 images for computing explanations. Since the models being analyzed have high accuracies ($>85\%$), the number of images per misclassification behavior typically consist of fewer than 100 images.
Note that, for the VGG models, the subset of images with trivial explanation is often empty due to SpLiCE, so we drop SpLiCE as an erasure method for $M_4$ and $M_5$. With SpLiCE, the embedding can change even if no concepts are erased since it first sparsely decomposes the original embedding and then computes a new embedding from this sparse decomposition. For VGG, the new embeddings often cause a change in the prediction which rules out the trivial explanation.
\textbf{Hyperparameters.} 
For the erasure methods, we use the following hyperparameters: tolerance ($\epsilon=10^{-6}$) and $\ell_1$ regularization ($\lambda=0.01$) on SpLiCE, and training sample size of 500 for LEACE to learn the affine transformation.
For the explanation enumerators, we bound NaiveEnum's depth ($K=2$) and XpEnum's iterations to 250.  For XpSatEnum, the initial set of explanations are those computed by XpEnum. We set a 10-hour timeout for each run of an explanation enumerator.

\begin{table}[t] \centering
    \caption{Model architectures and training configurations. Summary of the five vision models evaluated, including the datasets used for pre-training and fine-tuning, and vocabulary size used to obtain explanations.}
    \label{tab:models}
    \begin{tabular}{ccccc} \hline
        \textbf{Model} & \textbf{Backbone} & \textbf{Pre-trained} & \textbf{Fine-tuned} & \textbf{Vocab. size} \\ \hline 
        $M_1$ & CLIP & LAION400M & \textit{zero-shot} & 150 \\ 
        $M_2$ & ResNet18 & ImageNet & RIVAL10 & 300 \\  
        $M_3$ & ResNet18 & SSL4EO-RGB & EuroSAT & 300 \\ 
        $M_4$ & VGG19 & ImageNet & RIVAL10 & 200 \\ 
        $M_5$ & VGG19 & ImageNet & EuroSAT & 200 \\ \hline
    \end{tabular}
\end{table}

\subsection{Empirical Results}
\label{sec:quantitative}

Our empirical evaluation consists of both quantitative and qualitative assessments. For each triple of (model, dataset, behavior), we compute explanations using all nine combinations of our explanation enumeration and erasure algorithms. 
We report the quantitative (Section~\ref{sec:quantitative}) and qualitative results (Section~\ref{sec:qualitative}) on three models ($M_2$, $M_3$, and $M_4$) and a total of four behaviors in the main body of the paper. 
Additional empirical results are available in Appendix~\ref{appendix:extended}. 

\begin{table}[t] \centering
    \caption{Average quantity of ConXps per image and its standard deviation for the model behaviors we analyze.}
    \label{tab:total_xps}
    \begin{adjustbox}{max width=\textwidth}
    \begin{tabular}{cc ccc ccc ccc} \toprule
        \multicolumn{2}{c}{} & \multicolumn{3}{c}{\textbf{Ortho}} & \multicolumn{3}{c}{\textbf{SpLiCE}} & \multicolumn{3}{c}{\textbf{LEACE}} \\
        \cmidrule(lr){3-5} \cmidrule(lr){6-8} \cmidrule(lr){9-11}
        \textbf{Behavior} &  & 
        \textbf{XpEnum} & \textbf{XpSatEn.} & \textbf{NaiveEn.} & 
        \textbf{XpEnum} & \textbf{XpSatEn.} & \textbf{NaiveEn.} & 
        \textbf{XpEnum} & \textbf{XpSatEn.} & \textbf{NaiveEn.} \\
        \midrule
        \multirow{2}{*}{$B_{M_2}$(Car)}
        & ConAXps & 42.5±14.1 & 547.7±36.8 & 216.2$\pm$27.2 & 31.6±1.8 & 91.6±2.7 & 0.0±2.8 & - & - & - \\
        \cmidrule(lr){2-11}
        & ConCXps & 8.0±0.1 & 1601.7±4.2 & 3.5±2.7 & 16.4±7.3 & 56.2±9.1 & 10.4±35.5 & 3.0±0.9 & 167.3±7.1 & 258.6±50.1 \\
        \midrule
        \multirow{2}{*}{$B_{M_2}$(Truck,$\neg$Truck)}
        & ConAXps & 28.6±0.9 & 87.7±2.6 & 614.4±2.0 & 37.0±0.2 & 45.2±0.3 & - & - & - & - \\
        \cmidrule(lr){2-11}
        & ConCXps & 7.1±0.2 & 45.8±1.4 & 400.9±1.0 & 17.8±1.1 & 25.9±1.4 & 43.1±2.3 & 10.0±0.8 & 52.2±1.4 & 420.9±2.2 \\
        \midrule
        \multirow{2}{*}{$B_{M_4}$(Cat)}
        & ConAXps & 62.6±2.2 & 713.6±6.6 & 167.0±9.6 & - & - & - & - & - & - \\
        \cmidrule(lr){2-11}
        & ConCXps & 7.1±0.1 & 336.8±2.4 & 48.1±3.6 & - & - & - & 10.0±5.0 & 80.9±11.2 & 792.9±17.4 \\
        \midrule
        \multirow{2}{*}{$B_{M_5}$(Lake,Pasture)}
        & ConAXps & 194.9±1.5 & 10233.8±17.9 & 3.8±21.0 & - & - & - & - & - & - \\
        \cmidrule(lr){2-11}
        & ConCXps & 51.6±2.1 & 665.0±13.9 & 464.7±18.8 & - & - & - & 10.0±19.2 & 36.6±34.7 & 1294.4±38.8 \\
        \bottomrule
    \end{tabular}
    \end{adjustbox}
\end{table}

Table \ref{tab:total_xps} shows the number of explanations computed per combination of enumeration and erasure algorithms for the models and behaviors we analyze. Cases where no explanations were found within the budget are indicated with a dash (-). Note the difficulty in finding LEACE ConAXps. LEACE requires expensive linear algebraic calculations for every combination of concepts to be erased. 
When paired with XPEnum, it succeeds in finding ConCXps but fails to find ConAXps since XPEnum tends to compute only ConCXps in initial iterations before timing out. When LEACE is paired with NaiveEnum, while the search depth limited to $K\leq 2$ is sufficient to find ConCXps it proves insufficient for ConAXps. 
We also see that the VGG19 models ($M_4$,$M_5$) have no explanations when using SpLiCE since we drop it as an erasure method for these models for reasons stated in Section~\ref{sec:setup}. In the remainder of this section, unless otherwise stated, for a (model,~dataset,~behavior) triple, we take the union of the set of abductive (or contrastive) explanations computed by three enumeration algorithms for each erasure method when reporting results to aid comprehension.

\subsubsection{RQ1: Generalizability}

\begin{figure}[t] \centering
    \captionsetup[subfigure]{oneside,margin={18mm,0cm}}
    \begin{subfigure}[t]{0.33\linewidth}
        \includegraphics[width=\linewidth]{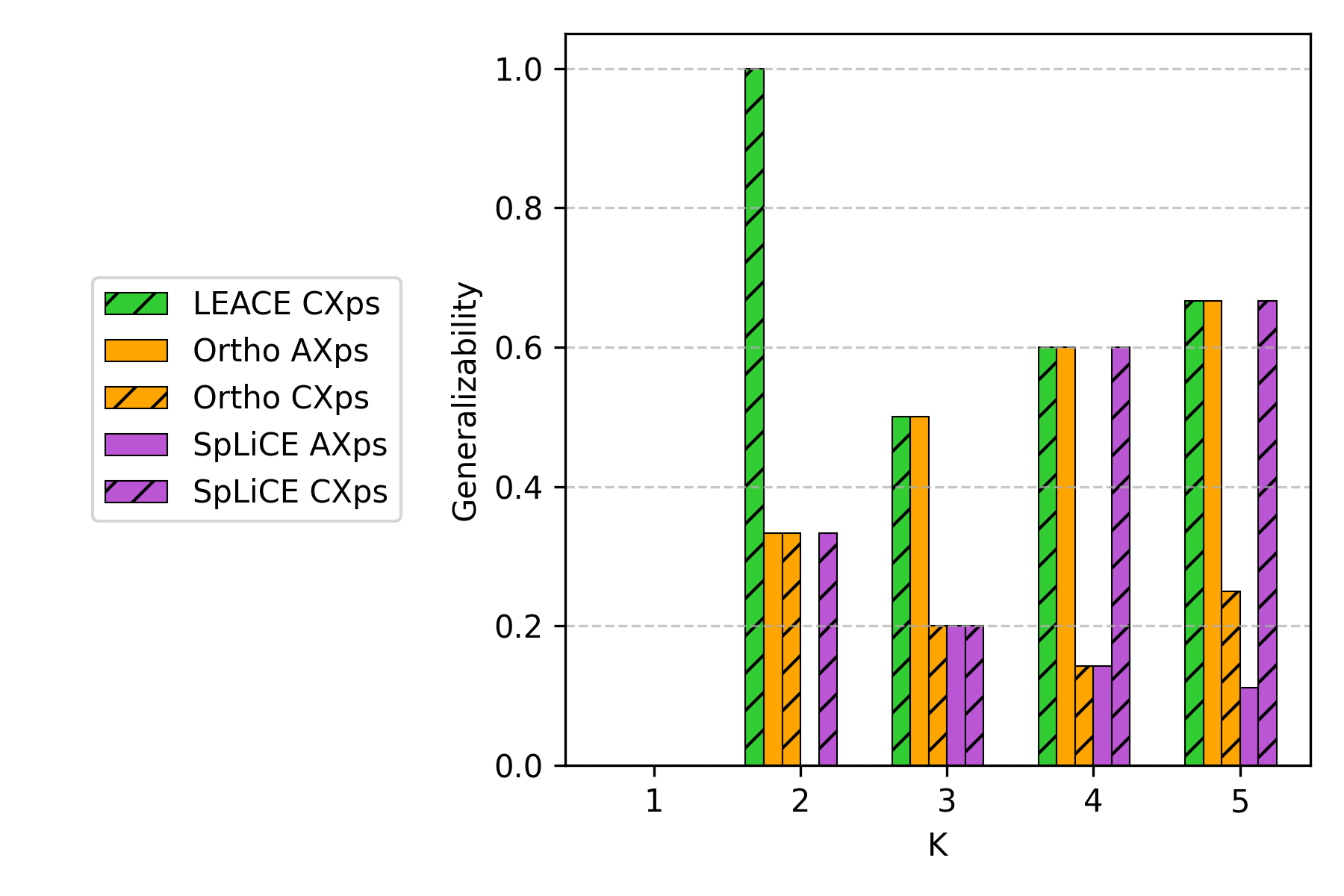}
        \caption{$B_{M_2}$(Car)}
    \end{subfigure}\hfill
    \captionsetup[subfigure]{oneside,margin={0cm,0cm}}
    \begin{subfigure}[t]{0.22\linewidth}
        \includegraphics[width=\linewidth]{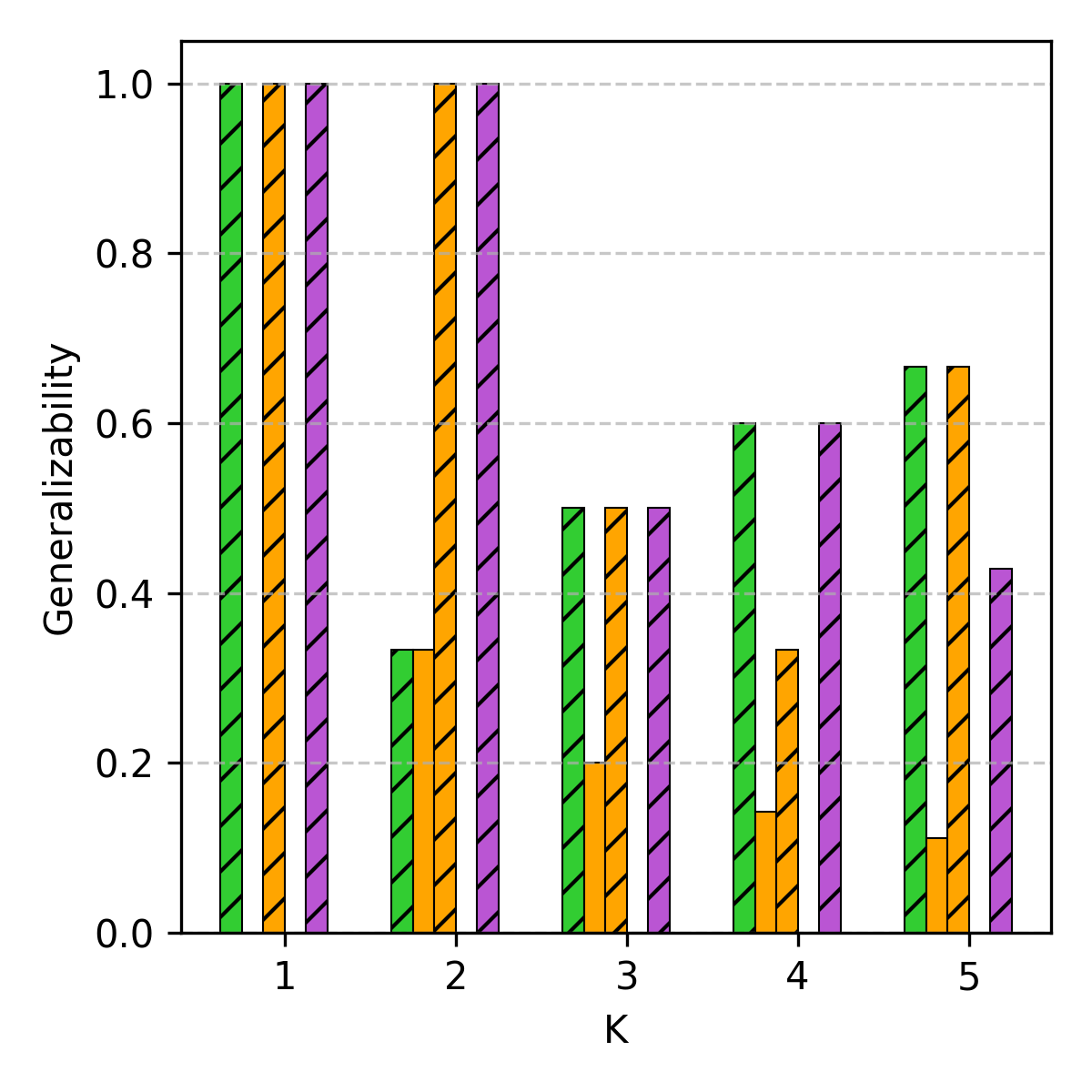}
        \caption{$B_{M_2}$(Truck,$\neg$Truck)}
    \end{subfigure}\hfill
    \begin{subfigure}[t]{0.22\linewidth}
        \includegraphics[width=\linewidth]{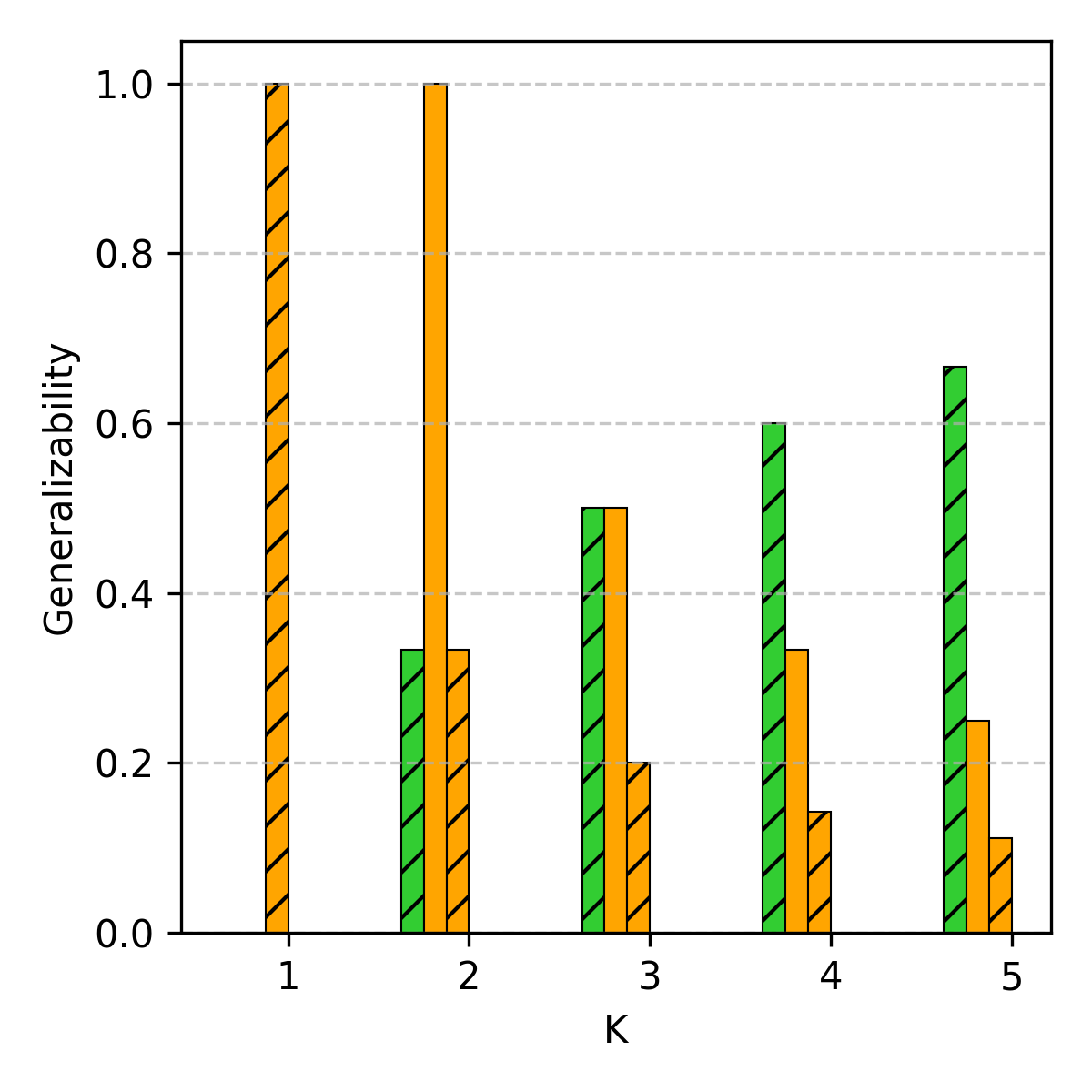}
        \caption{$B_{M_4}$(Cat)}
    \end{subfigure}\hfill
    \begin{subfigure}[t]{0.22\linewidth}
        \includegraphics[width=\linewidth]{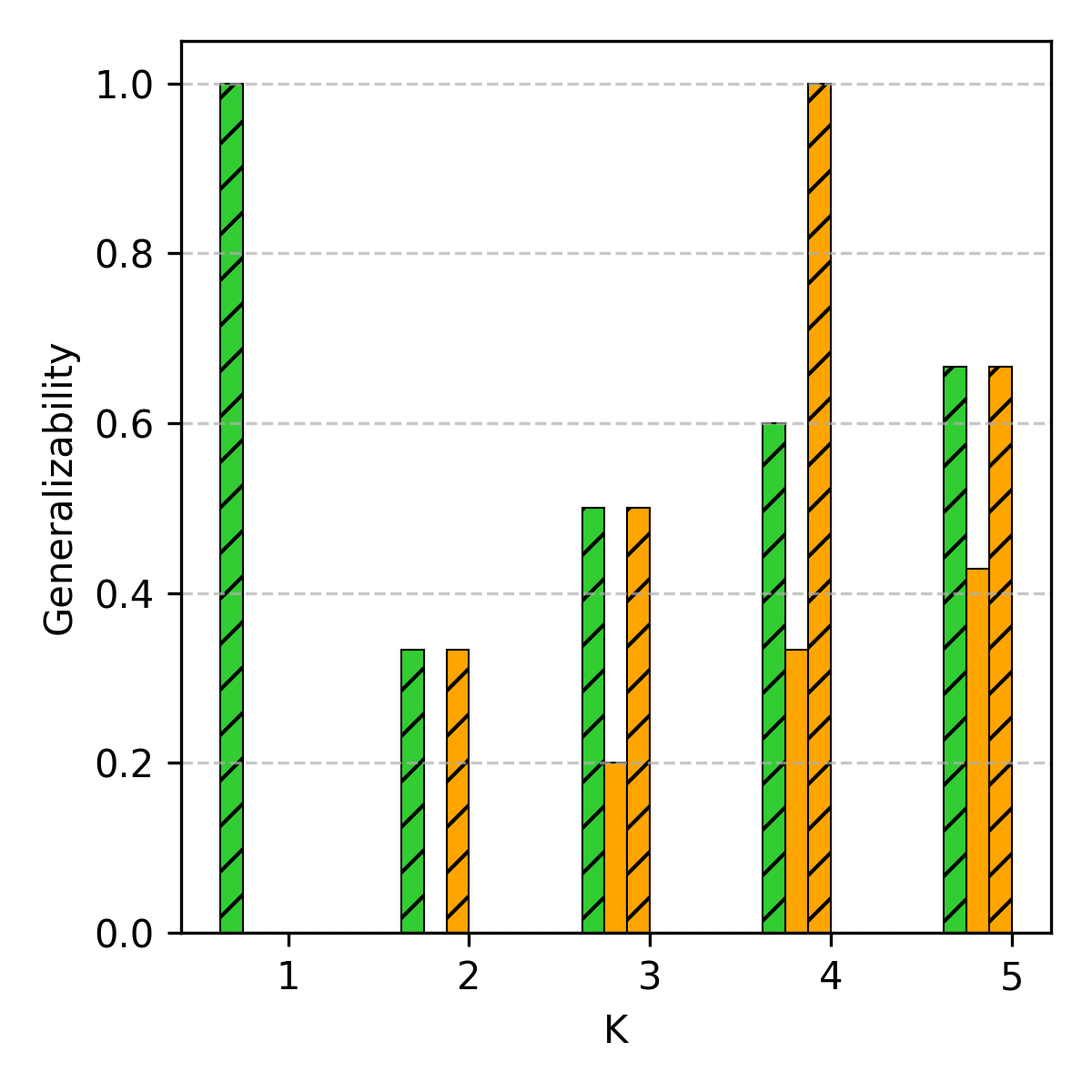}
        \caption{$B_{M_5}$(Lake,Pasture)}
    \end{subfigure}
    \caption{Generalizability@K for the model behaviors we analyze. See Section \ref{sec:setup} and Definition \ref{def:behavior} for details about these behaviors and their notation. More results in Appendix \ref{appendix:extended_rq1}.} 
    \label{fig:gen_at_k}
\end{figure}

An explanation is \emph{generalizable} if it is applicable across different samples of the same behavior. A general explanation captures the essence of why the model exhibits a specific behavior as opposed to simply explaining model output on a single image. To evaluate the generalizability of our explanations, we define the following metric.

\begin{definition}[Generalizability@K]
Given two disjoint subsets $B^{trn}_{M^{vis}}$ and $B^{tst}_{M^{vis}}$ of a behavior $B_{M^{vis}}$, and corresponding concept-based explanations of these behavior subsets, $(h^{trn}_A, h^{trn}_C)$ and $(h^{tst}_A, h^{tst}_C)$ (where $h_A$ and $h_C$ are maps from signed ConAXps and ConCXps to counts as described in Definition~\ref{def:global_exp}),  Generalizability@K is defined as:
$$\text{Gen@K}(B^{trn}_{M^{vis}},B^{tst}_{M^{vis}}):=\left(
\frac{|TopK(h^{trn}_A)\cap TopK(h^{tst}_A)|}{|TopK(h^{trn}_A)\cup TopK(h^{tst}_A)|},
\frac{|TopK(h^{trn}_C)\cap TopK(h^{tst}_C)|}{|TopK(h^{trn}_C)\cup TopK(h^{tst}_C)|}
\right)$$
where $TopK(h)$ returns the top $k$ most frequent explanations in the map $h$. 
\end{definition}

Generalizability@K is evaluated by randomly partitioning a behavior's images into two disjoint subsets to compare their most frequent explanations. We use the Intersection over Union (IoU) of the top k most frequent explanations for the two sets as a measure of generalization of explanations across samples.
An IoU of 1 indicates that the most frequent concept-based explanations are perfectly aligned between the two behavior subsets, suggesting that those explanations represent a generalized, universal explanation of that behavior. In contrast, an IoU of 0 implies that the most frequent explanations are entirely sample-dependent, indicating a failure in finding a generalized explanation for that behavior.
Given that the set of possible concept-based explanations consists of thousands of unique elements, the probability of obtaining a high IoU by chance is statistically negligible. Therefore, any significant explanation overlap between the two disjoint sets provides strong evidence for the high quality and generalizability of the explanations.

Figure \ref{fig:gen_at_k} shows Gen@K,K$\in[1,5]$ for models $M_2$ and $M_4$ on one correct classification and misclassification behavior each. Results for other models are available in Appendix~\ref{appendix:extended_rq1}. While increasing $K$ initially improves IoU by including more common explanations, we expect generalizability to decrease at much higher values (e.g., in the hundreds) as the sets begin to include low-frequency, sample-specific explanations. Such high $K$ values are impractical for human analysts, so we chose $K=5$ as a reasonable upper bound.
Similar generalizability values are observed for both correct and incorrect classifications in the behaviors we studied.

Generalizability values are highly sensitive to explanation length; longer explanations tend to report lower values because IoU treats sets that differ by even a single concept as distinct. While the choice of erasure algorithm generally has a small impact, the LEACE and NaiveEnum combination consistently provides high IoU values, see Appendix \ref{appendix:extended_rq1} for more results.

\begin{tcolorbox}[colback=gray!10!white,colframe=gray!75!black]
\textbf{Finding:} The most frequent ConXps remain stable across disjoint samples of behaviors in most cases, which is statistically improbable by chance; suggesting that 
ConXps are able to identify general, universal explanations of behaviors in vision models rather than just isolated observations. 
\end{tcolorbox}

\subsubsection{RQ2: Coverage}
\label{sec:coverage}

For the datasets and behaviors we have explored, the total number of unique explanations can reach hundreds or thousands, exceeding the cognitive capacity of a human analyst. 
Selecting a small subset of explanations that applies to the majority of images in a behavior is essential for practical utility. A small set enables an analyst to prioritize highly-relevant concept patterns rather than accounting for every case. We define the next metric to quantify this balance of coverage and subset size as follows.

\begin{definition}[Maximum Coverage@K]
Given a behavior $B_{M^{vis}}$, model $M^{vis}$, concept vocabulary $\mathcal{C}$, and an explanation enumerator $\mathcal{E}$, let $\widehat{\mathcal{X}ps}_A$ be the set of all unique signed ConAXps (i.e., $\widehat{\mathcal{X}ps}_A = \{ (\mathcal{X},smap)~|~v\in B_{M^{vis}} ~\wedge~\mathcal{X} \in\pi_1(\mathcal{E}(M^{vis}, v, \mathcal{C}))~\wedge~smap=sgn(\mathcal{X},M^{vis},v)\}$) found by the enumerator for the behavior. $\widehat{\mathcal{X}ps}_C$ is the set of all unique signed ConCXps defined analogously. We define Maximum Coverage@K as the maximum number of images that can be explained by K signed ConAXps (similarly, for ConCXps).
$$\text{MaxCov@K}(B_{M^{vis}}):= \left( 
\max_{S_A} \left| \bigcup_{\mathcal{X}_i \in {S_A}} B^{\mathcal{X}_i} \right|,
\max_{S_C} \left| \bigcup_{\mathcal{X}_i \in {S_C}} B^{\mathcal{X}_i} \right|
\right) \text{ s.t. } 
{S_A} \subseteq \widehat{\mathcal{X}ps}_A,
{S_C} \subseteq \widehat{\mathcal{X}ps}_C,
|S_A| = |S_C| = K$$
where $B^{\mathcal{X}_i} \subseteq B_{M^{vis}}$ is the set of images for which $\mathcal{X}_i$ is a signed explanation. 
\end{definition}

This is an instance of the \textit{Maximum Coverage Problem}, an NP-hard problem where given a set of subsets of a set---in our case, subsets $\{B^{\mathcal{X}_1},B^{\mathcal{X}_2},\dots B^{\mathcal{X}_n}\}$ of set $B_{M^{vis}}$---and an integer $K$, the goal is to select at most $K$ subsets that maximize the total number of unique elements covered. We use a greedy strategy that iteratively selects the next explanation $\mathcal{X}_i$ with the highest marginal gain in image coverage; algorithm detailed in Appendix \ref{appendix:max_coverage}.

As shown in Figure \ref{fig:max_coverage}, we report the Normalized Maximum Coverage@$K$, where the number of images explained by the explanations in MaxCov@K is divided by the total number of images in the behavior of interest. A value of 1 indicates the selected subset of explanations covers the entire behavior. Our experiment shows that only a small subset of explanations is required to account for the majority of images within a behavior: across all four behaviors explored, at least one erasure algorithm achieves a normalized maximum coverage value of 1 with $K \leq 5$ explanations, while runner-up candidates reliably do so with $K \leq 10$; more results at Appendix \ref{appendix:extended_rq2}. Considering that the search space contains thousands of unique explanations, the ability to achieve full coverage with 10 or fewer explanations demonstrates the practical utility of our approach. Notably, no single erasure algorithm consistently outperforms the others; instead, their performance appears to be behavior-dependent. This empirical observation points to an opportunity for future research. 

\begin{figure}[t] \centering
    \captionsetup[subfigure]{oneside,margin={18mm,0cm}}
    \begin{subfigure}[t]{0.33\linewidth}
        \includegraphics[width=\linewidth]{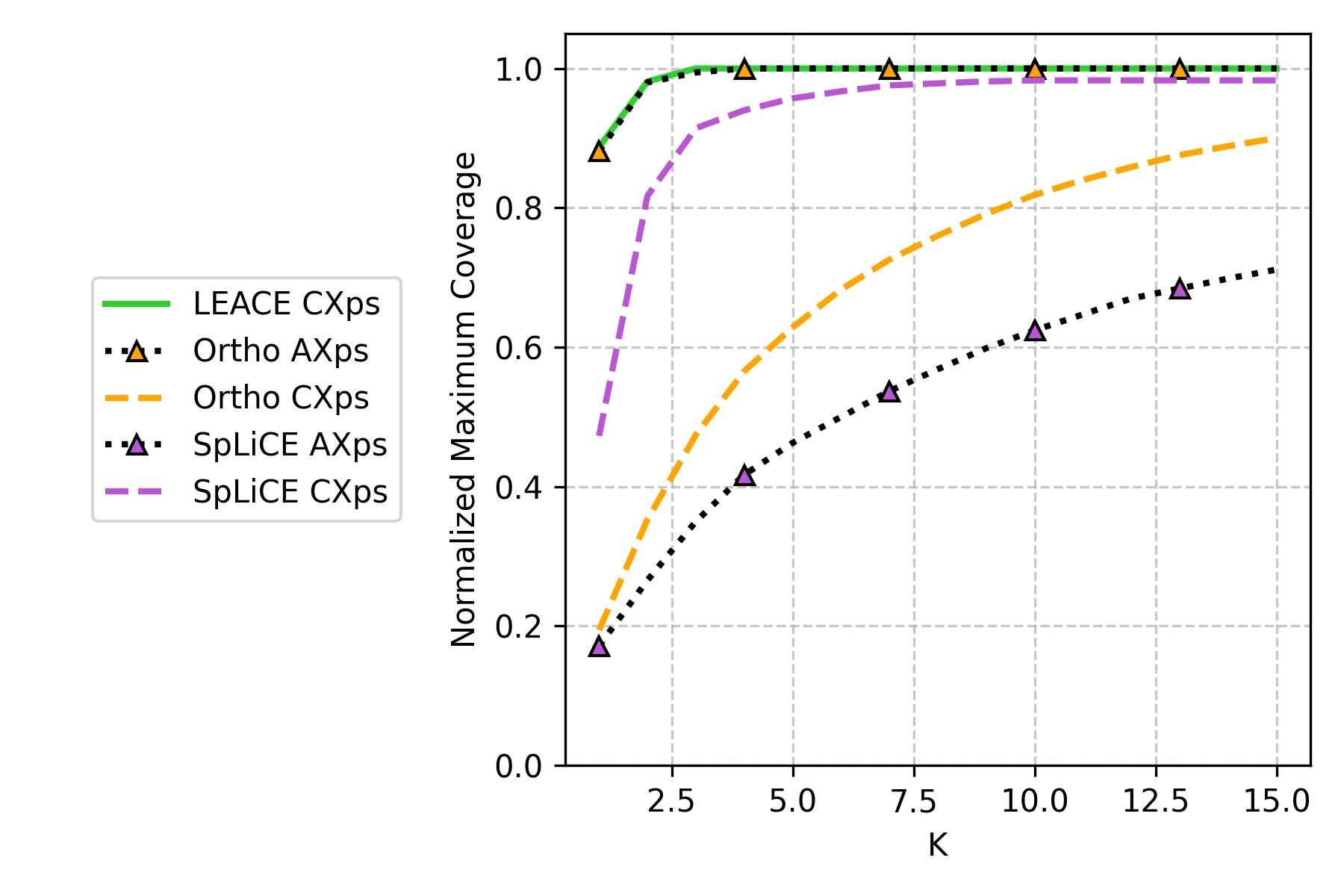}
        \caption{$B_{M_2}$(Car)}
    \end{subfigure}\hfill
    \captionsetup[subfigure]{oneside,margin={0cm,0cm}}
    \begin{subfigure}[t]{0.22\linewidth}
        \includegraphics[width=\linewidth]{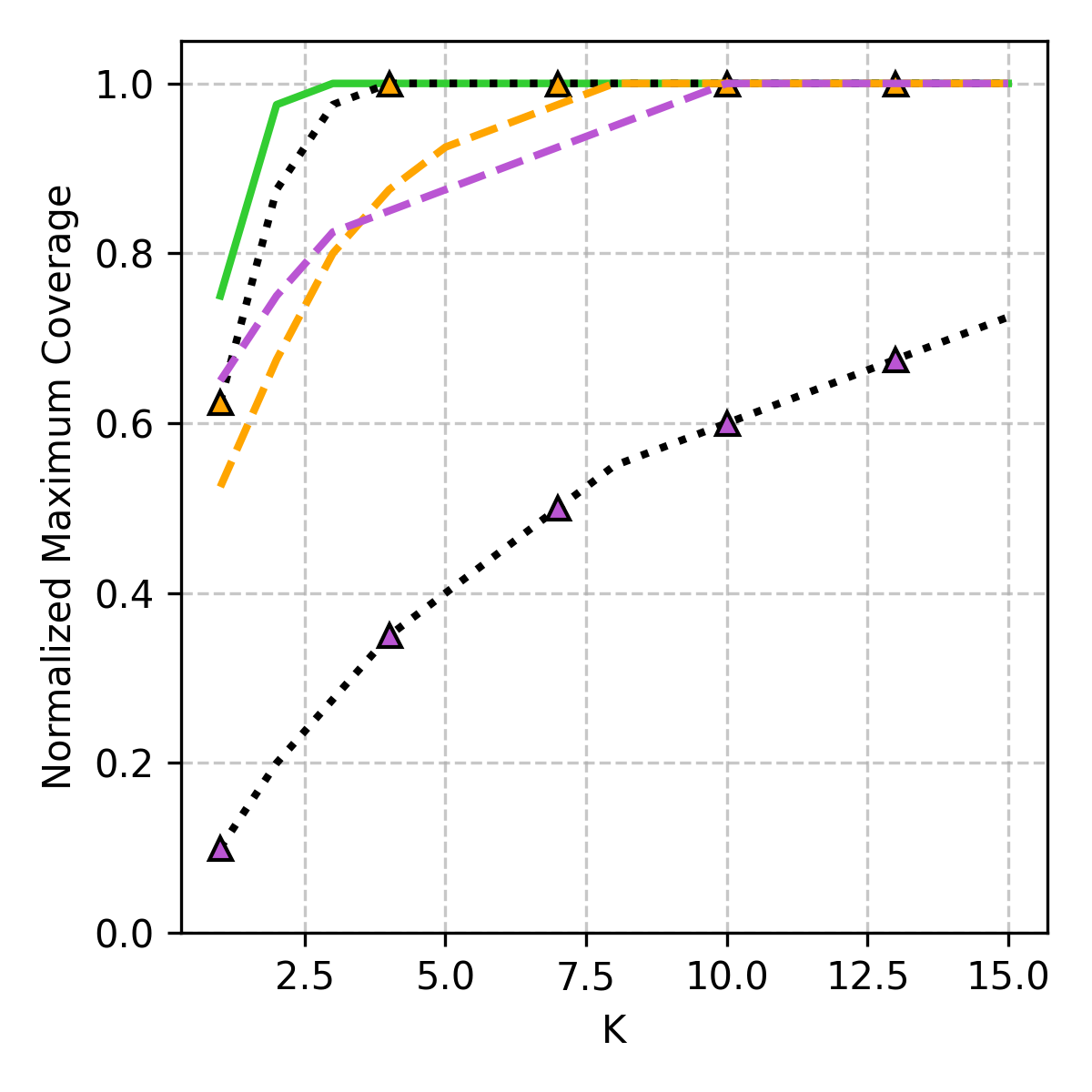}
        \caption{$B_{M_2}$(Truck,$\neg$Truck)}
    \end{subfigure}\hfill
    \begin{subfigure}[t]{0.22\linewidth}
        \includegraphics[width=\linewidth]{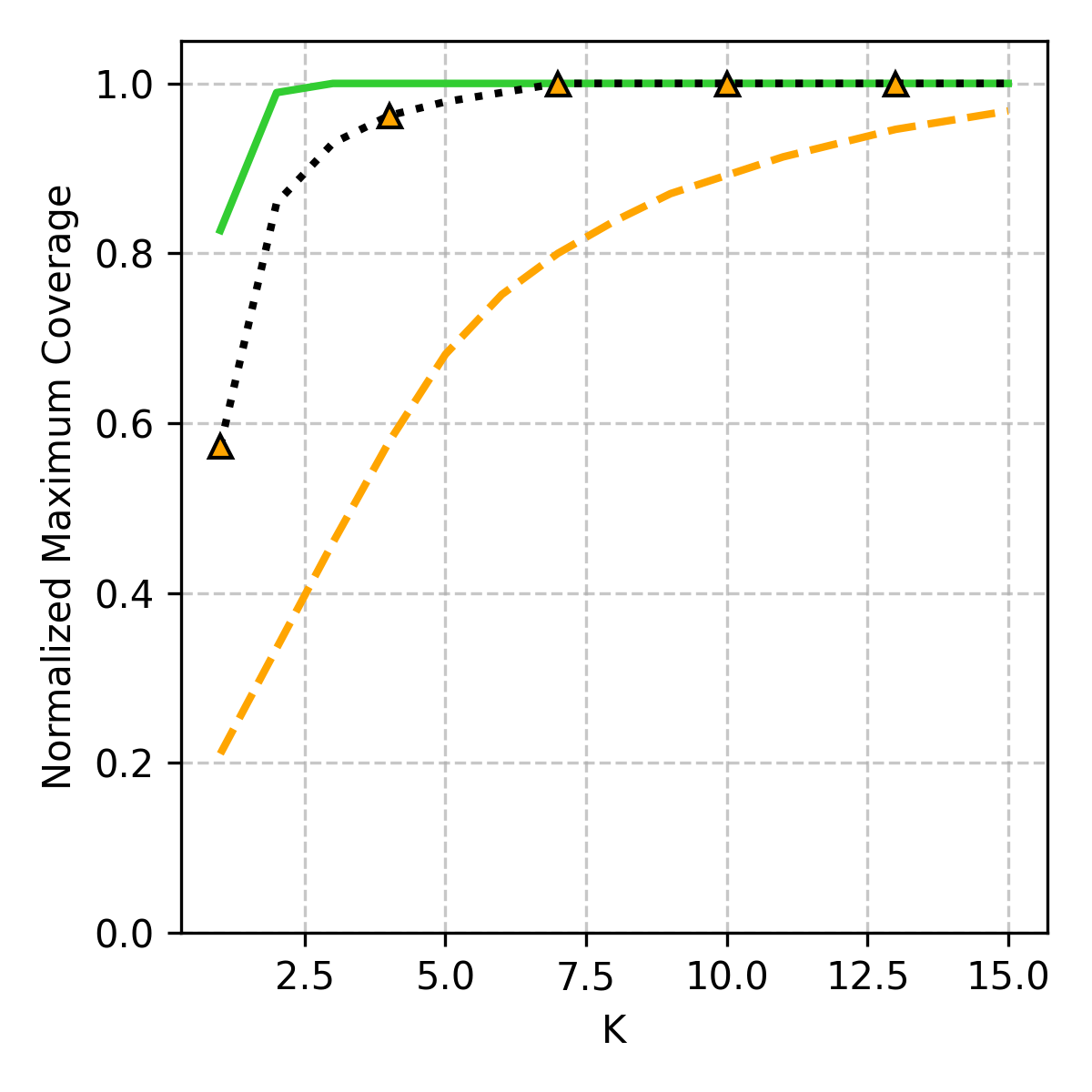}
        \caption{$B_{M_4}$(Cat)}
    \end{subfigure}\hfill
    \begin{subfigure}[t]{0.22\linewidth}
        \includegraphics[width=\linewidth]{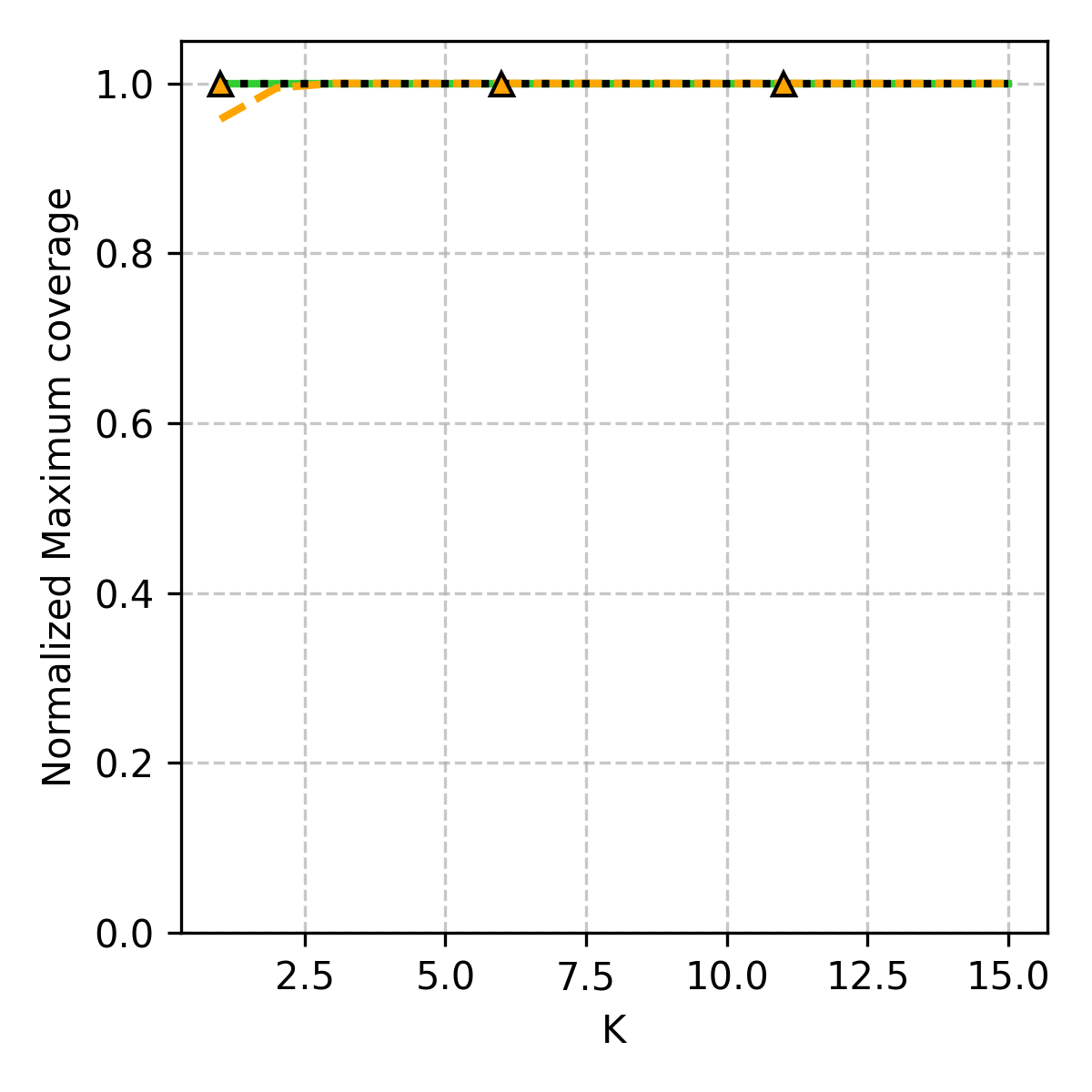}
        \caption{$B_{M_5}$(Lake,Pasture)}
    \end{subfigure}
    \caption{Maximum Coverage@K for the model behaviors we analyze. 
    More results in Appendix \ref{appendix:extended_rq2}.}
    \label{fig:max_coverage}
\end{figure}

\textbf{Individual Coverage.} Another aspect of interest to a human analyst is the explanatory power of individual explanations, measured by the number of images they individually explain. We quantify this by calculating the \textbf{Individual Coverage} of each explanation, defined as the number of images to which a specific explanation applies divided by the total number of images in the behavior of interest. A value of 1 implies that an explanation accounts for the entire behavior. Figure \ref{fig:coverage} shows these Individual Coverages in descending order. 
The empirical results show that the distribution of individual coverages is highly skewed across all behaviors. A small number of explanations report high individual coverages, while a large number of explanations apply only to a few instances. This result confirms that our approach effectively separates general concept patterns from niche cases, allowing an analyst to focus on the most influential explanations. Others ~\cite{Sokol_Flach_2020} have proposed similar metrics for evaluating the quality of computed explanations.

\begin{figure}[t] \centering
    \captionsetup[subfigure]{oneside,margin={18mm,0cm}}
    \begin{subfigure}[t]{0.32\linewidth}
        \includegraphics[width=\linewidth]{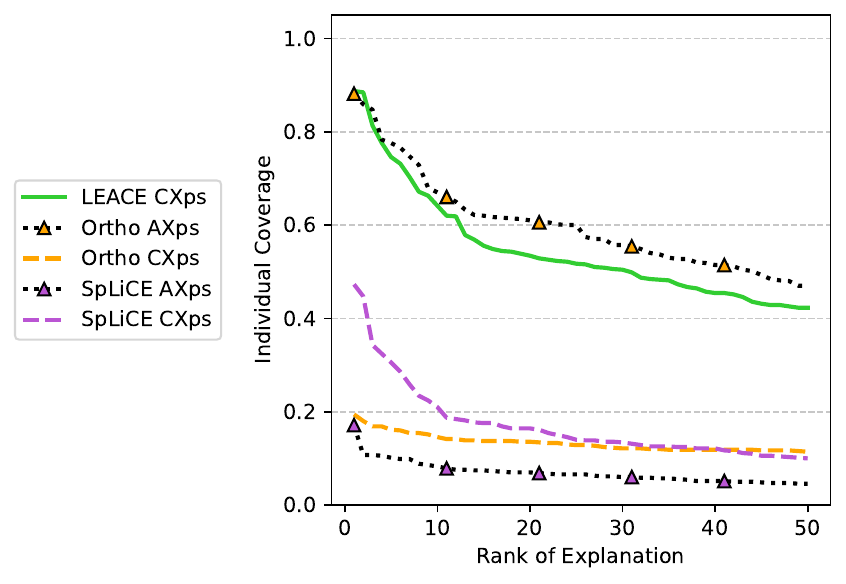}
        \caption{$B_{M_2}$(Car)}
    \end{subfigure}\hfill
    \captionsetup[subfigure]{oneside,margin={0cm,0cm}}
    \begin{subfigure}[t]{0.22\linewidth}
        \includegraphics[width=\linewidth]{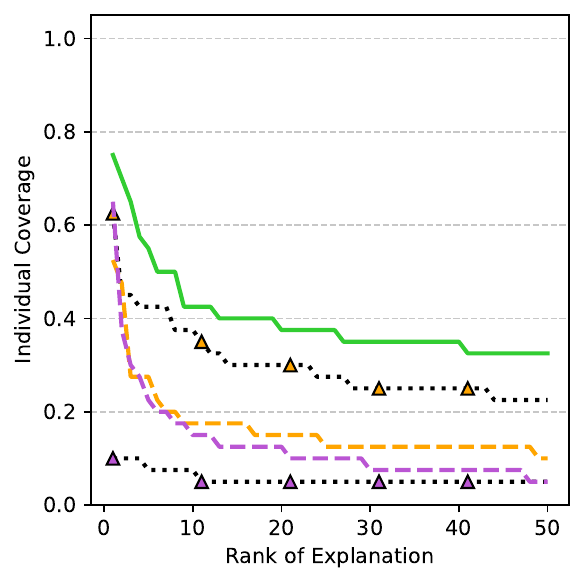}
        \caption{$B_{M_2}$(Truck,$\neg$Truck)}
    \end{subfigure}\hfill
    \begin{subfigure}[t]{0.22\linewidth}
        \includegraphics[width=\linewidth]{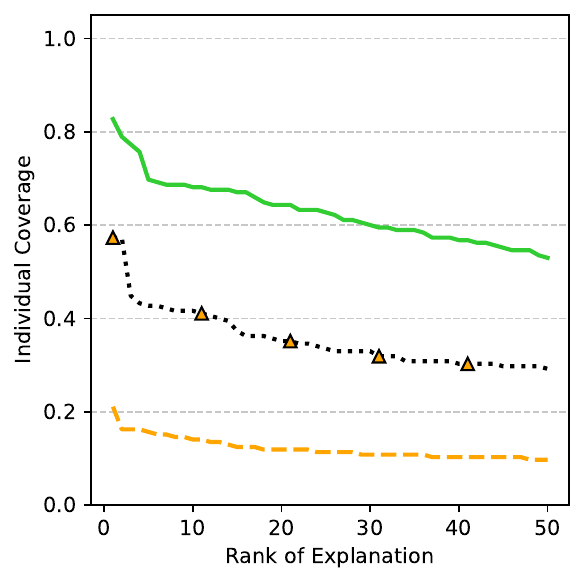}
        \caption{$B_{M_4}$(Cat)}
    \end{subfigure}\hfill
    \begin{subfigure}[t]{0.22\linewidth}
        \includegraphics[width=\linewidth]{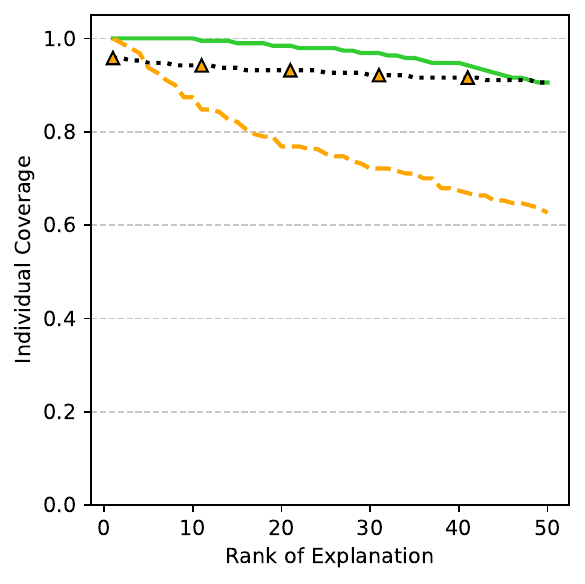}
        \caption{$B_{M_5}$(Lake,Pasture)}
    \end{subfigure}
    \caption{Individual coverages for the model behaviors we analyze. 
    While the top-ranked explanations show broad applicability, a long flat tail in the plot suggests many explanations cover only a few images, e.g., corner cases. This trend is consistent across different models, behaviors, and erasure algorithms.}
    \label{fig:coverage}
\end{figure}

\textbf{Mixed Behaviors.} Inspired by \cite{Dunlap_Zhang_2024}, we evaluate the impact of mixing behaviors on the individual coverages of explanations. Given a collection of $n$ behaviors $B^{(1)}, B^{(2)}, \dots, B^{(n)}$, we define a \textbf{Mixed Behavior Set} as their union $B^{mixed} = \bigcup_{i=1}^n B^{(i)}$ and then measure individual coverage of explanations on the mixed set. Our results show that while a specific explanation may account for a large portion of $B^{(i)}$, its individual coverage decreases significantly when calculated over $B^{mixed}$. As shown in Figure \ref{fig:purity}, the resulting curve is much less skewed than in Figure \ref{fig:coverage}, which indicates that the explanations report lower individual coverages across the mixed set. The observed drop in individual explanatory power supports the intuition that the identified explanations are characteristic of the behavior of interest and do not generalize to unrelated samples by chance.

\begin{figure}[t] \centering
    \captionsetup[subfigure]{oneside,margin={18mm,0cm}}
    \begin{subfigure}[t]{0.32\linewidth}
        \includegraphics[width=\linewidth]{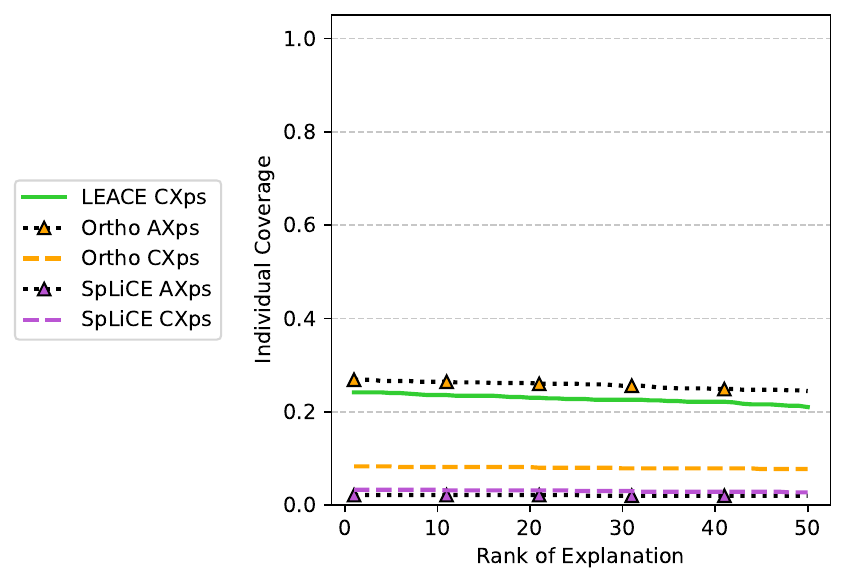}
        \caption{$B^{mixed}_{M_2}$}
    \end{subfigure}\hfill
    \captionsetup[subfigure]{oneside,margin={0cm,0cm}}
    \begin{subfigure}[t]{0.22\linewidth}
        \includegraphics[width=\linewidth]{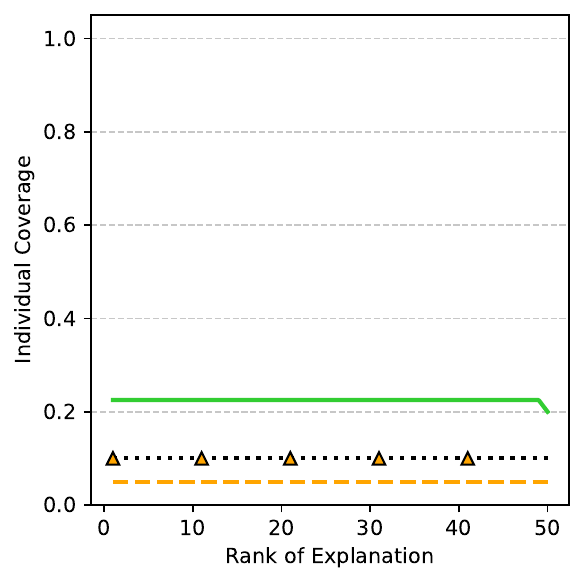}
        \caption{$B^{mixed}_{M_3}$}
    \end{subfigure}\hfill
    \begin{subfigure}[t]{0.22\linewidth}
        \includegraphics[width=\linewidth]{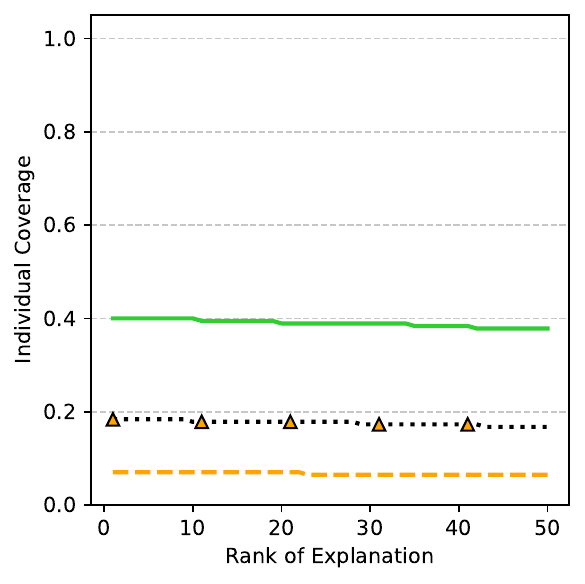}
        \caption{$B^{mixed}_{M_4}$}
    \end{subfigure}\hfill
    \begin{subfigure}[t]{0.22\linewidth}
        \includegraphics[width=\linewidth]{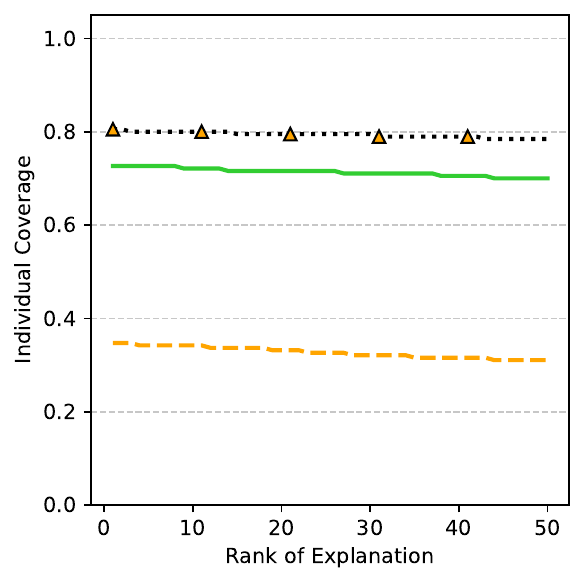}
        \caption{$B^{mixed}_{M_5}$}
    \end{subfigure}
    \caption{Individual coverage on Mixed Behavior Sets for the models we analyze.}
    \label{fig:purity}
\end{figure}

\begin{tcolorbox}[colback=gray!10!white,colframe=gray!75!black]
\textbf{Finding:} Despite obtaining thousands of unique ConXps per image, (i) a small subset of $K \leq 10$ explanations is sufficient to cover all images in most explored behaviors, and (ii) their individual coverage is highly skewed, implying that only a small subset of explanations requires human attention.
\end{tcolorbox}

\subsubsection{RQ3: Parsimony}
\label{sec:parsimony}

Following Occam's razor ~\cite{Rasmussen_Ghahramani_2000}, or the law of parsimony, which recommends searching for explanations constructed with the smallest possible set of elements, ~\cite{Sokol_Flach_2020} state that explanations should be selective and succinct enough to avoid overwhelming the explainee with unnecessary information, i.e., fill in the most gaps with the fewest arguments.

We analyze the association between explanation size, defined as the number of included concepts, and individual coverage. Figure~\ref{fig:Violin} shows this distribution using a violin plot where the width at a given y-axis value represents the proportion of explanations reporting that value of individual coverage. The results in Figure~\ref{fig:Violin} reveal a trend where smaller explanations of $L\leq 2$, where $L$ is the number of concepts in an explanation, report higher individual coverage than larger explanations of $L\geq3$ concepts. This suggests that parsimonious explanations are more likely to capture the broad, systematic patterns of a model's behavior. 

\begin{figure}[t] \centering
    \begin{subfigure}[t]{0.29\linewidth}
        \includegraphics[width=\linewidth]{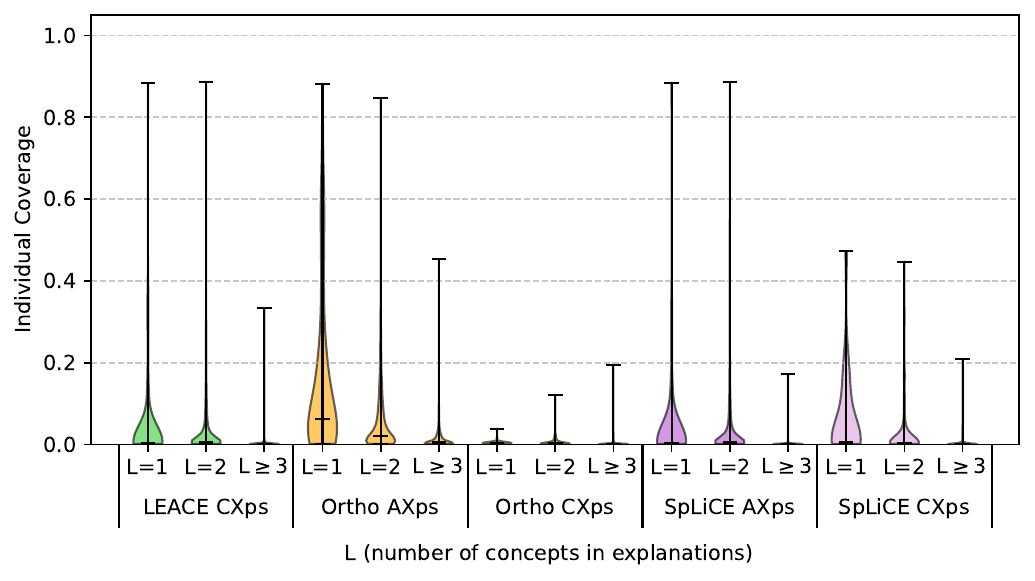}
        \caption{$B_{M_2}$(Car)}
    \end{subfigure}\hfill
    \begin{subfigure}[t]{0.29\linewidth}
        \includegraphics[width=\linewidth]{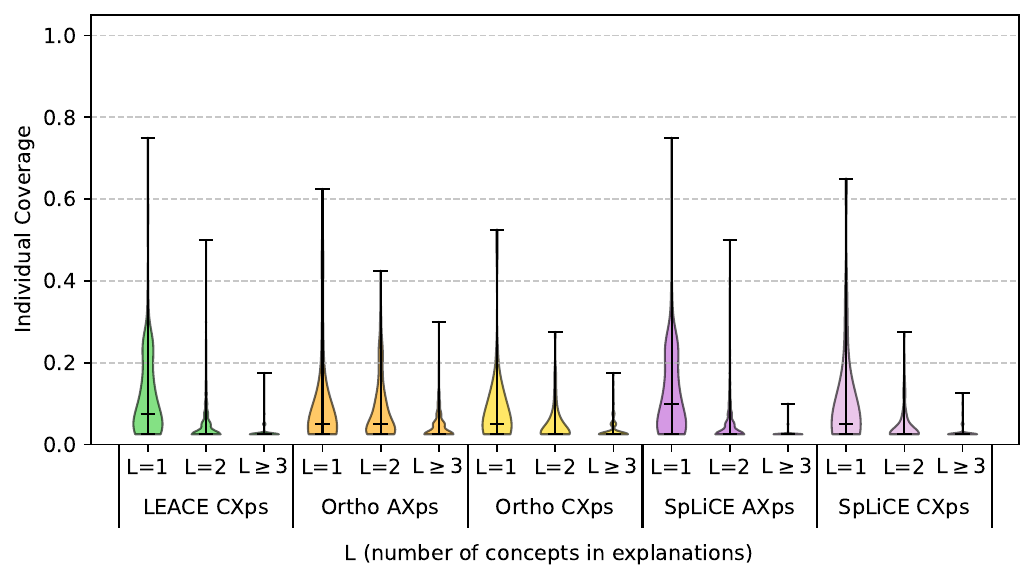}
        \caption{$B_{M_2}$(Truck,$\neg$Truck)}
    \end{subfigure}\hfill
    \begin{subfigure}[t]{0.205\linewidth}
        \includegraphics[width=\linewidth]{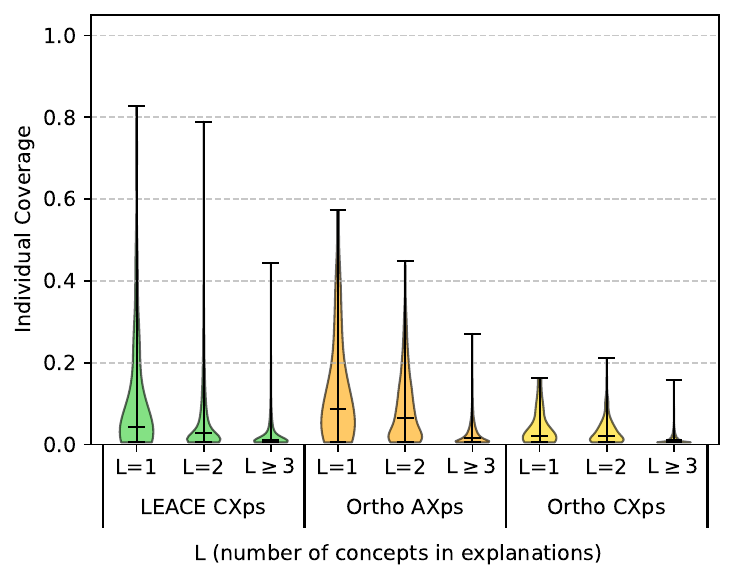}
        \caption{$B_{M_4}$(Cat)}
    \end{subfigure}\hfill
    \begin{subfigure}[t]{0.205\linewidth}
        \includegraphics[width=\linewidth]{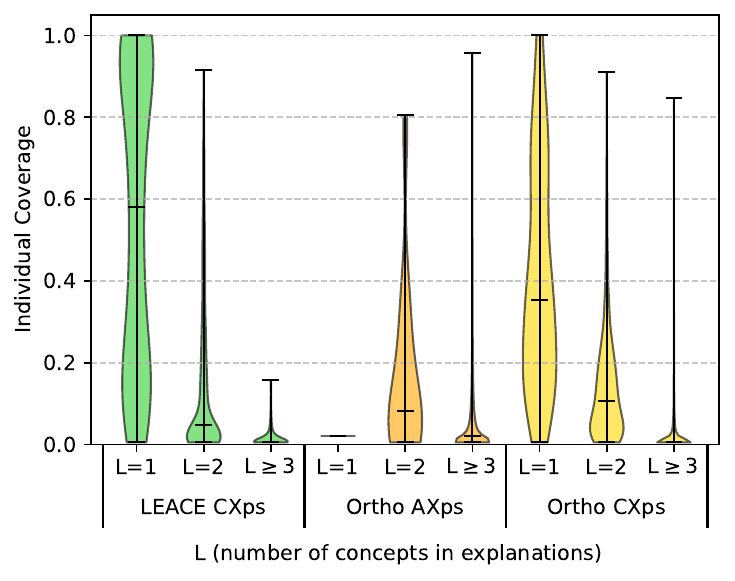}
        \caption{$B_{M_5}$(Lake,Pasture)}
    \end{subfigure}
    \caption{Explanation Size vs. Individual Coverage for the model behaviors we analyze.}
    \label{fig:Violin}
\end{figure}

\begin{tcolorbox}[colback=gray!10!white,colframe=gray!75!black]
\textbf{Finding:} Short ConXps consistently report higher individual coverage than larger ones across evaluated models, datasets, and behaviors. This suggests that searching for large ConXps is unnecessary, as increasing the number of concepts is negatively correlated with explanations' individual coverage. 
\end{tcolorbox}

\subsubsection{RQ4: Plausibility}
\label{subsec:plausibility}

We categorize the plausibility of each explanation by evaluating if its constituent concepts are \emph{relevant} to the model behavior. A signed explanation is \emph{fully plausible} if all its positive concepts are relevant and all its negative concepts are irrelevant for the behavior. In contrast, an explanation is fully implausible if all positive concepts are irrelevant and negative concepts are relevant. Measuring the fraction of the total number of explanations that are plausible, partially plausible, and implausible acts as a lens to determine if the vision model's decisions are based on relevant concepts or on spurious correlations. Following the \textit{LLM-as-a-judge} strategy \cite{Li_Camunas_2025}, we utilized Gemini 3 Pro ~\cite{gemini3pro2026} to classify each concept in our base vocabulary as relevant or irrelevant for the model behavior under consideration. %
Please refer to Appendix \ref{appendix:gemini} for prompt templates used.

As shown in Figure \ref{fig:auditing}, the evaluated vision models show a tendency towards plausible explanations over implausible ones. However, a significant proportion of explanations are only partially plausible, indicating that many decisions are based on a mix of relevant and spurious concepts. These spurious concepts often represent features that are highly correlated with a behavior but are not causal, such as the presence of water in images of ships. The existence of partially plausible explanations suggests that while models prioritize relevant features, they still use non-essential contextual information in many of their decisions.

The following are three examples of behaviors with explanations that, despite being among the top five in individual coverage, contain irrelevant concepts: $B_{M_1}$(Annual Crop) reports \{Fairy(+)\} and \{Coal(+)\}, $B_{M_1}$(Ship) reports \{Boat(+)$\wedge$Sausage(+)$\wedge$Wreck(+)$\wedge$Sea(+)\}, and $B_{M_1}$(Sea Lake, Forest) reports \{Smoke(+)$\wedge$Rat(+)$\wedge$Duck(+)\} and \{Fog(+)$\wedge$Rat(+)$\wedge$Duck(+)\}. More results in Appendix~\ref{appendix:extended_rq4}.
Among the models and behaviors studied, CLIP generally provides a higher percentage of plausible explanations, followed closely by ResNet. In contrast, VGG reports the lowest plausibility and a higher frequency of spurious correlations. This difference may be attributed to the performance of linear maps, see Appendix~\ref{appendix:linear_map}, rather than the models' inherent logic; this empirical observation opens an interesting future study.

\begin{figure}[t]\centering
    \captionsetup[subfigure]{oneside,margin={18mm,0cm}}
    \begin{subfigure}[t]{0.33\linewidth}
        \includegraphics[width=\linewidth]{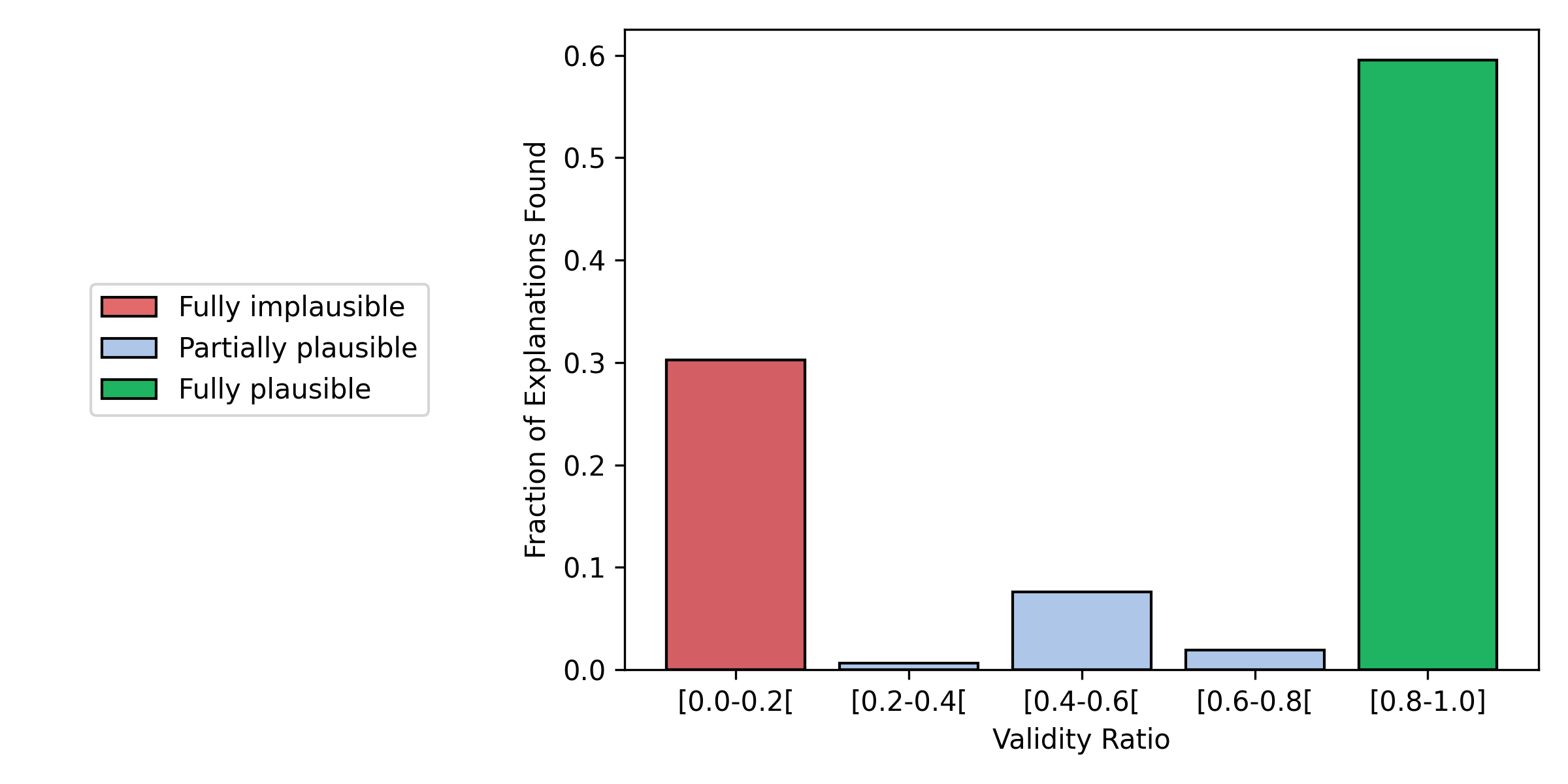}
        \caption{$B_{M_2}$(Car)}
    \end{subfigure}\hfill
    \captionsetup[subfigure]{oneside,margin={0cm,0cm}}
    \begin{subfigure}[t]{0.21\linewidth}
        \includegraphics[width=\linewidth]{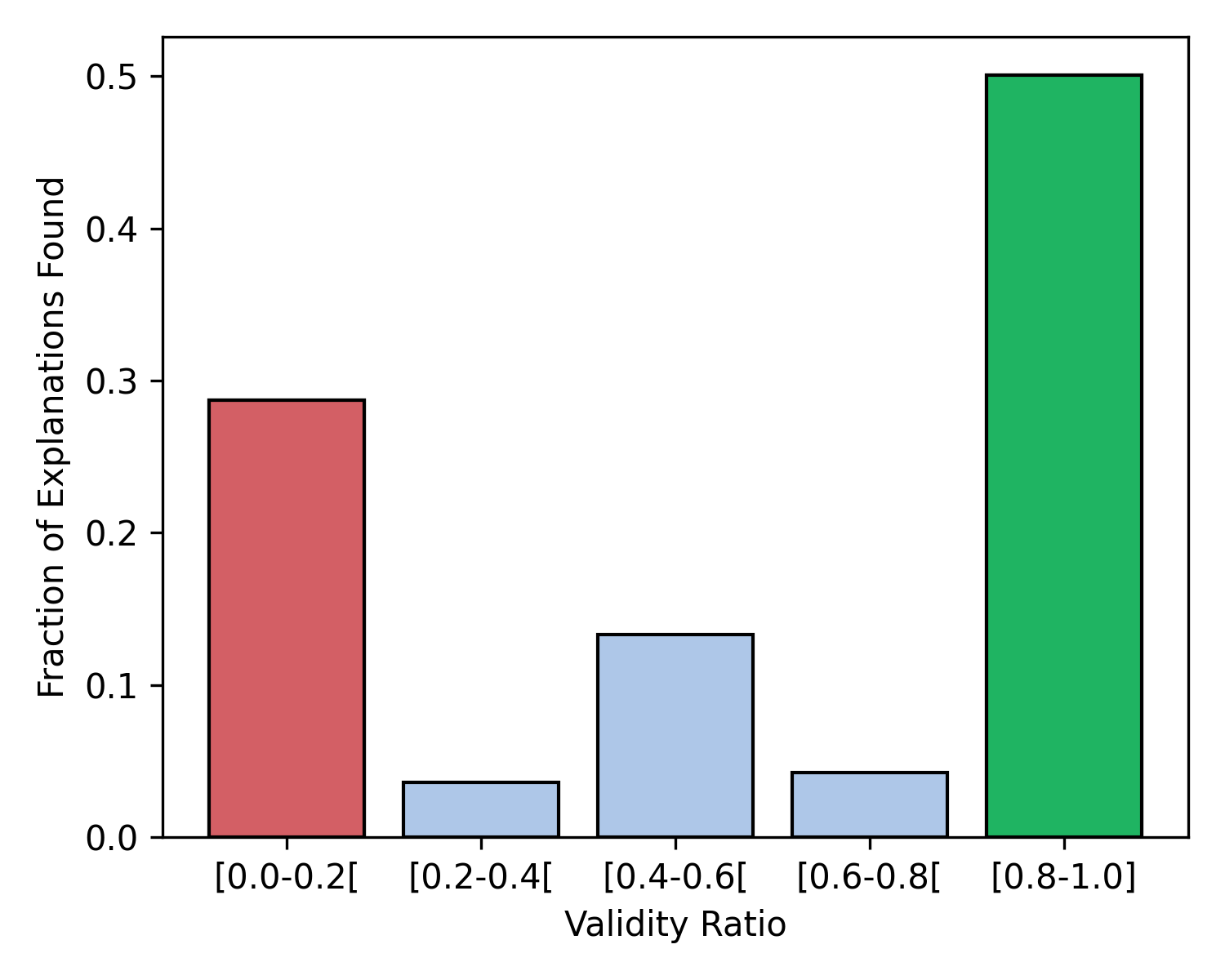}
        \caption{$B_{M_2}$(Truck,$\neg$Truck)}
    \end{subfigure}\hfill
    \begin{subfigure}[t]{0.21\linewidth}
        \includegraphics[width=\linewidth]{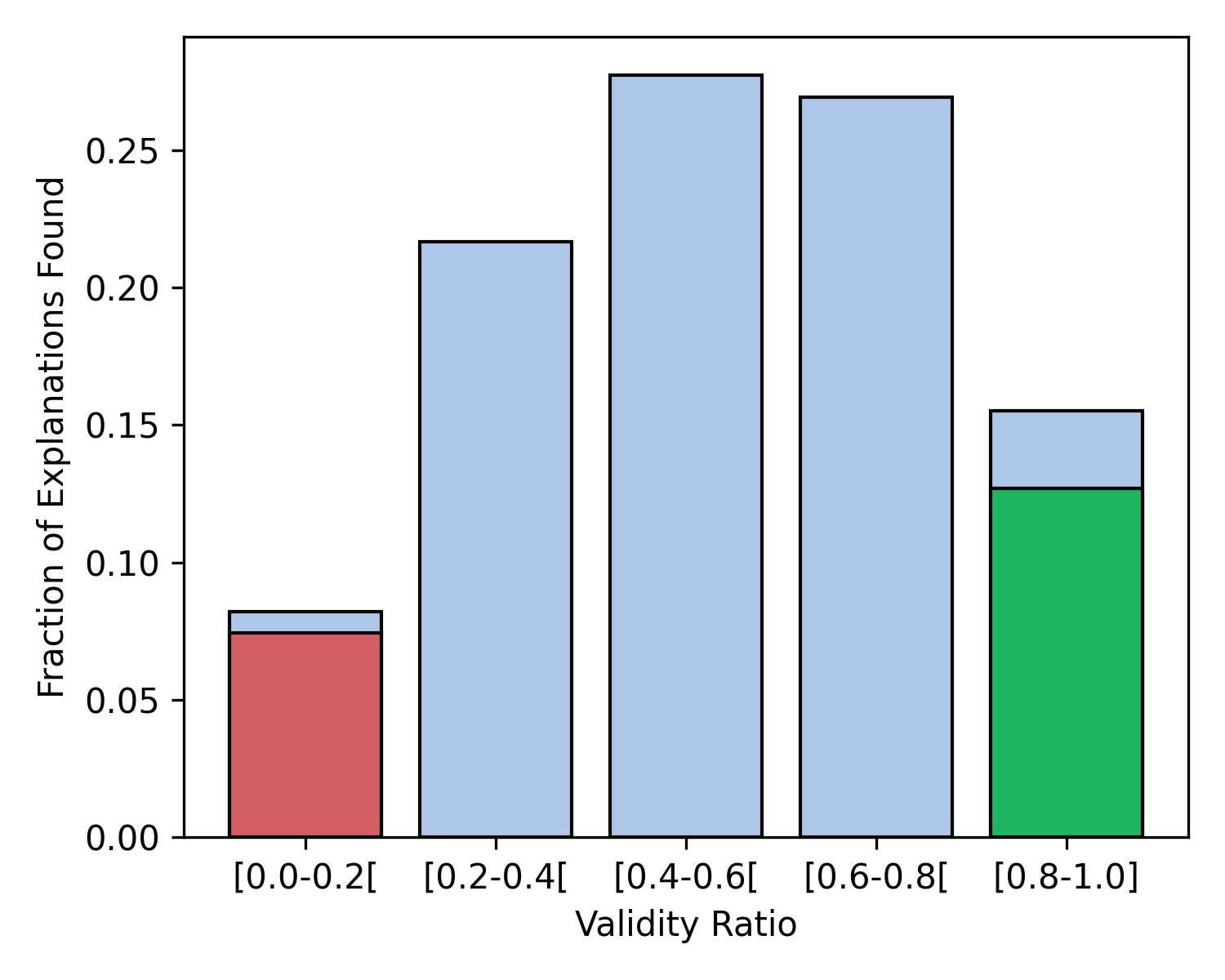}
        \caption{$B_{M_4}$(Cat)}
    \end{subfigure}\hfill
    \begin{subfigure}[t]{0.21\linewidth}
        \includegraphics[width=\linewidth]{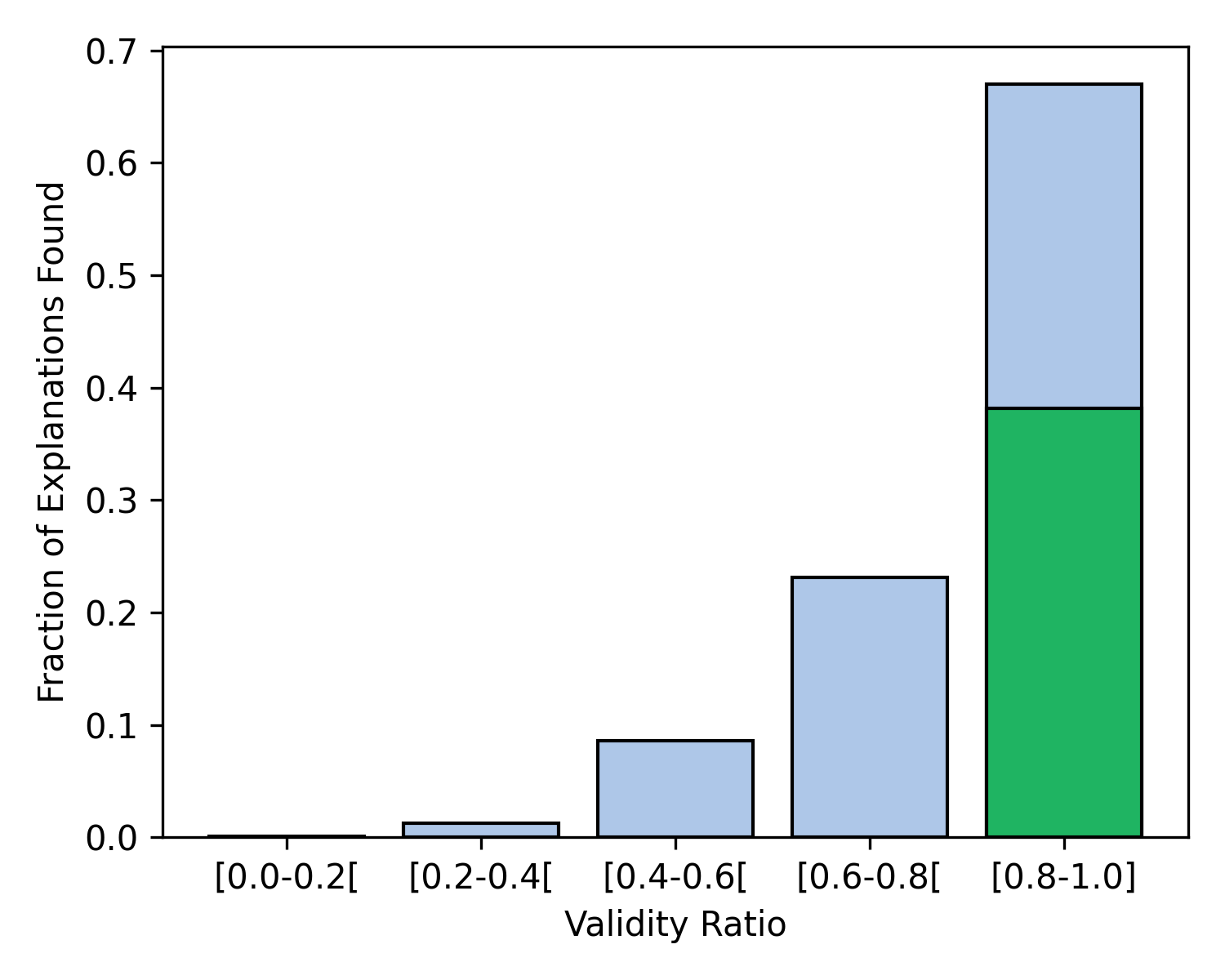}
        \caption{$B_{M_5}$(Lake,Pasture)}
    \end{subfigure}
    \caption{%
    Fraction of the total number of explanations that are fully plausible (green), partially plausible (light blue), and fully implausible (red) explanations across the model behaviors we analyze. \emph{Validity Ratio} is the proportion of the number of valid concepts to total number of concepts in a explanation. Each bar shows the fraction of total explanations where the validity ratio is in the specified range. }
    \label{fig:auditing}
\end{figure}

\begin{tcolorbox}[colback=gray!10!white,colframe=gray!75!black]
\textbf{Finding:} ConXps can be used to detect spurious reasoning in vision models by identifying cases where a behavior relies on irrelevant concepts. 
The models and behaviors we analyze largely rely on relevant concepts, though the presence of partially plausible explanations is not negligible. %
\end{tcolorbox}

\subsubsection{RQ5: Efficiency}
\label{sec:RQ5}

\begin{figure*}[t]\centering
    \captionsetup[subfigure]{oneside,margin={18mm,0cm}}
    \begin{subfigure}[t]{0.36\linewidth}
        \includegraphics[width=\linewidth]{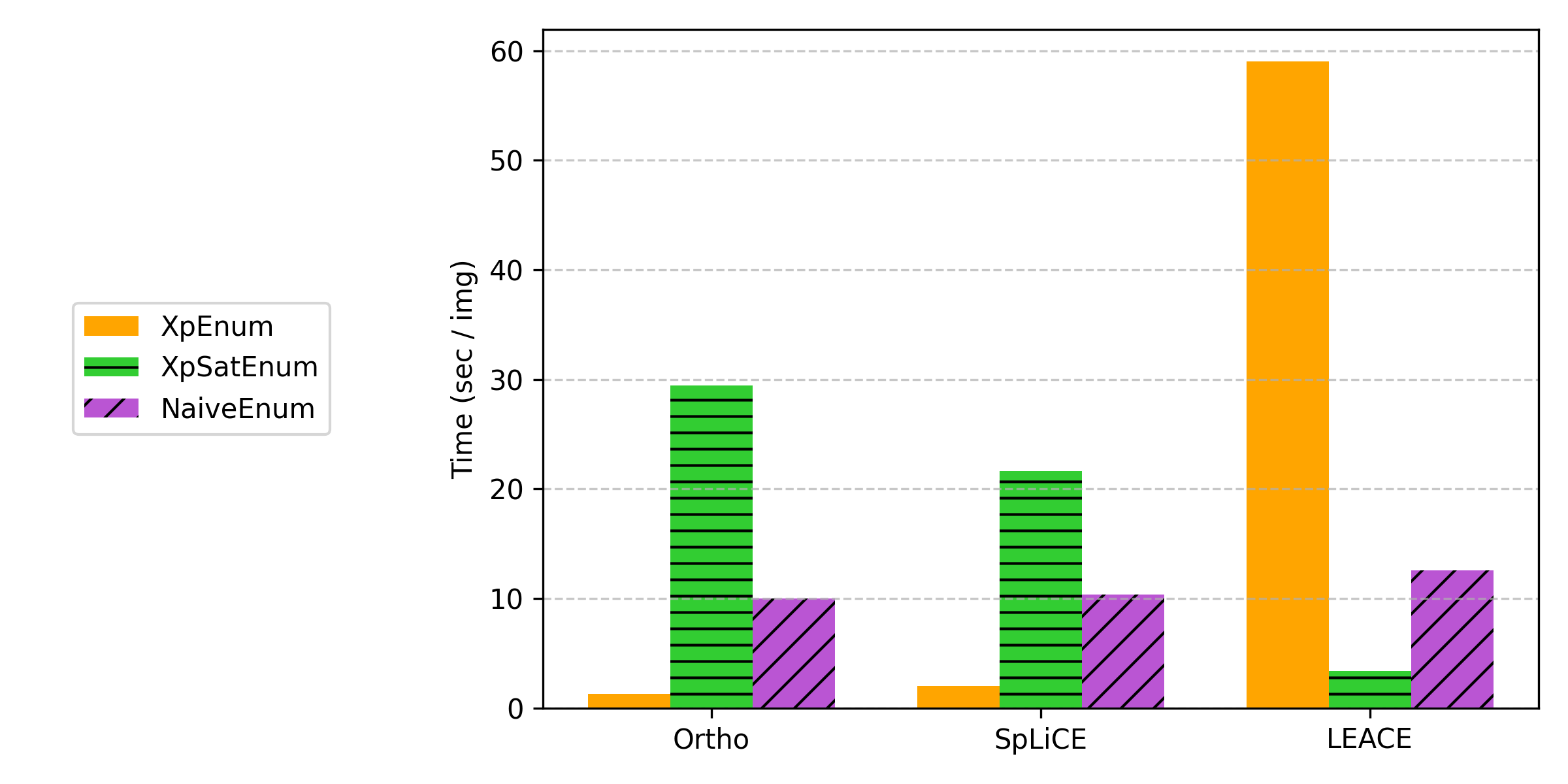}
        \caption{$B_{M_2}$(Car)}
    \end{subfigure}\hfill
    \captionsetup[subfigure]{oneside,margin={0cm,0cm}}
    \begin{subfigure}[t]{0.27\linewidth}
        \includegraphics[width=\linewidth]{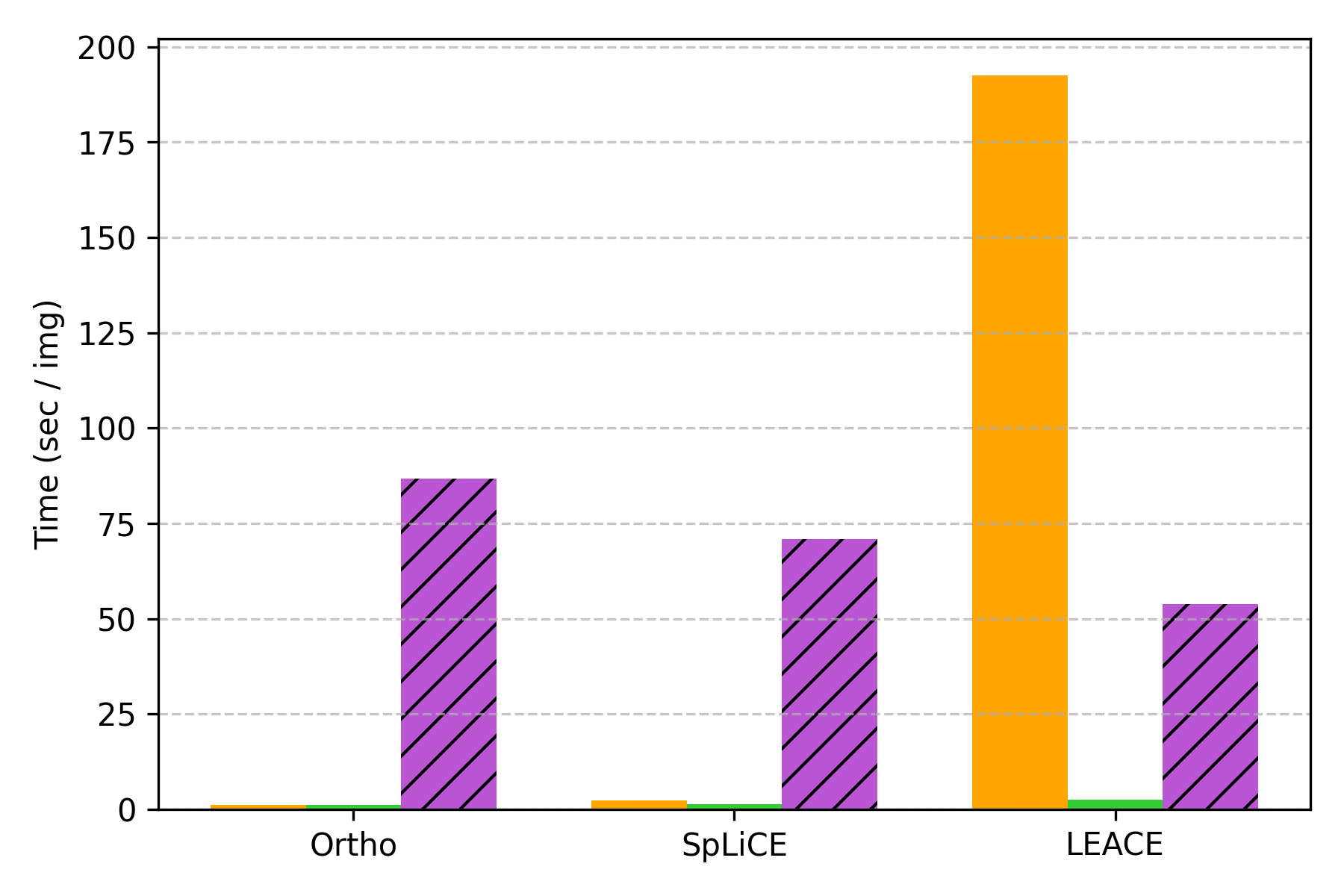}
        \caption{$B_{M_2}$(Truck,$\neg$Truck)}
    \end{subfigure}\hfill
    \begin{subfigure}[t]{0.18\linewidth}
        \includegraphics[width=\linewidth]{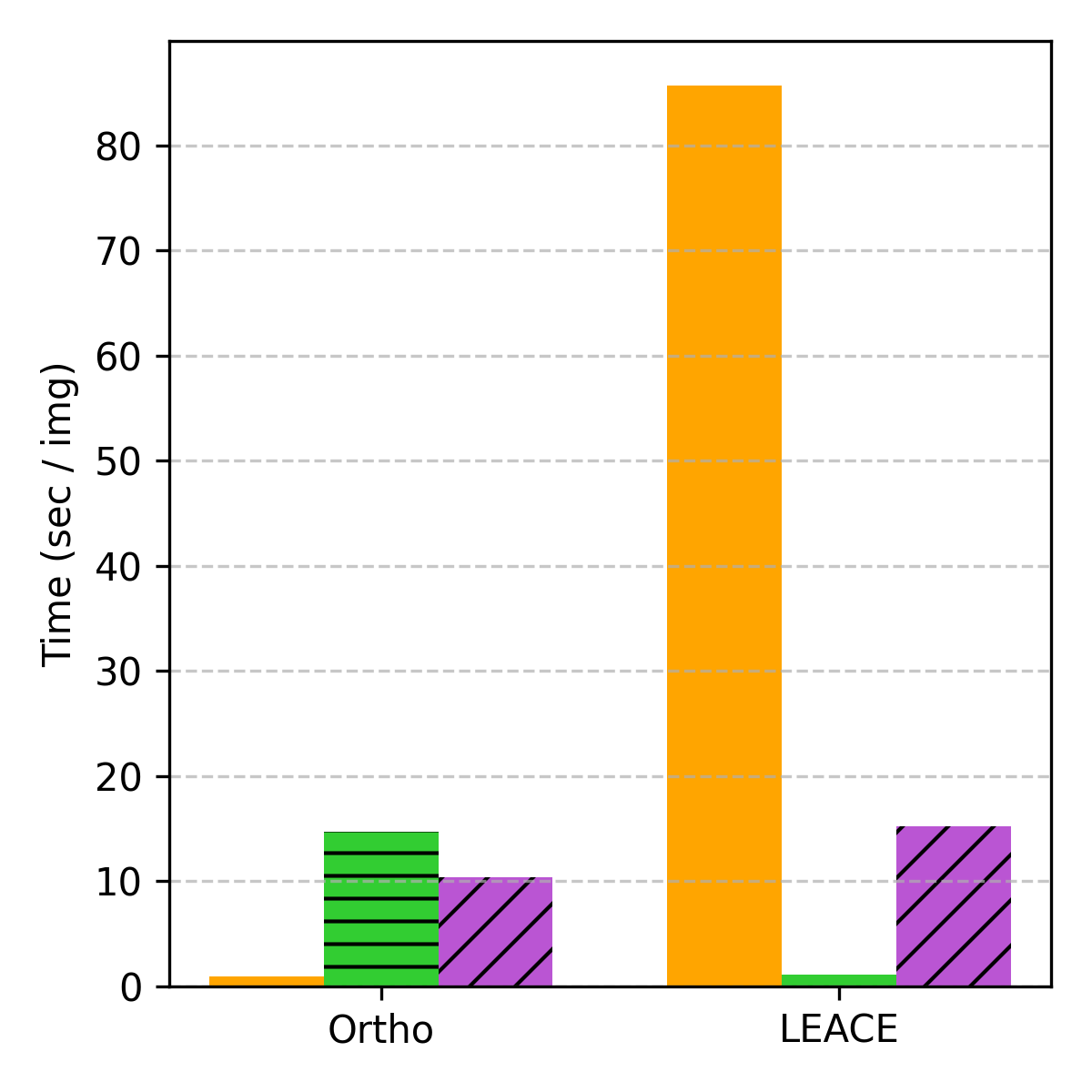}
        \caption{$B_{M_4}$(Cat)}
    \end{subfigure}\hfill
    \begin{subfigure}[t]{0.18\linewidth}
        \includegraphics[width=\linewidth]{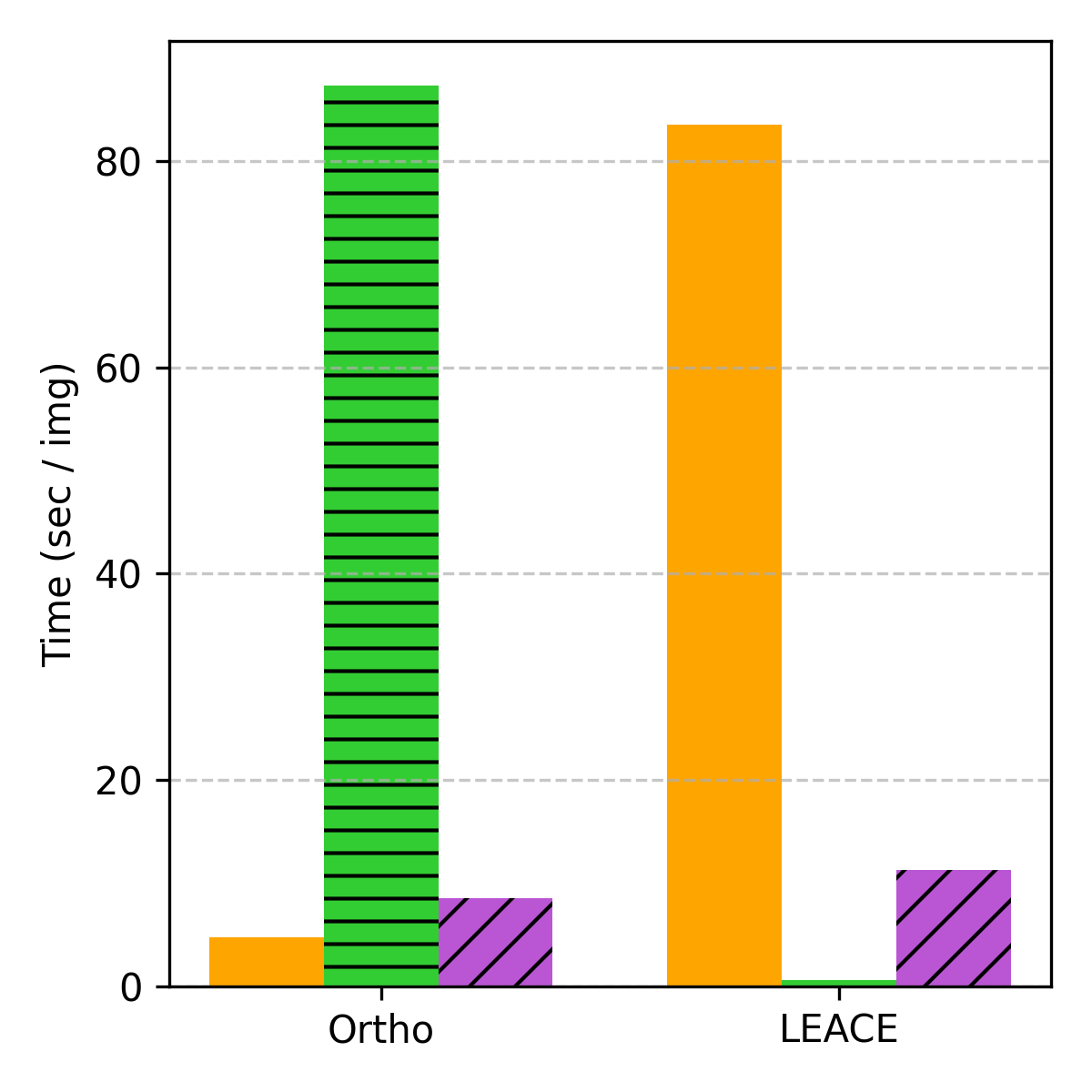}
        \caption{$B_{M_5}$(Lake,Past.)}
    \end{subfigure}
    \caption{Compute time per image, measured in seconds, for the model behaviors we analyze. See Figure~\ref{fig:sup_time} on Appendix~\ref{appendix:extended_rq5} for compute times on additional behaviors.}
    \label{fig:time_comparison}
\end{figure*}

We compare the time required per combination of a erasure algorithm with an explanation enumeration algorithm to compute explanations per image. We refer to each such combination as a \emph{configuration}. 
Figure~\ref{fig:time_comparison} shows the computation times for each algorithm combination. 
XpEnum is considerably faster with Ortho and SpLiCE, but is the slowest by far when used with LEACE. XpSatEnum times vary based on the number of explanations found by XpEnum. NaiveEnum is consistently fast across all combinations, though not the fastest overall. This stability makes NaiveEnum a reliable choice across different erasure choices. 
Detailed discussion is available in Appendix~\ref{appendix:extended_rq5}.

\begin{tcolorbox}[colback=gray!10!white,colframe=gray!75!black]
\textbf{Finding:} 
Although formal explanation methods often struggle to scale to large models and datasets, our approach offers a reasonable trade-off between scalability and explanation quality. 
\end{tcolorbox}

\section{Related Work}

Research on explaining machine learning behavior spans a wide range of techniques. Approaches for obtaining \textit{formal explanations} use logical encodings, abductive reasoning, and SAT/MaxSAT formulations to derive minimal sets of input features that guarantee a model’s prediction. This enables counterfactual, adversarial, and causal analyses with strong formal guarantees~\cite{AAAI,AIIA,CUBE,NeurIPS,AAAIa,SAT21,IJCAI21,AAAI22,IJCAI23,ECAI23,IJCAI24,TACAS23,SAT24}. However, explanations expressed over raw input variables (such as pixels for visual modality), are difficult to interpret. \textit{Mechanistic interpretability} methods instead employ interventions such as activation patching, causal tracing, and mediation analysis on internal model representations to identify components responsible for specific behaviors~\cite{NeurIPS20,NeurIPS22,NeurIPS23,ICLR24a,WACV24,NeurIPS21}. This provides insights into how models compute outputs, however, they lack semantic meaning since they are in terms of neurons, attention heads, or latent subspaces.

\textit{Concept-based interpretability} addresses this gap by explaining behavior in terms of human-understandable concepts using techniques such as concept activation vectors, concept erasure, amnesic probing, and counterfactual representation interventions~\cite{TCAV,LEACE,TACL21,CONLL21}. Vision-language models enable grounding explanations in textual concept spaces~\cite{CVPR23,SPLICE,ICLR24b,VIT24,GeoVLM25}. Recent work leverages VLMs to interpret, debug, and monitor vision models by mapping their respective representation spaces~\cite{CAIN25}. Closely related are methods performing formal reasoning over structured semantic variables derived from programmatic scene descriptions or causal abstractions~\cite{CVPR20,NeurIPS24,FMCAD23}. 

Our work bridges formal explanation methods and concept-based interpretability by enabling formally grounded abductive and contrastive explanations over semantically meaningful concepts. Our novelty lies in combining semantic grounding (leveraging VLMs) with guarantees of minimality and causal relevance.

\section{Conclusion}
\label{sec:conclusion}
We presented the new notion of concept-based abductive and contrastive explanations. These explanations provide causal explanations for model behaviors in terms of minimal sets of sufficient or necessary concepts. Causality is determined by erasing the concepts from internal model representations. We presented three algorithms, parametric in the concept erasure procedure, for enumerating such explanations. Our algorithms assume that the model head is monotonic with respect to concepts. We also presented a new concept erasure procedure based on the intuition that the representation, after erasure, should be orthogonal to each of the erased concepts while being as similar to original representation as possible. For the models and behaviors we analyze, short, human-understandable, explanations tend to be most explanatory. The inferred explanations also generalize to unseen images where the model exhibits the same behavior. As future work, we plan to apply these methods to LLMs and VLMs where model behaviors are characterized over sequences of generated texts instead of categorical predictions. We expect this richer behavior to necessitate adding a temporal aspect to our concept-based explanations to account for the auto-regressive nature of these models.

\newpage
\bibliography{Files/references}

@article{SPLICE,
  title={Interpreting clip with sparse linear concept embeddings (splice)},
  author={Bhalla, Usha and Oesterling, Alex and Srinivas, Suraj and Calmon, Flavio P and Lakkaraju, Himabindu},
  journal={Advances in Neural Information Processing Systems},
  volume={37},
  pages={84298--84328},
  year={2024}
}

@inproceedings{TCAV,
  author       = {Been Kim and
                  Martin Wattenberg and
                  Justin Gilmer and
                  Carrie J. Cai and
                  James Wexler and
                  Fernanda B. Vi{\'{e}}gas and
                  Rory Sayres},
  editor       = {Jennifer G. Dy and
                  Andreas Krause},
  title        = {Interpretability Beyond Feature Attribution: Quantitative Testing
                  with Concept Activation Vectors {(TCAV)}},
  booktitle    = {Proceedings of the 35th International Conference on Machine Learning,
                  {ICML} 2018, Stockholmsm{\"{a}}ssan, Stockholm, Sweden, July
                  10-15, 2018},
  series       = {Proceedings of Machine Learning Research},
  volume       = {80},
  pages        = {2673--2682},
  publisher    = {{PMLR}},
  year         = {2018},
  url          = {http://proceedings.mlr.press/v80/kim18d.html},
  bibsource    = {dblp computer science bibliography, https://dblp.org}
}

@ARTICLE{CUBE,
  author={Shao, Xinyue and Wang, Hongzhi and Chen, Xiang and Zhu, Xiao and Zhang, Yan},
  journal={IEEE Transactions on Knowledge and Data Engineering}, 
  title={CUBE: Causal Intervention-Based Counterfactual Explanation for Prediction Models}, 
  year={2024},
  volume={36},
  number={6},
  pages={2416-2429},
  doi={10.1109/TKDE.2023.3322126}}

@inproceedings{moayeri2023text,
  title={Text-to-concept (and back) via cross-model alignment},
  author={Moayeri, Mazda and Rezaei, Keivan and Sanjabi, Maziar and Feizi, Soheil},
  booktitle={International Conference on Machine Learning},
  pages={25037--25060},
  year={2023},
  organization={PMLR}
}

@inproceedings{ignatiev2020contrastive,
  title={From contrastive to abductive explanations and back again},
  author={Ignatiev, Alexey and Narodytska, Nina and Asher, Nicholas and Marques-Silva, Joao},
  booktitle={International Conference of the Italian Association for Artificial Intelligence},
  pages={335--355},
  year={2020},
  organization={Springer}
}

@inproceedings{mangal2024concept,
  title={Concept-based analysis of neural networks via vision-language models},
  author={Mangal, Ravi and Narodytska, Nina and Gopinath, Divya and Hu, Boyue Caroline and Roy, Anirban and Jha, Susmit and P{\u{a}}s{\u{a}}reanu, Corina S},
  booktitle={International Symposium on AI Verification},
  pages={49--77},
  year={2024},
  organization={Springer}
}

@inproceedings{AAAI,
  author    = {Alexey Ignatiev and
               Nina Narodytska and
               Jo{\~{a}}o Marques-Silva},
  title     = {Abduction-Based Explanations for Machine Learning Models},
  booktitle = {Proceedings of the Thirty-Third AAAI Conference on Artificial Intelligence (AAAI 2019)},
  pages     = {1511--1519},
  year      = {2019}
}

@inproceedings{AIIA,
  author    = {Alexey Ignatiev and
               Nina Narodytska and
               Nicholas Asher and
               Joao Marques-Silva},
  title     = {From Contrastive to Abductive Explanations and Back Again},
  booktitle = {AI*IA 2020 -- Advances in Artificial Intelligence},
  editor    = {Matteo Baldoni and Stefania Bandini},
  series    = {Lecture Notes in Computer Science},
  volume    = {12414},
  pages     = {335--355},
  year      = {2020}
}

@inproceedings{NeurIPS,
  author    = {Alexey Ignatiev and
               Nina Narodytska and
               Jo{\~{a}}o Marques-Silva},
  title     = {Formal Explanations of Neural Network Predictions},
  booktitle = {Advances in Neural Information Processing Systems 32 (NeurIPS 2019)},
  pages     = {12344--12354},
  year      = {2019}
}

@inproceedings{AAAIa,
  author    = {Abbavaram Gowtham Reddy and
               Saketh Bachu and
               Harsharaj Pathak and
               Benin L. Godfrey and
               Varshaneya V and
               Vineeth N. Balasubramanian and
               Satyanarayan Kar},
  title     = {Towards Learning and Explaining Indirect Causal Effects in Neural Networks},
  booktitle = {Proceedings of the Thirty-Eighth AAAI Conference on Artificial Intelligence (AAAI 2024)},
  volume    = {38},
  number    = {13},
  pages     = {14802--14810},
  year      = {2024}
}

@inproceedings{SAT21,
  author    = {Alexey Ignatiev and
               Jo{\~a}o Marques-Silva},
  title     = {SAT-based Rigorous Explanations for Decision Lists},
  booktitle = {Theory and Applications of Satisfiability Testing -- SAT 2021},
  series    = {Lecture Notes in Computer Science},
  volume    = {12831},
  pages     = {407--422},
  year      = {2021}
}

@inproceedings{IJCAI21,
  author    = {Hadi Izza and
               Jo{\~a}o Marques-Silva},
  title     = {On Explaining Random Forests with {SAT}},
  booktitle = {Proceedings of the Thirtieth International Joint Conference on Artificial Intelligence (IJCAI 2021)},
  pages     = {2584--2590},
  year      = {2021}
}

@inproceedings{AAAI22,
  author    = {Alexey Ignatiev and
               Hadi Izza and
               Jo{\~a}o Marques-Silva and
               Nina Narodytska and
               Mate Soos},
  title     = {Using {MaxSAT} for Efficient Explanations of Tree Ensembles},
  booktitle = {Proceedings of the Thirty-Sixth AAAI Conference on Artificial Intelligence (AAAI 2022)},
  volume    = {36},
  number    = {4},
  pages     = {3776--3784},
  year      = {2022}
}

@inproceedings{IJCAI23,
  author    = {Gilles Audemard and
               Steve Bellart and
               Jean-Marie Lagniez and
               Pierre Marquis},
  title     = {Computing Abductive Explanations for Boosted Regression Trees},
  booktitle = {Proceedings of the Thirty-Second International Joint Conference on Artificial Intelligence (IJCAI 2023)},
  pages     = {3432--3441},
  year      = {2023}
}

@inproceedings{ECAI23,
  author    = {Gilles Audemard and
               Jean-Marie Lagniez and
               Pierre Marquis and
               Nicolas Szczepanski},
  title     = {On Contrastive Explanations for Tree-Based Classifiers},
  booktitle = {Proceedings of the Twenty-Sixth European Conference on Artificial Intelligence (ECAI 2023)},
  pages     = {117--124},
  year      = {2023}
}

@inproceedings{IJCAI24,
  author    = {Gilles Audemard and
               Jean-Marie Lagniez and
               Pierre Marquis and
               Nicolas Szczepanski},
  title     = {On the Computation of Example-Based Abductive Explanations for Random Forests},
  booktitle = {Proceedings of the Thirty-Third International Joint Conference on Artificial Intelligence (IJCAI 2024)},
  pages     = {3679--3687},
  year      = {2024}
}

@inproceedings{TACAS23,
  author    = {Shahaf Bassan and
               Guy Katz},
  title     = {Towards Formal XAI: Formally Approximate Minimal Explanations of Neural Networks},
  booktitle = {Tools and Algorithms for the Construction and Analysis of Systems (TACAS 2023)},
  series    = {Lecture Notes in Computer Science},
  volume    = {13993},
  pages     = {187--207},
  year      = {2023}
}

@inproceedings{SAT24,
  author    = {Jinqiang Yu and
               Graham Farr and
               Alexey Ignatiev and
               Peter J. Stuckey},
  title     = {Anytime Approximate Formal Feature Attribution},
  booktitle = {27th International Conference on Theory and Applications of Satisfiability Testing (SAT 2024)},
  series    = {Leibniz International Proceedings in Informatics (LIPIcs)},
  volume    = {305},
  pages     = {30:1--30:23},
  year      = {2024}
}

@inproceedings{NeurIPS20,
  author    = {Jesse Vig and
               Sebastian Gehrmann and
               Yonatan Belinkov and
               Sharon Qian and
               Daniel Nevo and
               Yaron Singer and
               Stuart M. Shieber},
  title     = {Investigating Gender Bias in Language Models Using Causal Mediation Analysis},
  booktitle = {Advances in Neural Information Processing Systems 33 (NeurIPS 2020)},
  year      = {2020}
}

@inproceedings{NeurIPS22,
  author    = {Kevin Meng and
               David Bau and
               Alex Andonian and
               Yonatan Belinkov},
  title     = {Locating and Editing Factual Associations in {GPT}},
  booktitle = {Advances in Neural Information Processing Systems 35 (NeurIPS 2022)},
  pages     = {1--13},
  year      = {2022}
}

@inproceedings{NeurIPS23,
  author    = {Arthur Conmy and
               Augustine N. Mavor-Parker and
               Aengus Lynch and
               Stefan Heimersheim and
               Adri{\`a} Garriga-Alonso},
  title     = {Towards Automated Circuit Discovery for Mechanistic Interpretability},
  booktitle = {Advances in Neural Information Processing Systems 36 (NeurIPS 2023)},
  pages     = {--},
  year      = {2023}
}

@inproceedings{ICLR24a,
  author    = {Fred Zhang and
               Neel Nanda},
  title     = {Towards Best Practices of Activation Patching in Language Models: Metrics and Methods},
  booktitle = {International Conference on Learning Representations (ICLR 2024)},
  year      = {2024}
}

@inproceedings{WACV24,
  author    = {Ola Ahmad and
               Nicolas B{\'e}reux and
               Lo{\"\i}c Baret and
               Vahid Hashemi and
               Freddy L{\'e}c{\'e}u{\'e}},
  title     = {Causal Analysis for Robust Interpretability of Neural Networks},
  booktitle = {Proceedings of the IEEE/CVF Winter Conference on Applications of Computer Vision (WACV 2024)},
  pages     = {4673--4682},
  year      = {2024}
}

@inproceedings{NeurIPS21,
  author    = {Kevin Xia and
               Kai-Zhan Lee and
               Yoshua Bengio and
               Elias Bareinboim},
  title     = {The Causal-Neural Connection: Expressiveness, Learnability, and Inference},
  booktitle = {Advances in Neural Information Processing Systems 34 (NeurIPS 2021)},
  pages     = {10823--10836},
  year      = {2021}
}

@inproceedings{LEACE,
  author    = {Nora Belrose and
               David Schneider-Joseph and
               Shauli Ravfogel and
               Ryan Cotterell and
               Edward Raff and
               Stella Biderman},
  title     = {LEACE: Perfect Linear Concept Erasure in Closed Form},
  booktitle = {Advances in Neural Information Processing Systems 36 (NeurIPS 2023)},
  year      = {2023}
}

@article{TACL21,
  author    = {Yanai Elazar and
               Shauli Ravfogel and
               Alon Jacovi and
               Yoav Goldberg},
  title     = {Amnesic Probing: Behavioral Explanation with Amnesic Counterfactuals},
  journal   = {Transactions of the Association for Computational Linguistics},
  volume    = {9},
  pages     = {160--175},
  year      = {2021}
}

@inproceedings{CONLL21,
  author    = {Shauli Ravfogel and
               Grusha Prasad and
               Tal Linzen and
               Yoav Goldberg},
  title     = {Counterfactual Interventions Reveal the Causal Effect of Relative Clause Representations on Agreement Prediction},
  booktitle = {Proceedings of the 25th Conference on Computational Natural Language Learning (CoNLL 2021)},
  pages     = {194--209},
  year      = {2021}
}

@inproceedings{CVPR23,
  author    = {Siwon Kim and
               Jinoh Oh and
               Sungjin Lee and
               Seunghak Yu and
               Jaeyoung Do and
               Tara Taghavi},
  title     = {Grounding Counterfactual Explanation of Image Classifiers to Textual Concept Space},
  booktitle = {Proceedings of the IEEE/CVF Conference on Computer Vision and Pattern Recognition (CVPR 2023)},
  pages     = {10942--10950},
  year      = {2023}
}

@inproceedings{ICLR24b,
  author    = {Yossi Gandelsman and
               Alexei A. Efros and
               Jacob Steinhardt},
  title     = {Interpreting {CLIP}'s Image Representation via Text-Based Decomposition},
  booktitle = {International Conference on Learning Representations (ICLR 2024)},
  year      = {2024}
}

@article{VIT24,
  title        = {Decomposing and Interpreting Image Representations via Text in {ViTs} Beyond {CLIP}},
  author       = {Sriram Balasubramanian and
                  Samyadeep Basu and
                  Soheil Feizi},
  journal      = {arXiv preprint arXiv:2406.01583},
  year         = {2024}
}

@article{GeoVLM25,
  title     = {Interpreting the Linear Structure of Vision-Language Model Embedding Spaces},
  author    = {Isabel Papadimitriou and
               Huangyuan Su and
               Thomas Fel and
               Naomi Saphra and
               Sham M. Kakade and
               Stephanie Gil},
  journal   = {arXiv preprint arXiv:2504.11695},
  year      = {2025}
}

@inproceedings{CAIN25,
  author    = {Boyue Caroline Hu and
               Divya Gopinath and
               Corina S. Pasareanu and
               Nina Narodytska and
               Ravi Mangal and
               Susmit Jha},
  title     = {Debugging and Runtime Analysis of Neural Networks with {VLMs} (A Case Study)},
  booktitle = {Proceedings of the 4th IEEE/ACM International Conference on AI Engineering - Software Engineering for AI (CAIN 2025)},
  pages     = {161--172},
  year      = {2025}
}

@inproceedings{CVPR20,
  author    = {Edward Kim and
               Divya Gopinath and
               Corina S. P{\u{a}}s{\u{a}}reanu and
               Sanjit A. Seshia},
  title     = {A Programmatic and Semantic Approach to Explaining and Debugging Neural Network Based Object Detectors},
  booktitle = {Proceedings of the IEEE/CVF Conference on Computer Vision and Pattern Recognition (CVPR 2020)},
  pages     = {11125--11134},
  year      = {2020}
}

@inproceedings{FMCAD23,
  author    = {Shahaf Bassan and
               Guy Amir and
               Davide Corsi and
               Idan Refaeli and
               Guy Katz},
  title     = {Formally Explaining Neural Networks within Reactive Systems},
  booktitle = {Proceedings of the 23rd Conference on Formal Methods in Computer-Aided Design (FMCAD 2023)},
  editor    = {Alexander Nadel and
               Kristin Yvonne Rozier and
               Warren A. Hunt and
               Georg Weissenbacher},
  pages     = {10--22},
  year      = {2023}
}

@inproceedings{NeurIPS24,
  author    = {Goutham Rajendran and
               Simon Buchholz and
               Bryon Aragam and
               Bernhard Sch{\"o}lkopf and
               Pradeep Ravikumar},
  title     = {From Causal to Concept-Based Representation Learning},
  booktitle = {Advances in Neural Information Processing Systems 37 (NeurIPS 2024)},
  pages     = {101250--101296},
  year      = {2024}
}

@inproceedings{corewordnet2, title={Augmenting WordNet for Deep Understanding of Text}, url={https://aclanthology.org/W08-2205/}, booktitle={Semantics in Text Processing. STEP 2008 Conference Proceedings}, publisher={College Publications}, author={Clark, Peter and Fellbaum, Christiane and Hobbs, Jerry R. and Harrison, Phil and Murray, William R. and Thompson, John}, editor={Bos, Johan and Delmonte, Rodolfo}, year={2008}, pages={45–57} }

@article{corewordnet1, title={Extending and Improving Wordnet via Unsupervised Word Embeddings}, url={http://arxiv.org/abs/1705.00217}, DOI={10.48550/arXiv.1705.00217}, abstractNote={This work presents an unsupervised approach for improving WordNet that builds upon recent advances in document and sense representation via distributional semantics. We apply our methods to construct Wordnets in French and Russian, languages which both lack good manual constructions.1 These are evaluated on two new 600-word test sets for word-to-synset matching and found to improve greatly upon synset recall, outperforming the best automated Wordnets in F-score. Our methods require very few linguistic resources, thus being applicable for Wordnet construction in low-resources languages, and may further be applied to sense clustering and other Wordnet improvements.}, note={arXiv:1705.00217 [cs]}, number={arXiv:1705.00217}, publisher={arXiv}, author={Khodak, Mikhail and Risteski, Andrej and Fellbaum, Christiane and Arora, Sanjeev}, year={2017}, month=apr }

@inproceedings{rival10,
    title     = {A Comprehensive Study of Image Classification Model Sensitivity to Foregrounds, Backgrounds, and Visual Attributes},
    author    = {Moayeri, Mazda and Pope, Phillip and Balaji, Yogesh and Feizi, Soheil},
    booktitle = {Proceedings of the IEEE/CVF Conference on Computer Vision and Pattern Recognition},
    month     = {June},
    year      = {2022},
    }

@article{eurosat, title={EuroSAT: A Novel Dataset and Deep Learning Benchmark for Land Use and Land Cover Classification}, url={http://arxiv.org/abs/1709.00029}, DOI={10.48550/arXiv.1709.00029}, note={arXiv:1709.00029 [cs]}, number={arXiv:1709.00029}, publisher={arXiv}, author={Helber, Patrick and Bischke, Benjamin and Dengel, Andreas and Borth, Damian}, year={2019}, month=feb }

@article{CLIP, title={Learning Transferable Visual Models From Natural Language Supervision}, url={http://arxiv.org/abs/2103.00020}, DOI={10.48550/arXiv.2103.00020}, note={arXiv:2103.00020 [cs]}, number={arXiv:2103.00020}, publisher={arXiv}, author={Radford, Alec and Kim, Jong Wook and Hallacy, Chris and Ramesh, Aditya and Goh, Gabriel and Agarwal, Sandhini and Sastry, Girish and Askell, Amanda and Mishkin, Pamela and Clark, Jack and Krueger, Gretchen and Sutskever, Ilya}, year={2021}, month=feb }

@article{ResNet, title={Deep Residual Learning for Image Recognition}, url={http://arxiv.org/abs/1512.03385}, DOI={10.48550/arXiv.1512.03385}, note={arXiv:1512.03385 [cs]}, number={arXiv:1512.03385}, publisher={arXiv}, author={He, Kaiming and Zhang, Xiangyu and Ren, Shaoqing and Sun, Jian}, year={2015}, month=dec }

@article{vgg, title={Very Deep Convolutional Networks for Large-Scale Image Recognition}, url={http://arxiv.org/abs/1409.1556}, DOI={10.48550/arXiv.1409.1556}, note={arXiv:1409.1556 [cs]}, number={arXiv:1409.1556}, publisher={arXiv}, author={Simonyan, Karen and Zisserman, Andrew}, year={2015}, month=apr }

@article{SSL4EO, title={SSL4EO-S12: A Large-Scale Multi-Modal, Multi-Temporal Dataset for Self-Supervised Learning in Earth Observation}, url={http://arxiv.org/abs/2211.07044}, DOI={10.48550/arXiv.2211.07044}, note={arXiv:2211.07044 [cs]}, number={arXiv:2211.07044}, publisher={arXiv}, author={Wang, Yi and Braham, Nassim Ait Ali and Xiong, Zhitong and Liu, Chenying and Albrecht, Conrad M. and Zhu, Xiao Xiang}, year={2023}, month=may }

@inproceedings{Sokol_Flach_2020, title={Explainability Fact Sheets: A Framework for Systematic Assessment of Explainable Approaches}, url={http://arxiv.org/abs/1912.05100}, DOI={10.1145/3351095.3372870}, note={arXiv:1912.05100 [cs]}, booktitle={Proceedings of the 2020 Conference on Fairness, Accountability, and Transparency}, author={Sokol, Kacper and Flach, Peter}, year={2020}, month=jan, pages={56–67} }

@article{Dunlap_Zhang_2024, title={Describing Differences in Image Sets with Natural Language}, url={http://arxiv.org/abs/2312.02974}, DOI={10.48550/arXiv.2312.02974}, note={arXiv:2312.02974 [cs]}, number={arXiv:2312.02974}, publisher={arXiv}, author={Dunlap, Lisa and Zhang, Yuhui and Wang, Xiaohan and Zhong, Ruiqi and Darrell, Trevor and Steinhardt, Jacob and Gonzalez, Joseph E. and Yeung-Levy, Serena}, year={2024}, month=apr }

@article{Oikarinen_Das_Nguyen_Weng_2023, title={Label-Free Concept Bottleneck Models}, url={http://arxiv.org/abs/2304.06129}, DOI={10.48550/arXiv.2304.06129}, note={arXiv:2304.06129 [cs]}, number={arXiv:2304.06129}, publisher={arXiv}, author={Oikarinen, Tuomas and Das, Subhro and Nguyen, Lam M. and Weng, Tsui-Wei}, year={2023}, month=june }

@article{qwen_image, title={Qwen-Image Technical Report}, url={http://arxiv.org/abs/2508.02324}, DOI={10.48550/arXiv.2508.02324}, note={arXiv:2508.02324 [cs]}, number={arXiv:2508.02324}, publisher={arXiv}, author={Wu, Chenfei and Li, Jiahao and Zhou, Jingren and Lin, Junyang and Gao, Kaiyuan and Yan, Kun and Yin, Sheng-ming and Bai, Shuai and Xu, Xiao and Chen, Yilei and Chen, Yuxiang and Tang, Zecheng and Zhang, Zekai and Wang, Zhengyi and Yang, An and Yu, Bowen and Cheng, Chen and Liu, Dayiheng and Li, Deqing and Zhang, Hang and Meng, Hao and Wei, Hu and Ni, Jingyuan and Chen, Kai and Cao, Kuan and Peng, Liang and Qu, Lin and Wu, Minggang and Wang, Peng and Yu, Shuting and Wen, Tingkun and Feng, Wensen and Xu, Xiaoxiao and Wang, Yi and Zhang, Yichang and Zhu, Yongqiang and Wu, Yujia and Cai, Yuxuan and Liu, Zenan}, year={2025}, month=aug }

@inproceedings{Rasmussen_Ghahramani_2000, title={Occam’ s Razor}, volume={13}, url={https://papers.nips.cc/paper_files/paper/2000/hash/0950ca92a4dcf426067cfd2246bb5ff3-Abstract.html}, booktitle={Advances in Neural Information Processing Systems}, publisher={MIT Press}, author={Rasmussen, Carl and Ghahramani, Zoubin}, year={2000} }

@article{Li_Camunas_2025, title={Automated Detection of Visual Attribute Reliance with a Self-Reflective Agent}, url={http://arxiv.org/abs/2510.21704}, DOI={10.48550/arXiv.2510.21704}, note={arXiv:2510.21704 [cs]}, number={arXiv:2510.21704}, publisher={arXiv}, author={Li, Christy and Camuñas, Josep Lopez and Touchet, Jake Thomas and Andreas, Jacob and Lapedriza, Agata and Torralba, Antonio and Shaham, Tamar Rott}, year={2025}, month=nov }

@article{Koh_Liang_2020, title={Concept Bottleneck Models}, url={http://arxiv.org/abs/2007.04612}, DOI={10.48550/arXiv.2007.04612}, note={arXiv:2007.04612 [cs]}, number={arXiv:2007.04612}, publisher={arXiv}, author={Koh, Pang Wei and Nguyen, Thao and Tang, Yew Siang and Mussmann, Stephen and Pierson, Emma and Kim, Been and Liang, Percy}, year={2020}, month=dec }

@article{Cunningham_Sharkey_2023, title={Sparse Autoencoders Find Highly Interpretable Features in Language Models}, url={http://arxiv.org/abs/2309.08600}, DOI={10.48550/arXiv.2309.08600}, note={arXiv:2309.08600 [cs]}, number={arXiv:2309.08600}, publisher={arXiv}, author={Cunningham, Hoagy and Ewart, Aidan and Riggs, Logan and Huben, Robert and Sharkey, Lee}, year={2023}, month=oct }

@article{Lin_Dollar_2015, title={Microsoft COCO: Common Objects in Context}, url={http://arxiv.org/abs/1405.0312}, DOI={10.48550/arXiv.1405.0312}, note={arXiv:1405.0312 [cs]}, number={arXiv:1405.0312}, publisher={arXiv}, author={Lin, Tsung-Yi and Maire, Michael and Belongie, Serge and Bourdev, Lubomir and Girshick, Ross and Hays, James and Perona, Pietro and Ramanan, Deva and Zitnick, C. Lawrence and Dollár, Piotr}, year={2015}, month=feb }

@article{Dreyer_Samek_2025, title={Mechanistic understanding and validation of large AI models with SemanticLens}, volume={7}, rights={2025 The Author(s)}, ISSN={2522-5839}, DOI={10.1038/s42256-025-01084-w}, number={9}, journal={Nature Machine Intelligence}, publisher={Nature Publishing Group}, author={Dreyer, Maximilian and Berend, Jim and Labarta, Tobias and Vielhaben, Johanna and Wiegand, Thomas and Lapuschkin, Sebastian and Samek, Wojciech}, year={2025}, month=sept, pages={1572–1585}, language={en} }

@techreport{gemini3pro2026,
  author = {{Google DeepMind}},
  title  = {Gemini 3: A Family of Highly Capable Multimodal Reasoning Models},
  year   = {2026},
  url    = {https://deepmind.google/technologies/gemini/},
  note   = {Technical Report}
}

@book{Hochbaum_1997, series={Computer science}, title={Approximation Algorithms for NP-hard Problems}, ISBN={978-0-534-94968-6}, url={https://books.google.com/books?id=E2VRAAAAMAAJ}, publisher={PWS Publishing Company}, author={Hochbaum, D.S.}, year={1997}, collection={Computer science} }

@article{Merullo_Pavlick_2023, title={Linearly Mapping from Image to Text Space}, url={http://arxiv.org/abs/2209.15162}, DOI={10.48550/arXiv.2209.15162}, note={arXiv:2209.15162 [cs]}, number={arXiv:2209.15162}, publisher={arXiv}, author={Merullo, Jack and Castricato, Louis and Eickhoff, Carsten and Pavlick, Ellie}, year={2023}, month=mar }

@misc{ramaswamy_overlooked_2023,
	title = {Overlooked factors in concept-based explanations: Dataset choice, concept learnability, and human capability},
	url = {http://arxiv.org/abs/2207.09615},
	doi = {10.48550/arXiv.2207.09615},
	shorttitle = {Overlooked factors in concept-based explanations},
	number = {{arXiv}:2207.09615},
	publisher = {{arXiv}},
	author = {Ramaswamy, Vikram V. and Kim, Sunnie S. Y. and Fong, Ruth and Russakovsky, Olga},
	urldate = {2026-04-10},
	date = {2023-05-12},
	eprinttype = {arxiv},
	eprint = {2207.09615 [cs]},
}

@inproceedings{
holstege2025preserving,
title={Preserving Task-Relevant Information Under Linear Concept Removal},
author={Floris Holstege and Shauli Ravfogel and Bram Wouters},
booktitle={The Thirty-ninth Annual Conference on Neural Information Processing Systems},
year={2025},
url={https://openreview.net/forum?id=t79rMQGk4S}
}

@inproceedings{ravfogel2022linear,
  title={Linear adversarial concept erasure},
  author={Ravfogel, Shauli and Twiton, Michael and Goldberg, Yoav and Cotterell, Ryan D},
  booktitle={International Conference on Machine Learning},
  pages={18400--18421},
  year={2022},
  organization={PMLR}
}

@article{Marques-Silva_Ignatiev_2022,
	author = {Marques-Silva, Joao and Ignatiev, Alexey},
	journal = {Proceedings of the AAAI Conference on Artificial Intelligence},
	month = {Jun.},
	number = {11},
	pages = {12342-12350},
	title = {Delivering Trustworthy AI through Formal XAI},
	volume = {36},
	year = {2022}}

@inproceedings{
bassan2026unifying,
title={Unifying Complexity-Theoretic Perspectives on Provable Explanations},
author={Shahaf Bassan and Xuanxiang Huang and Guy Katz},
booktitle={The Fourteenth International Conference on Learning Representations},
year={2026},
url={https://openreview.net/forum?id=kbps463Pdf}
}

@article{elhage2022superposition,
   title={Toy Models of Superposition},
   author={Elhage, Nelson and Hume, Tristan and Olsson, Catherine and Schiefer, Nicholas and Henighan, Tom and Kravec, Shauna and Hatfield-Dodds, Zac and Lasenby, Robert and Drain, Dawn and Chen, Carol and Grosse, Roger and McCandlish, Sam and Kaplan, Jared and Amodei, Dario and Wattenberg, Martin and Olah, Christopher},
   year={2022},
   journal={Transformer Circuits Thread},
   url={https://transformer-circuits.pub/2022/toy_model/index.html}
}

@inproceedings{park2024linear,
  title={The Linear Representation Hypothesis and the Geometry of Large Language Models},
  author={Park, Kiho and Choe, Yo Joong and Veitch, Victor},
  booktitle={International Conference on Machine Learning},
  pages={39643--39666},
  year={2024},
  organization={PMLR}
}

@article{liang2022mind,
  title={Mind the gap: Understanding the modality gap in multi-modal contrastive representation learning},
  author={Liang, Victor Weixin and Zhang, Yuhui and Kwon, Yongchan and Yeung, Serena and Zou, James Y},
  journal={Advances in Neural Information Processing Systems},
  volume={35},
  pages={17612--17625},
  year={2022}
}
\bibliographystyle{tmlr}
\newpage
\appendix

\section{Proof of Theorem~\ref{thm:mono}}
\label{appendix:mono_proof}
\textbf{Theorem~\ref{thm:mono}}(Concept-based Explanations Under Monotonicity)
Given a vision model $M^{vis}$, if we assume that $M^{head}$ is monotonic, then the definitions of weak concept-based abductive and contrastive explanations can be simplified as follows:
$$\texttt{WeakConAXp}(\mathcal{X},M^{vis},v,k):=\forall(b\in\mathbb{B}).\bigwedge_{j\in\mathcal{X}}(b_j=1)\bigwedge_{j\not\in\mathcal{X}}(b_j=0)\rightarrow(M^{head}(erase(M^{enc}(v),b))=k)$$
$$\texttt{WeakConCXp}(\mathcal{X},M^{vis},v,k):=\exists(b\in\mathbb{B}).\bigwedge_{j\notin\mathcal{X}}(b_j=1)\bigwedge_{j\in\mathcal{X}}(b_j=0)\;\wedge\;(M^{head}(erase(M^{enc}(v),b))\neq k)$$

\begin{proof}
We prove each simplification separately.

\texttt{WeakConAXp.} We show equivalence between the original and simplified definitions.

$(\Rightarrow)$ (original implies simplified.) The simplified definition is a special case of the original (it restricts $b_j=0$ for $j\notin\mathcal{X}$), so the original immediately implies the simplified.

$(\Leftarrow)$ (simplified implies original.) Suppose the simplified definition holds, i.e., for the assignment $b^0$ defined by $b^0_j = 1$ for $j\in\mathcal{X}$ and $b^0_j=0$ for $j\notin\mathcal{X}$, we have $M^{head}(erase(M^{enc}(v),b^0))=k$. Now let $b$ be any assignment satisfying $b_j=1$ for all $j\in\mathcal{X}$. The assignment $b$ can be obtained from $b^0$ by setting some indices $j\notin\mathcal{X}$ from $0$ to $1$. By repeated application of the monotonicity assumption, each such change preserves the prediction $k$. Therefore $M^{head}(erase(M^{enc}(v),b))=k$.

\texttt{WeakConCXp.} We again show equivalence.

$(\Leftarrow)$ (simplified implies original.) The simplified definition provides a concrete witness $b^0$ (with $b^0_j=1$ for $j\notin\mathcal{X}$ and $b^0_j=0$ for $j\in\mathcal{X}$) satisfying $M^{head}(erase(M^{enc}(v),b^0))\neq k$. This $b^0$ also satisfies the conditions of the original definition, so the original holds.

$(\Rightarrow)$ (original implies simplified.) Suppose the original definition holds, i.e., there exists some $b^*$ with $b^*_j=1$ for $j\notin\mathcal{X}$ and $M^{head}(erase(M^{enc}(v),b^*))\neq k$. Define $b^0$ as above with $b^0_j=0$ for $j\in\mathcal{X}$. Note that $b^*$ can be obtained from $b^0$ by setting some indices $j\in\mathcal{X}$ from $0$ to $1$. The contrapositive of the monotonicity assumption states that $M^{head}(erase(M^{enc}(v),b[j\mapsto 1]))\neq k \Rightarrow M^{head}(erase(M^{enc}(v),b))\neq k$. Applying this repeatedly to walk from $b^*$ back to $b^0$ (flipping the relevant indices $j\in\mathcal{X}$ from $1$ to $0$), the assumption $M^{head}(erase(M^{enc}(v),b^*))\neq k$ yields $M^{head}(erase(M^{enc}(v),b^0))\neq k$, establishing the simplified definition.
\end{proof}

\subsection{Empirical Validation of Head Monotonicity}
\label{appendix:mono_empiric}

\begin{algorithm}[t]
   \caption{Empirical Monotonicity Test}
   \label{alg:mono_empiric}
   \KwIn{Vision Model $M^{vis} = M^{head}\circ M^{enc}$, Behavior $B_{M^{vis}}(k)$, ConXp sets $TopN(h_A)$ and $TopN(h_C)$, Vocabulary $\mathcal{C}$, Image Count $L$, Iterations $m$}
    \KwOut{Percentage of cases where (i) adding concepts does not change behavior ($O_A$); (ii) erasing concepts does not change behavior ($O_C$)  }
   $O_A \gets 0$;~ $O^*_A \gets \mathbf{0} \in \mathbb{B}^{N\times L\times m}$\;
   \ForEach(\tcp*[f]{Original ConAXps (\#changes = 0)}){$\mathcal{X}_A \in TopN(h_A)$}{
        $V \gets \{v~|~\text{ConAXp}(\mathcal{X}_A,M^{vis},v,k)\}$ s.t. $|V|=L$\;
        \ForEach{$v \in V$}{
            $\mathcal{X}'_A \gets \mathcal{X}_A$\;
            \For{$i=1$ \KwTo $m$}{
                con$_i \gets \text{RandomPickOne}( \mathcal{C} \setminus \mathcal{X}'_A )$\;
                $\mathcal{X}'_A \gets \mathcal{X}'_A \cup \{\text{con}_i\}$ \tcp*[r]{Relaxed WeakConAXps (\#changes = i)}
                ${O^*_A}(n,l,i) \gets \text{WeakConAXp}(\mathcal{X}'_A,M^{vis},v,k)$\;
            }
        }
   }
   ${O_A} \gets \frac{1}{NLm} \sum^N_{n=1} \sum^L_{l=1} \sum^m_{i=1} {O^*_A}(n,l,i) *100$; \tcp*[f]{$O_C$ obtained similarly}
\end{algorithm}

This section provides empirical evidence supporting our assumption that vision model heads are monotonic in practice. Following our test detailed in Algorithm~\ref{alg:mono_empiric}, we evaluated the $N=50$ ConAXps and ConCXps with the highest individual coverage, as these represent the most relevant explanations for human analysts. We repeated this test for ten behaviors, five of each type, spanning all models and datasets explored.\linebreak 
We considered ConXps obtained via all three explanation enumerators, and the Ortho erasure algorithm.\linebreak 
For each ConXp, we randomly chose $L=5$ images and iteratively included $m=10$ random concepts, one by one. 
On each iteration, we performed oracle calls (\texttt{WeakConAXp} and \texttt{WeakConCXp}) to check if the \textit{`relaxed'} concept sets remained valid weak explanations. %
Finally, we obtained the average proportion of cases where the assumption holds; reported in output variables $O_A$ and $O_C$ as percentages.

\textbf{Why is this a monotonicity check?} The procedure of Algorithm~\ref{alg:mono_empiric} directly probes the monotonicity assumption rather than a proxy for it. A verified ConAXp $\mathcal{X}_A$ certifies that $M^{head}(erase(M^{enc}(v),b^0))=k$ at the assignment $b^0$ defined by $b^0_j=1$ for $j\in\mathcal{X}_A$ and $b^0_j=0$ for $j\notin\mathcal{X}_A$. Augmenting $\mathcal{X}_A$ with a random concept $c$ produces $\mathcal{X}'_A=\mathcal{X}_A\cup\{c\}$, and the simplified \texttt{WeakConAXp} check on $\mathcal{X}'_A$ evaluates $M^{head}$ at $b^0[c\mapsto 1]$ which is exactly the right-hand side of the monotonicity implication for index $j=c$. 
Hence $O_A$ reports the empirical rate at which the forward direction of monotonicity holds, starting from a verified ConAXp. 
Symmetrically, for a verified ConCXp $\mathcal{X}_C$ the canonical assignment has $b^0_j=1$ for $j\notin\mathcal{X}_C$ and $b^0_j=0$ for $j\in\mathcal{X}_C$; adding a concept $c$ to $\mathcal{X}_C$ flips $b^0_c$ from $1$ to $0$, and the \texttt{WeakConCXp} check then tests whether the prediction $\neq k$ is preserved under this flip, which is the contrapositive of monotonicity.

\begin{table}[t] \centering
    \caption{Monotonicity empirical test results and standard deviation (params: $N=50$, $L=5$, and $m=10$). %
    Cases where not enough images were available to satisfy image count L are indicated with a dash (-).}
    \label{tab:mono_empiric}
    \begin{adjustbox}{max width=0.85\textwidth}
    \begin{tabular}{cccccc} \toprule
        \textbf{Behavior} & 
        $B_{M_1}$(Deer) & 
        $B_{M_2}$(Car) & 
        $B_{M_3}$(A.Crop) & 
        $B_{M_4}$(Cat) & 
        $B_{M_5}$(A.Crop) \\
        \midrule
        $O_A$ & 87.80±0.32 & 72.20±0.46 & 70.88±0.46 & 75.32±0.44 & 98.86±0.10 \\
        \midrule
        $O_C$ & 88.84±0.33 & 87.16±0.33 & 72.04±0.47 & 75.76±0.42 & 78.6±0.40 \\ 
        \midrule
        \addlinespace[6pt]
        \midrule
        \textbf{Behavior} & 
        $B_{M_1}$(Equine,Car) & 
        $B_{M_2}$(Tru,$\neg$Tru) & 
        $B_{M_3}$(River,A.Cr) & 
        $B_{M_4}$(Deer,Car) & 
        $B_{M_5}$(Lake,Past) \\
        \midrule
        $O_A$ & - & 65.46±0.49 & 52.52±0.49 & - & 99.12±0.10 \\
        \midrule
        $O_C$ & 92.32±0.28 & 93.93±0.23 & 89.96±0.30 & 83.40±0.36 & 79.64±0.40 \\ 
        \bottomrule
    \end{tabular}
    \end{adjustbox}
\end{table}

\textbf{Why these results justify our use of the assumption?} Table~\ref{tab:mono_empiric} shows that monotonicity is not strictly satisfied in either direction---both the forward rate $O_A$ and the contrapositive rate $O_C$ fall below 100\% across most behaviors. The implications for our algorithms are asymmetric: while violations on the ConAXp side require empirical justification, violations on the ConCXp side do not affect algorithm correctness at all.

For ConAXps, the universal definition requires $M^{head}(erase(M^{enc}(v),b))=k$ at every $b\in\mathbb{B}$ with $b_j=1$ for $j\in\mathcal{X}_A$, while the simplified check probes only the canonical $b^0$ defined by $b^0_j=1$ for $j\in\mathcal{X}_A$ and $b^0_j=0$ for $j\notin\mathcal{X}_A$. Empirical evidence is therefore needed to argue that the universal claim holds for the explanations our algorithms report. Each iteration of Algorithm~\ref{alg:mono_empiric} extends $\mathcal{X}_A$ with random additional concepts and runs the simplified WeakConAXp check on the larger set. Mechanically, this evaluates $M^{head}$ at a new $b$ obtained from $b^0$ by setting up to ten additional $1$-bits at indices outside $\mathcal{X}_A$. Since this $b$ still satisfies $b_j=1$ for $j\in\mathcal{X}_A$, it falls within the set of $b$'s the universal WeakConAXp definition requires the prediction to equal $k$ at. The averaged $O_A$ rate is therefore an empirical estimate of the probability that the prediction is preserved at $b$'s drawn from this neighborhood of $b^0$, a probabilistic relaxation of the universal claim. The observed rates indicate that, while strict universality cannot be guaranteed, the universal property is empirically preserved for the great majority of $b$'s in this neighborhood, supporting, in a probabilistic sense, the simplified-to-universal lift our algorithms rely on.

For ConCXps, although Table~\ref{tab:mono_empiric} shows that the contrapositive direction of monotonicity is also imperfectly satisfied, this has no consequence for algorithm correctness. Theorem~\ref{thm:mono} simplifies the WeakConCXp definition under monotonicity by replacing the existential over $b\in\mathbb{B}$ with a check at the single canonical $b^0$. While that simplification at the WeakConCXp level requires monotonicity, the corresponding simplification at the ConCXp level (i.e., for subset-minimal sets) does not, as the following theorem shows.

\begin{theorem}[ConCXp Simplification Without Monotonicity]
\label{prop:concxp_no_mono}
Without requiring the monotonicity assumption of Theorem~\ref{thm:mono}, the definition of concept-based contrastive explanation can be simplified as follows:
$$\texttt{ConCXp}(\mathcal{X},M^{vis},v,k) := \texttt{WeakConCXp}_{simp}(\mathcal{X},M^{vis},v,k) \wedge \big(\forall\,\mathcal{X}'\subsetneq\mathcal{X}.\,\neg\texttt{WeakConCXp}_{simp}(\mathcal{X}',M^{vis},v,k)\big)$$
where $\texttt{WeakConCXp}_{simp}$ denotes the simplified WeakConCXp definition from Theorem~\ref{thm:mono}. In other words, if a set is a ConCXp under the simplified WeakConCXp, it is also a ConCXp under the original WeakConCXp, and vice versa without needing monotonicity.
\end{theorem}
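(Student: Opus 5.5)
Write $W(\mathcal{X})$ for the original \texttt{WeakConCXp}$(\mathcal{X},M^{vis},v,k)$ and $S(\mathcal{X})$ for $\texttt{WeakConCXp}_{simp}(\mathcal{X},M^{vis},v,k)$. The proof rests on two elementary facts that need no monotonicity.

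\textbf{Fact 1:} $S(\mathcal{X})\Rightarrow W(\mathcal{X})$. The canonical witness $b^{\mathcal{X}}$ has $b^{\mathcal{X}}_j=1$ for $j\notin\mathcal{X}$ and $b^{\mathcal{X}}_j=0$ for $j\in\mathcal{X}$. It satisfies the constraint of the original existential, so it is also a witness for $W(\mathcal{X})$.

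\textbf{Fact 2:} $W(\mathcal{X})\Rightarrow \exists\,\mathcal{Y}\subseteq\mathcal{X}.\,S(\mathcal{Y})$. Let $b^*$ witness $W(\mathcal{X})$ and put $\mathcal{Y}:=\{j : b^*_j=0\}$. Because $b^*_j=1$ for every $j\notin\mathcal{X}$, we get $\mathcal{Y}\subseteq\mathcal{X}$. Moreover $b^*$ is exactly the canonical assignment $b^{\mathcal{Y}}$. Hence $M^{head}(erase(M^{enc}(v),b^{\mathcal{Y}}))\neq k$, which is $S(\mathcal{Y})$.

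With these facts I show that $\mathcal{X}$ is subset-minimal for $W$ if and only if it is subset-minimal for $S$.

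\textbf{($W$-minimal $\Rightarrow$ $S$-minimal).} Suppose $W(\mathcal{X})$ holds and $\neg W(\mathcal{X}')$ for all $\mathcal{X}'\subsetneq\mathcal{X}$.
\begin{itemize}
\item By Fact 2 there is $\mathcal{Y}\subseteq\mathcal{X}$ with $S(\mathcal{Y})$.
\item By Fact 1 this gives $W(\mathcal{Y})$. Minimality then forces $\mathcal{Y}=\mathcal{X}$, so $S(\mathcal{X})$ holds.
\item For any $\mathcal{X}'\subsetneq\mathcal{X}$, if $S(\mathcal{X}')$ held, Fact 1 would give $W(\mathcal{X}')$, contradicting minimality. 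So $\neg S(\mathcal{X}')$.
\end{itemize}

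\textbf{($S$-minimal $\Rightarrow$ $W$-minimal).} Suppose $S(\mathcal{X})$ holds and $\neg S(\mathcal{X}')$ for all $\mathcal{X}'\subsetneq\mathcal{X}$.
\begin{itemize}
\item Fact 1 gives $W(\mathcal{X})$.
\item Suppose, for contradiction, that $W(\mathcal{X}')$ holds for some $\mathcal{X}'\subsetneq\mathcal{X}$.
\item Fact 2 yields $\mathcal{Y}\subseteq\mathcal{X}'\subsetneq\mathcal{X}$ with $S(\mathcal{Y})$, contradicting $S$-minimality.
\end{itemize}

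The main obstacle is the first half of the ($W$-minimal $\Rightarrow$ $S$-minimal) direction. A $W$-witness for $\mathcal{X}$ need not zero out all of $\mathcal{X}$, so we must show the canonical witness for $\mathcal{X}$ itself succeeds. The key move is to reinterpret an arbitrary witness as the canonical witness of its zero-set, then use minimality to pin that zero-set to $\mathcal{X}$. Everything else is bookkeeping with subset relations.
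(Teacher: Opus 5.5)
Your proof is correct and follows essentially the same route as the paper's. Both arguments reinterpret an arbitrary witness $b$ for the original \texttt{WeakConCXp}$(\mathcal{X})$ as the canonical witness of its zero-set $\{j \mid b_j=0\}\subseteq\mathcal{X}$, and then use subset-minimality in each direction; your Facts 1 and 2 simply package the paper's single observation as two explicit lemmas.
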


\begin{proof}
We show the equivalence between the original ConCXp definition (using the original $\texttt{WeakConCXp}$) and the simplified one above. Both directions rest on the same observation: any witness $b$ for the original $\texttt{WeakConCXp}(\mathcal{X})$ that differs from the canonical $b^0$ (with $b^0_j=1$ for $j\notin\mathcal{X}$ and $b^0_j=0$ for $j\in\mathcal{X}$) yields a strictly smaller WeakConCXp $\mathcal{X}^*\subsetneq\mathcal{X}$, with the same $b$ acting as its canonical witness.

$(\Rightarrow)$ \emph{Original implies simplified.} Suppose $\mathcal{X}$ is a ConCXp under the original definition. By $\texttt{WeakConCXp}(\mathcal{X})$, some $b$ with $b_j=1$ for $j\notin\mathcal{X}$ satisfies $M^{head}(erase(M^{enc}(v),b))\neq k$. Let $\mathcal{X}^*=\{j~|~b_j=0\}$. Since $b_j=1$ for all $j\notin\mathcal{X}$, every erased index lies in $\mathcal{X}$, so $\mathcal{X}^*\subseteq\mathcal{X}$. If $\mathcal{X}^*\subsetneq\mathcal{X}$, then $b$ also witnesses $\texttt{WeakConCXp}(\mathcal{X}^*)$ (it has $b_j=1$ for all $j\notin\mathcal{X}^*$), contradicting subset-minimality of $\mathcal{X}$. Hence $\mathcal{X}^*=\mathcal{X}$, so $b=b^0$, and $\texttt{WeakConCXp}_{simp}(\mathcal{X})$ holds. For the second conjunct: if some $\mathcal{X}'\subsetneq\mathcal{X}$ satisfied $\texttt{WeakConCXp}_{simp}$, its canonical assignment would witness $\texttt{WeakConCXp}(\mathcal{X}')$, again contradicting subset-minimality.

$(\Leftarrow)$ \emph{Simplified implies original.} Suppose the simplified definition holds. The first conjunct says $\texttt{WeakConCXp}_{simp}(\mathcal{X})$, so $b^0$ witnesses $\texttt{WeakConCXp}(\mathcal{X})$, making $\mathcal{X}$ a WeakConCXp. For subset-minimality under the original definition: suppose $\mathcal{X}'\subsetneq\mathcal{X}$ is a WeakConCXp, witnessed by some $b$. Let $\mathcal{X}^*=\{j~|~b_j=0\}$. Since $b$ witnesses $\texttt{WeakConCXp}(\mathcal{X}')$, we have $b_j=1$ for all $j\notin\mathcal{X}'$, so every erased index lies in $\mathcal{X}'$, giving $\mathcal{X}^*\subseteq\mathcal{X}'\subsetneq\mathcal{X}$. Moreover, $b$ is exactly the canonical $b^{0,*}$ for $\mathcal{X}^*$, since $b_j=0$ precisely when $j\in\mathcal{X}^*$ and $b_j=1$ otherwise. So $\texttt{WeakConCXp}_{simp}(\mathcal{X}^*)$ holds, contradicting the second conjunct of the simplified definition. Hence no $\mathcal{X}'\subsetneq\mathcal{X}$ is a WeakConCXp, so $\mathcal{X}$ is subset-minimal, i.e., a ConCXp.
\end{proof}

Theorem~\ref{prop:concxp_no_mono} shows that the algorithms' use of simplified checks correctly characterizes ConCXps regardless of whether monotonicity holds. The empirical $O_C$ rates in Table~\ref{tab:mono_empiric} therefore complement $O_A$ in measuring monotonicity in both directions, but do not serve as evidence for algorithm correctness on the ConCXp side.

\textbf{Sampling design.} Each iteration of Algorithm~\ref{alg:mono_empiric} adds a random concept to $\mathcal{X}_A$ (or $\mathcal{X}_C$), producing a $b$ within a small Hamming distance of $b^0$. An alternative would be to sample $b$ uniformly from the set the relevant formal definition quantifies over. We prefer the biased design. For ConAXp, the formal definition's domain $\{b\in\mathbb{B}~|~ b_j=1 \text{ for } j\in\mathcal{X}_A\}$ has $2^{n-|\mathcal{X}_A|}$ elements (e.g., $\approx 2^{290}$ for $n=300$, $|\mathcal{X}_A|=10$), and a uniformly drawn $b$ has expected Hamming distance from $b^0$ of about $(n-|\mathcal{X}_A|)/2$. The typical uniform $b$ thus has roughly half the unconstrained concepts un-erased, putting the resulting embedding close to the model's original output where the prediction is $k$ by construction, regardless of monotonicity. Uniform sampling would therefore dilute the test with easy cases and inflate the reported rate. The interesting regime, where monotonicity violations are most likely to manifest, is near $b^0$ where the embedding is most perturbed from the original. The biased design concentrates on this regime and produces a more conservative estimate of the head's monotonicity.

\section{XpEnum Algorithm}
\label{appendix:xpenum}
We adapt the XpEnum algorithm of~\cite{ignatiev2020contrastive} to enumerate concept-based formal explanations. The original algorithm exploits the hitting set duality between abductive and contrastive explanations (AXps are minimal hitting sets of CXps and vice-versa) to simultaneously enumerate both types of explanations. We adapt this to our concept-based setting, where the entailment checks reduce to single oracle calls under the monotonicity assumption (Theorem~\ref{thm:mono}).

Algorithm~\ref{alg:xpenum} presents the adapted procedure. The algorithm maintains two collections: $AXps$, the set of ConAXps found so far, and $CXps$, the set of ConCXps found so far. At each iteration, it computes a candidate $\mathcal{S}$ by finding a minimal hitting set of the ConCXps found so far, subject to not being a superset of any previously found ConAXp. This is performed by $\texttt{FindMHS}(CXps, AXps)$, which can be implemented using a MaxSAT solver~\cite{ignatiev2020contrastive}. If no such hitting set exists, all ConAXps and ConCXps have been enumerated and the algorithm terminates.

The candidate $\mathcal{S}$ is then tested via $\texttt{WeakConAxp}$: under the monotonicity assumption, this amounts to erasing all concepts not in $\mathcal{S}$ while keeping concepts in $\mathcal{S}$ at their original values, and checking whether the prediction is preserved, i.e., $M^{head}(erase(M^{enc}(v),b))=k$ where $b_j=1$ for $j\in\mathcal{S}$ and $b_j=0$ for $j\notin\mathcal{S}$.

If $\mathcal{S}$ passes the check, it is a WeakConAXp and is shrunk to a subset-minimal ConAXp via $\texttt{ShrinkAXp}$. This is a standard deletion-based procedure: for each concept $c\in\mathcal{S}$, it checks whether $\mathcal{S}\setminus\{c\}$ is still a WeakConAXp; if so, $c$ is removed. The result is a ConAXp $\mathcal{X}\subseteq\mathcal{S}$.

If $\mathcal{S}$ fails the check, the complement $\mathcal{C}\setminus\mathcal{S}$ is guaranteed to be a WeakConCXp---erasing all concepts in $\mathcal{C}\setminus\mathcal{S}$ while keeping those in $\mathcal{S}$ changed the prediction. This set is then shrunk to a subset-minimal ConCXp via $\texttt{ShrinkCXp}$, which operates analogously: for each concept $c\in\mathcal{C}\setminus\mathcal{S}$, it checks whether the set without $c$ is still a WeakConCXp (i.e., erasing the remaining concepts still changes the prediction); if so, $c$ is removed.

\begin{algorithm}[t]
   \caption{XpEnum: Enumeration of ConAXps and ConCXps}
   \label{alg:xpenum}
   \KwIn{Model $M^{vis}$, Image $v$, Prediction $k=M^{vis}(v)$, Vocabulary $\mathcal{C}$}
   \KwOut{Sets of ConAXps and ConCXps}
   $AXps \gets \emptyset$;~ $CXps \gets \emptyset$\;
   \While{\textbf{true}}{
      $\mathcal{S} \gets \texttt{FindMHS}(CXps, AXps)$ \tcp*{MHS of $CXps$ s.t.\ $AXps$; $\bot$ if none exists}
      \lIf{$\mathcal{S} = \bot$}{\Return}
      \eIf{\texttt{WeakConAxp}$(\mathcal{S}, M^{vis}, v, k)$}{
         $\mathcal{X} \gets \texttt{ShrinkAXp}(\mathcal{S}, M^{vis}, v)$\;
         $AXps \gets AXps \cup \{\mathcal{X}\}$\;
      }{
         $\mathcal{Y} \gets \texttt{ShrinkCXp}(\mathcal{C} \setminus \mathcal{S}, M^{vis}, v)$\;
         $CXps \gets CXps \cup \{\mathcal{Y}\}$\;
      }
   }
\end{algorithm}

Under the monotonicity assumption, each call to $\texttt{WeakConAxp}$ or $\texttt{WeakConCxp}$ (within the shrinking procedures) requires a single forward pass through the model, making the algorithm practical for vision models where each oracle call involves an erasure operation followed by a prediction.

\section{Derivation of Ortho Erasure Procedure}
\label{appendix:ortho}
\subsection{Setup and problem statement}

Let $e \in \R^d$ be an embedding, and let $c_1,\dots,c_n \in \R^d$ be \emph{concept vectors}. Stack them into the matrix
\[
C \;=\; [c_1\;\;c_2\;\;\cdots\;\;c_n] \in \R^{d\times n}.
\]
For any $x\in\R^d$, the vector of concept \emph{scores} (dot products) is
\[
C\T x \in \R^n,
\qquad (C\T x)_j = c_j\T x.
\]

\paragraph{Erase one concept while preserving the others.}
Fix an index $i \in \{1,\dots,n\}$. Define the current score vector
\[
v \;:=\; C\T e \in \R^n.
\]
We want a modified embedding $r\in\R^d$ such that
\[
c_i\T r = 0,
\qquad
c_j\T r = c_j\T e \;\;\text{for all } j\neq i,
\]
and among all such $r$, we want the one with minimal Euclidean change $\norm{r-e}_2$.

Define a target score vector $t\in\R^n$ by
\[
t_j =
\begin{cases}
0 & j=i,\\
v_j & j\neq i.
\end{cases}
\]
Equivalently, if $e_i\in\R^n$ denotes the $i$-th standard basis vector, then
\[
t \;=\; v - v_i e_i.
\]
The constraints above can be written compactly as
\[
C\T r = t.
\]

\paragraph{Constrained optimization form.}
We therefore consider the projection problem
\begin{equation}
\label{eq:primal}
\min_{r\in\R^d}\;\frac12\norm{r-e}_2^2
\quad\text{s.t.}\quad
C\T r = t.
\end{equation}
Geometrically, this is the Euclidean projection of $e$ onto the affine subspace
$\{r\in\R^d : C\T r = t\}$.

\subsection{Lagrangian and KKT conditions (full column rank case)}

Introduce a Lagrange multiplier $\lambda \in \R^n$ and form the Lagrangian
\[
\mathcal{L}(r,\lambda)
=
\frac12\norm{r-e}_2^2 + \lambda\T(C\T r - t).
\]

\paragraph{Stationarity in $r$}
Differentiate with respect to $r$:
\[
\nabla_r \mathcal{L}(r,\lambda)
=
(r-e) + C\lambda.
\]
Setting this to zero yields
\begin{equation}
\label{eq:stationarity}
r = e - C\lambda.
\end{equation}

\paragraph{Primal feasibility}
The constraint is
\begin{equation}
\label{eq:feasibility}
C\T r = t.
\end{equation}
Substituting \eqref{eq:stationarity} into \eqref{eq:feasibility} gives
\[
C\T(e - C\lambda) = t
\quad\Longleftrightarrow\quad
C\T e - (C\T C)\lambda = t.
\]
Define the Gram matrix
\[
G \;:=\; C\T C \in \R^{n\times n}.
\]
Then we obtain the linear system
\begin{equation}
\label{eq:gram_system}
G\lambda = C\T e - t.
\end{equation}

\paragraph{If $G$ is invertible.}
If the columns of $C$ are linearly independent, then $G$ is invertible and
\[
\lambda = G^{-1}(C\T e - t).
\]
Plugging into \eqref{eq:stationarity} yields the closed form
\begin{equation}
\label{eq:closed_form_inv}
r
=
e - C\,G^{-1}(C\T e - t).
\end{equation}

\subsection{Rank-deficient case and the pseudoinverse}

If the concept vectors are linearly dependent (or nearly so), $G=C\T C$ may be singular,
so \eqref{eq:gram_system} can have multiple solutions $\lambda$.
However, the primal problem \eqref{eq:primal} has a \emph{unique} minimizer $r$:
the objective $\frac12\norm{r-e}_2^2$ is strictly convex in $r$ and the feasible set is affine.

A standard way to express the solution is via the Moore--Penrose pseudoinverse $G\pinv$,
which returns the minimum-norm solution of \eqref{eq:gram_system}:
\[
\lambda = G\pinv (C\T e - t).
\]
Substituting into \eqref{eq:stationarity} gives the general closed form
\begin{equation}
\label{eq:closed_form_pinv}
r
=
e - C\,G\pinv (C\T e - t),
\qquad
G = C\T C.
\end{equation}

\subsection{Specialization: erasing a single concept $c_i$}

Recall $v=C\T e$ and $t = v - v_i e_i$. Then
\[
C\T e - t
=
v - (v - v_i e_i)
=
v_i e_i
=
(c_i\T e)\, e_i.
\]
Plugging into \eqref{eq:closed_form_pinv} yields
\begin{equation}
\label{eq:single_erase}
r
=
e - (c_i\T e)\; C\,G\pinv e_i.
\end{equation}
This shows the correction is a linear combination of the concept vectors (the columns of $C$),
chosen so that only the $i$-th concept score is altered (set to zero) while the others are preserved.

\paragraph{Orthonormal concept vectors (sanity check).}
If the concept vectors are orthonormal, then $G=I$ and $G\pinv=I$, so
\[
C\,G\pinv e_i = C e_i = c_i
\quad\Longrightarrow\quad
r = e - (c_i\T e)\,c_i,
\]
which is the usual subtraction of the projection onto $c_i$.

\subsection{Erasing an arbitrary subset of concepts: mask/selector form}

Let $S \subseteq \{1,\dots,n\}$ be a subset of concept indices to erase, while preserving the others.
Define the diagonal selector (mask) matrix $P_S \in \R^{n\times n}$ by
\[
(P_S)_{jj} =
\begin{cases}
1 & j\in S,\\
0 & j\notin S.
\end{cases}
\]
Let $v=C\T e$. The target score vector $t$ that keeps the non-erased scores and zeroes the erased ones is
\[
t = (I - P_S)v.
\]
Then
\[
C\T e - t
=
v - (I - P_S)v
=
P_S v
=
P_S(C\T e).
\]
Substituting into \eqref{eq:closed_form_pinv} gives the one-shot multi-erase formula
\begin{equation}
\label{eq:subset_erase}
r
=
e - C\,G\pinv\,P_S\,(C\T e),
\qquad
G=C\T C.
\end{equation}
This is the unique minimum-change embedding that enforces $c_j\T r = 0$ for $j\in S$
and preserves $c_j\T r = c_j\T e$ for $j\notin S$.

\section{More Details on CLIP-based Representation Surrogates}
\label{appendix:CLIP_concepts}
Let $M^{CLIP}$ refer to the CLIP model, $M^{CLIP}_{img}:Img \rightarrow \mathbb{Z}$ refer to the image encoder of CLIP and $M^{CLIP}_{txt}:Txt \rightarrow \mathbb{Z}$
refer to its text encoder. For a concept $c\in \mathcal{C}$, its corresponding concept vector $\vec{c}$ is 
is a vector in $\mathbb{Z}$ given by, 
$$\vec{c}:=\frac{\Sigma_{t \in Captions(c)}M^{CLIP}_{txt}(t)}{|Captions(c)|}$$
where $Captions(c)$ refers to a set of textual captions referring to concept $c$.

We use the following set of caption templates to generate captions that refer to concepts or classes. The resulting captions are then passed through CLIP's text encoder to generate the text embedding. The actual captions are generated by replacing $\{\}$ with a class or concept name.

\begin{multicols}{2}
\begin{verbatim}
caption_templates = [
    `a bad photo of a {}.',
    `a photo of many {}.',
    `a photo of the hard to see {}.',
    `a low resolution photo of the {}.',
    `a rendering of a {}.',
    `a bad photo of the {}.',
    `a cropped photo of the {}.',
    `a photo of a hard to see {}.',
    `a bright photo of a {}.',
    `a photo of a clean {}.',
    `a photo of a dirty {}.',
    `a dark photo of the {}.',
    `a drawing of a {}.',
    `a photo of my {}.',
    `a photo of the cool {}.',
    `a close-up photo of a {}.',
    `a black and white photo of the {}.',
    `a painting of the {}.',
    `a painting of a {}.',
    `a pixelated photo of the {}.',
    `a bright photo of the {}.',
    `a cropped photo of a {}.',
    `a photo of the dirty {}.',
    `a jpeg corrupted photo of a {}.',
    `a blurry photo of the {}.',
    `a photo of the {}.',
    `a good photo of the {}.',
    `a rendering of the {}.',
    `a {} in an image.',
    `a photo of one {}.',
    `a doodle of a {}.',
    `a close-up photo of the {}.',
    `a photo of a {}.',
    `the {} in an image.',
    `a sketch of a {}.',
    `a doodle of the {}.',
    `a low resolution photo of a {}.',
    `a photo of the clean {}.',
    `a photo of a large {}.',
    `a photo of a nice {}.',
    `a photo of a weird {}.',
    `a blurry photo of a {}.',
    `a cartoon {}.',
    `art of a {}.',
    `a sketch of the {}.',
    `a pixelated photo of a {}.',
    `a jpeg corrupted photo of the {}.',
    `a good photo of a {}.',
    `a photo of the nice {}.',
    `a photo of the small {}.',
    `a photo of the weird {}.',
    `the cartoon {}.',
    `art of the {}.',
    `a drawing of the {}.',
    `a photo of the large {}.',
    `a black and white photo of a {}.',
    `a dark photo of a {}.',
    `a photo of a cool {}.',
    `a photo of a small {}.',
    `a photo containing a {}.',
    `a photo containing the {}.',
    `a photo with a {}.',
    `a photo with the {}.',
    `a photo containing a {} object.',
    `a photo containing the {} object.',
    `a photo with a {} object.',
    `a photo with the {} object.',
    `a photo of a {} object.',
    `a photo of the {} object.']
\end{verbatim}
\end{multicols}

Following ~\cite{SPLICE}, where the authors empirically found that CLIP image and text embeddings exist on two cones of the embedding space, they suggest applying a mean-centering procedure followed by a normalization operation to both concept vectors and image embeddings, using a concept mean and an image mean, respectively, computed over a entire vocabulary and all images of a dataset of interest; in our case we use the vocabulary conformed by the 13K most frequent words in MSCOCO captions 
and the RIVAL10 and EuroSAT datasets consisting of 20K and 27K images each. We apply this procedure to class vectors as well.

\begin{definition}[Mean-Centering in CLIP Space]
For any concept $c\in\mathcal{C}$, class $k\in\mathcal{K}$, or image $e\in Img$, their corresponding mean-centered vectors in CLIP's embedding space %
are vectors in $\mathbb{Z}$ given by,
$$\xi_{txt}(x):=\sigma\left(\frac{\Sigma_{t \in Captions(x)}M^{CLIP}_{txt}(t)}{|Captions(x)|}-\mu_{txt}\right), 
x\in\mathcal{C}\cup\mathcal{K}
\text{\hspace{5mm}and\hspace{5mm}}
\mu_{txt}= \frac{\Sigma_{t' \in \mathcal{C}_\text{MSCOCO}}M^{CLIP}_{txt}(t')}{|\mathcal{C}_\text{MSCOCO}|}$$
$$
\xi_{img}(e):=\sigma\left(M^{CLIP}_{img}(e)-\mu_{img}\right), 
e\in Img
\text{\hspace{5mm}and\hspace{5mm}}
\mu_{img}= \frac{\Sigma_{e' \in Img}M^{CLIP}_{img}(e')}{|Img|}$$
where $\sigma(x)=x/||x||_2$ refers to the normalization operation.
\end{definition}

\subsection{Sanity Checks for Linear Maps Between Embedding Spaces}
\label{appendix:linear_map}

We validate the linear mapping discussed in Section 2.2.1 between vision model and CLIP embedding spaces to ensure the transformation preserves concept-related information. Related work ~\cite{mangal2024concept, Merullo_Pavlick_2023} uses linear maps to transform embeddings from one model to match embeddings of another model for the same inputs. We go one step further by mapping these vectors back to the original vision model space. This inverse mapping ensures that edits in the CLIP space, such as concept erasure, remain effective after returning to the original model. We measure this consistency using Concept Activation Vectors (CAVs) ~\cite{TCAV}. Each CAV is obtained by training a Ridge linear model\footnote{See https://scikit-learn.org/stable/modules/generated/sklearn.linear\_model.Ridge.html} to distinguish embeddings containing a specific concept from those that do not. %
We first formally define Inverse Maps and CAV-based Concept Strength to provide the foundation for our three sanity checks. 

\begin{definition}[Inverse Map]
    Given a set of images $Img$, a vision model $M^{vis}$, and a linear map from vision model $M^{vis}$ to $M^{CLIP}$ embedding spaces $map_{M^{vis}\mapsto M^{CLIP}}(x)=W\cdot M^{enc}(x)+d\approx M^{CLIP}_{enc}(x):x\in Img$ denoted simply as $map(x)$, we say that its corresponding inverse map, denoted as $map^{-1}(x)$, is a linear map from $M^{CLIP}$ to vision model $M^{vis}$ embedding spaces such that $\forall_{x\in Img} map^{-1}(map(x))=x$.
\end{definition}

We explored two ways to obtain an Inverse Map. The first method relies on finding the matrix
$W_{inv}$ that solves the expression $W_{inv}\cdot W = I$. 
The best solution is the \textit{Moore-Penrose pseudoinverse}, computed via the \texttt{PyTorch} function \texttt{torch.linalg.pinv(W)}. 
However, the \texttt{PyTorch} documentation\footnote{See https://docs.pytorch.org/docs/stable/generated/torch.linalg.pinv.html} recommends instead using their least squares solver \texttt{torch.linalg.lstsq(W, I)} to obtain the pseudoinverse $W_{inv}$ because it is faster and more numerically stable than computing the solution directly. The inverse mapping is then:
$$\text{map}^{-1}_{lstsq} = W_{inv}\cdot(\text{map}(x) - d) 
\text{\hspace{5mm} s.t. \hspace{5mm}}
\min_{W_{inv}} \| W_{inv}\cdot W - I \|_F$$
The second method involves learning a new mapping by using the original training data in reverse. 
$$\text{map}^{-1}_{learnt} = \hat{W}\cdot map(x)+\hat{d} 
\text{\hspace{5mm} s.t. \hspace{5mm}}
\hat{W},\hat{d} = \argmin_{\hat{W},\hat{d}} \frac{1}{|D_{train}|} \sum_{x \in D_{train}} \|\hat{W}\cdot M^{CLIP}_{img}(x) + \hat{d} - M^{enc}(x)\|_2^2$$

\begin{definition}[CAV-based Concept Strength]
    Given a concept $c\in\mathcal{C}$, a set of images tagged by a human analyst as containing concept $c$ denoted \textit{positive set} $P_{c}$, a set of images that do not (e.g., a set of random images) denoted \textit{negative set} $N$, a vision model encoder $M^{enc}$, a linear regressor $f$ %
    trained to distinguish between the embeddings of the two sets $\{f(M^{enc}(x))=1:x\in P_c\}$ and $\{f(M^{enc}(x))=0:x\in N\}$, and an image $\psi$, we say that the CAV-based Strength of concept $c$ in image $\psi$ is $CAV_c(\psi)=f(M^{enc}(\psi))$.
\end{definition}

While both methods provide Inverse Maps that satisfy $map^{-1}(map(x))\approx x$, we decided to use map$^{-1}_{learnt}$ because it generates more stable results when concept erasure is applied in CLIP space. After all, erasing a concept in an embedding forces that embedding to be slightly out of its original distribution. We observe in Table \ref{tab:inverse_map} that map$^{-1}_{lstsq}$ do not handle such slightly out-of-distribution embeddings appropriately, and report nonsensical CAV-based Concept Strengths such as increasing the strength of a concept that has been erased, or nullifying the strength of concept $c_1$ when concept $c_2$ has been erased.

\textbf{Sanity Checks.} 
First, we check whether concepts retain similar CAV-based strength after mapping to CLIP and then back to the vision model. %
Second, we check that a concept erased in the CLIP space remains erased after mapping the embedding back to the vision model. Third, we check that erasing concept $c_i$ in CLIP space does not significantly change the strength of an unrelated concept $c_j$.
Formally expressed as:
\begin{align}
Check_1(x,c) &:= \frac{|CAV_{c}(x) - CAV_{c}(map^{-1}(map(x)))|}{|CAV_{c}(x)|}\leq \beta \\        
Check_2(x,c) &:= CAV_c(map^{-1}(map(x))) > CAV_c(map^{-1}(erase(map(x),c))) \\
Check_3(x,c_A,c_B) &:= \frac{|CAV_{c_A}(x) - CAV_{c_A}(map^{-1}(erase(map(x),c_B)))|}{|CAV_{c_A}(x)|}\leq \gamma
\end{align}

Table \ref{tab:sanity_appendix} show the resulting CAV-based Concept Strengths of different models and their corresponding linear maps. All sanity checks hold true, reporting $\beta\in[0.037,0.65]$ and $\gamma\in[0.0048,0.390]$ 
on zsCLIP and ResNet18; VGG19 shows significantly lower performance compared to them, as indicated by larger $\beta$ and $\gamma$ values. This result likely reflects the challenge of mapping between VGG's 4096-dimensional penultimate layer and the 512-dimensional embedding space of CLIP.
Related work ~\cite{Koh_Liang_2020} suggest that this mapping could improve by training a small multi-layer perceptron, but we decided to keep the mapping linear in this study.

\begin{table}[t]\centering
\caption{Comparison of methods for obtaining inverse linear maps. The subset $e\subseteq Img$ has been manually selected to contain images where $c_1$ is present and $c_2$ is not.}
\begin{adjustbox}{max width=\textwidth}
\begin{tabular}{ccccc}\toprule
Inverse Map & Concept & $CAV_{c_i}(e)$ & $CAV_{c_i}(map^{-1}(erase(map(e),c_1))$ & $CAV_{c_i}(map^{-1}(erase(map(e),c_2))$ \\ \midrule
\multirow{2}{*}{map$^{-1}_{learnt}$} 
& $c_1$=Wheels & 0.9186 & 0.7857 & 0.8445 \\
& $c_2$=Ears & -0.0146 & 0.0471 & 0.0296 \\ \midrule
\multirow{2}{*}{map$^{-1}_{lstsq}$} 
& $c_1$=Wheels & 0.9250 & 10.6065 & -0.1577 \\
& $c_2$=Ears & 0.0753 & -4.5047 & -3.5796 \\ \bottomrule
\end{tabular}
\end{adjustbox}
\label{tab:inverse_map}
\end{table}

\begin{table}[t]\centering
\caption{Sanity checks for linear mapping between embedding spaces. In each row, the subset $e\subseteq Img$ has been manually selected to contain images where $c_1$ is present and $c_2$ is not.} %
\begin{adjustbox}{max width=\textwidth}
\begin{tabular}{cccccc}\toprule
Backbone & Concept & $CAV_{c_i}(e)$ & $CAV_{c_i}(map^{-1}(map(e))$ & $CAV_{c_i}(map^{-1}(erase(map(e),c_1))$ & $CAV_{c_i}(map^{-1}(erase(map(e),c_2))$ \\ \midrule
\multirow{2}{*}{zsCLIP} & 
$c_1$=Wings & 0.1076 & 0.1036 & -0.3502 & 0.0632 \\
& $c_2$=Horns & 0.7588 & 0.5080 & 0.5155 & 0.3196 \\ \midrule
\multirow{2}{*}{ResNet18} 
& $c_1$=Wings & 0.1110 & 0.1837 & -0.2726 & 0.1355 \\
& $c_2$=Horns & 0.7661 & 0.6457 & 0.6101 & 0.4179 \\ \midrule
\multirow{2}{*}{VGG19} 
& $c_1$=Wings & 0.0029 & 0.3345 & 0.1870 & 0.3388 \\
& $c_2$=Horns & 0.9873 & 0.4368 & 0.4347 & 0.2805 \\ \bottomrule
\end{tabular}
\end{adjustbox}
\label{tab:sanity_appendix}
\end{table}

\section{Concept Vocabulary}
\label{sec:vocabulary}

Identifying which and how many concepts an ideal vocabulary should contain remains an open problem, for which various approaches have been utilized in the literature: leveraging human-labeled datasets ~\cite{mangal2024concept}, semantic pruning from a base dictionary ~\cite{SPLICE, Oikarinen_Das_Nguyen_Weng_2023}, interpreting features from learned dictionaries ~\cite{Cunningham_Sharkey_2023}, asking domain experts (dermatologists, radiologists, and birders) for guidelines used in the field (visual melanoma detection, osteoarthritis severity measurement from knee x-rays, and identify bird species) ~\cite{Dreyer_Samek_2025, Koh_Liang_2020}, and relying on Language Models' output ~\cite{Li_Camunas_2025, Dunlap_Zhang_2024}. %

In this work, we build upon the vocabulary design from ~\cite{SPLICE} which starts with a task-agnostic base vocabulary and then prunes it. This ensures broad applicability of the vocabulary across diverse downstream prediction tasks.  The base vocabulary comprises of approximately 13K words that appear most frequently in MSCOCO captions ~\cite{Lin_Dollar_2015}. To ensure all words are meaningful, we intersect this set with CoreWordNet ~\cite{corewordnet1,corewordnet2}, a lexicon where each word possesses a well-defined meaning. This process filters out common function words (e.g., `a', `the', `in') that lack conceptual weight. Finally, to minimize redundancy, we use CLIP text embeddings to compute cosine similarity between all pairs of concepts and, for each pair where similarity exceeds $0.9$, we prune the less frequent concept. We do the same for concepts too similar to the target classes. %
Others ~\cite{SPLICE, Oikarinen_Das_Nguyen_Weng_2023} have proposed similar concept filtering processes to prevent duplicate concepts. %
After pruning, we end up with an \emph{intermediate} vocabulary of 1500 concepts. 

\textbf{Determining Optimal Vocabulary Size.} 
There exists a size trade-off between a vocabulary comprehensive enough to explain most instances of a behavior of interest and one small enough to ensure that enumeration of explanations remains computable within a given time-out.
We define the following test to evaluate if a vocabulary contains enough concepts. 
\begin{definition}[Vocabulary Size Test]
\label{defn:vocab_test}
Given a concept vocabulary $\mathcal{C}$, a set of images $Img$, and a model $M^{vis}$, we say that the vocabulary is $\alpha$-suitably sized if,
$$
\frac{|\{v \in Img \mid M^{vis}(v)=k \wedge M^{head}(erase(M^{enc}(v),\mathbf{0}))\neq k \}|}{|Img|}\geq\alpha
$$
\end{definition}
where $\mathbf{0}$ is the zero vector of length $|\mathcal{C}|$ indicating that all the concepts in the vocabulary should be erased. The intuition is that even if erasing all concepts does not cause the vision model to change its prediction on an $\alpha$-sized fragment of the dataset, then the concept vocabulary is not expressive enough. Note that if erasing all the concepts in an image's embedding causes the prediction to change, then it indicates that there is at least one potential contrastive and abductive explanation for that image---the one that includes all concepts.

Given this test, we proceed to further prune the intermediate vocabulary of 1500 concepts such that it is as small as possible while ensuring high $\alpha$ values (where $\alpha$ is specified by the analyst). To do so, we first order the concepts in the intermediate vocabulary and then consider each of the vocabularies comprising of the first $n$ concepts for all $n \leq 1500$. To order the concepts, this study leverages the empirical observation that ordering concepts by their average absolute activation strength across a dataset, with respect to a specific model, results in higher $\alpha$ values for that model than baselines such as frequency-based ordering of concepts, e.g., ~\cite{SPLICE} prioritize concepts based on frequency over MSCOCO captions. 
Our ordering criterion is in line with the pruning procedure used by ~\cite{Oikarinen_Das_Nguyen_Weng_2023}, where concepts were filtered-out from their vocabulary if their average top-5 activation strength was below some dataset-specific cut-off value, and ~\cite{GeoVLM25}, where they define the \textit{energy of concept $i$ in a given dataset} as its average activation strength across that dataset. 

\begin{figure}[t] \centering
    \begin{subfigure}[t]{0.24\linewidth}\centering
        \includegraphics[width=\linewidth]{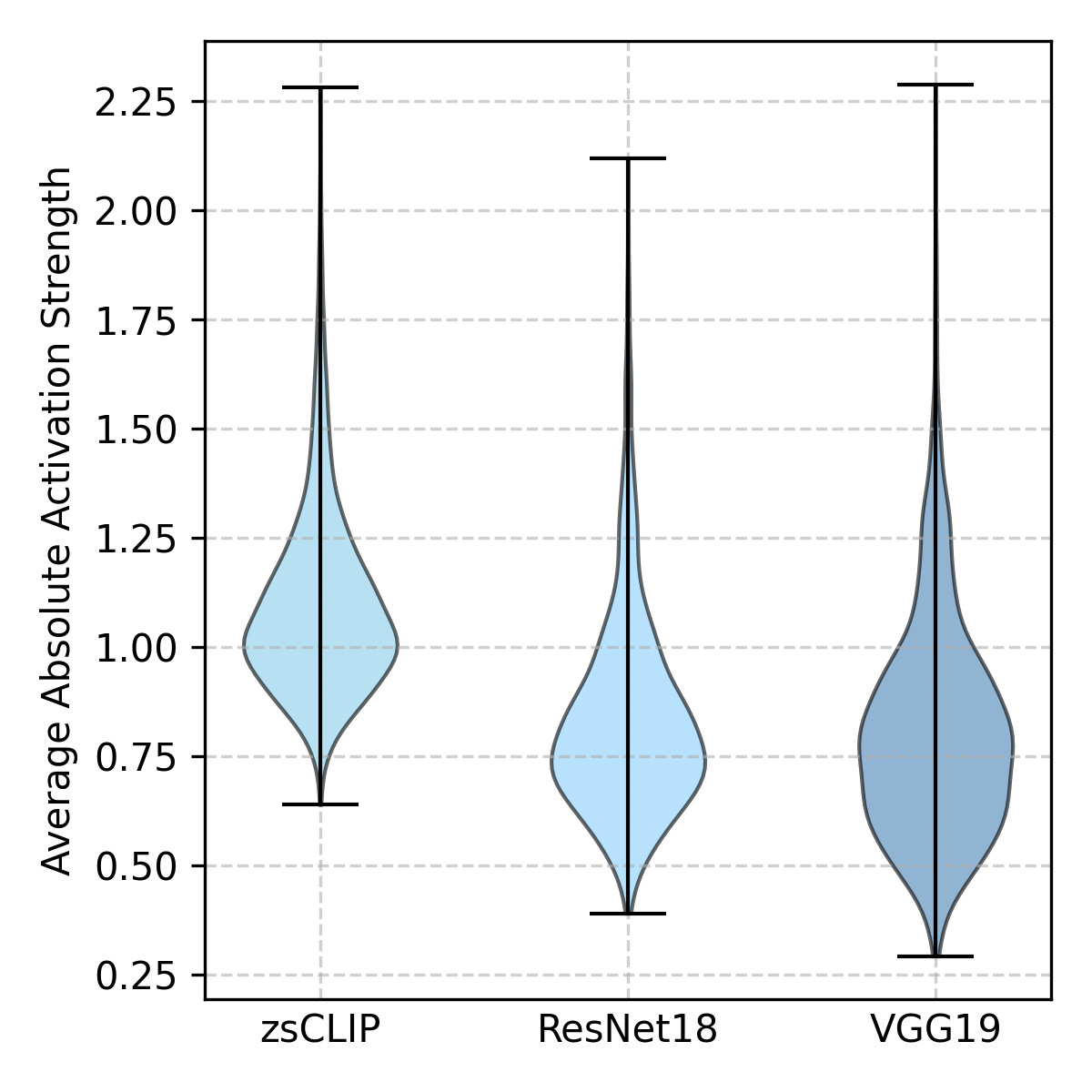}
        \caption{Intermediate vocabulary Average Absolute Activation Strength distribution}
    \end{subfigure}\hfill
    \begin{subfigure}[t]{0.24\linewidth}\centering
        \includegraphics[width=\linewidth]{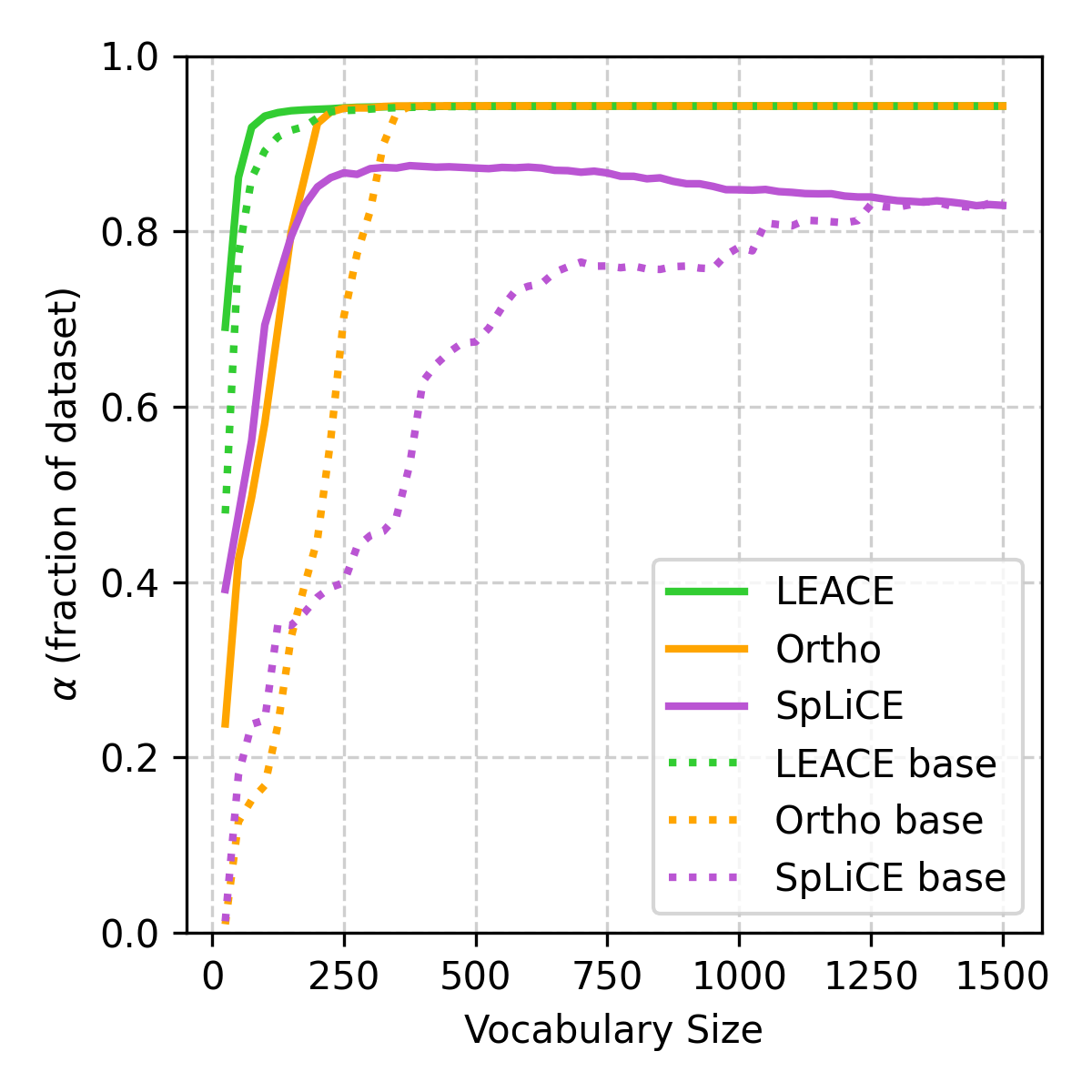}
        \caption{Explainable dataset fraction $\alpha$ vs. Vocabulary Size for zero-shot CLIP} 
    \end{subfigure}\hfill
    \begin{subfigure}[t]{0.24\linewidth}\centering
        \includegraphics[width=\linewidth]{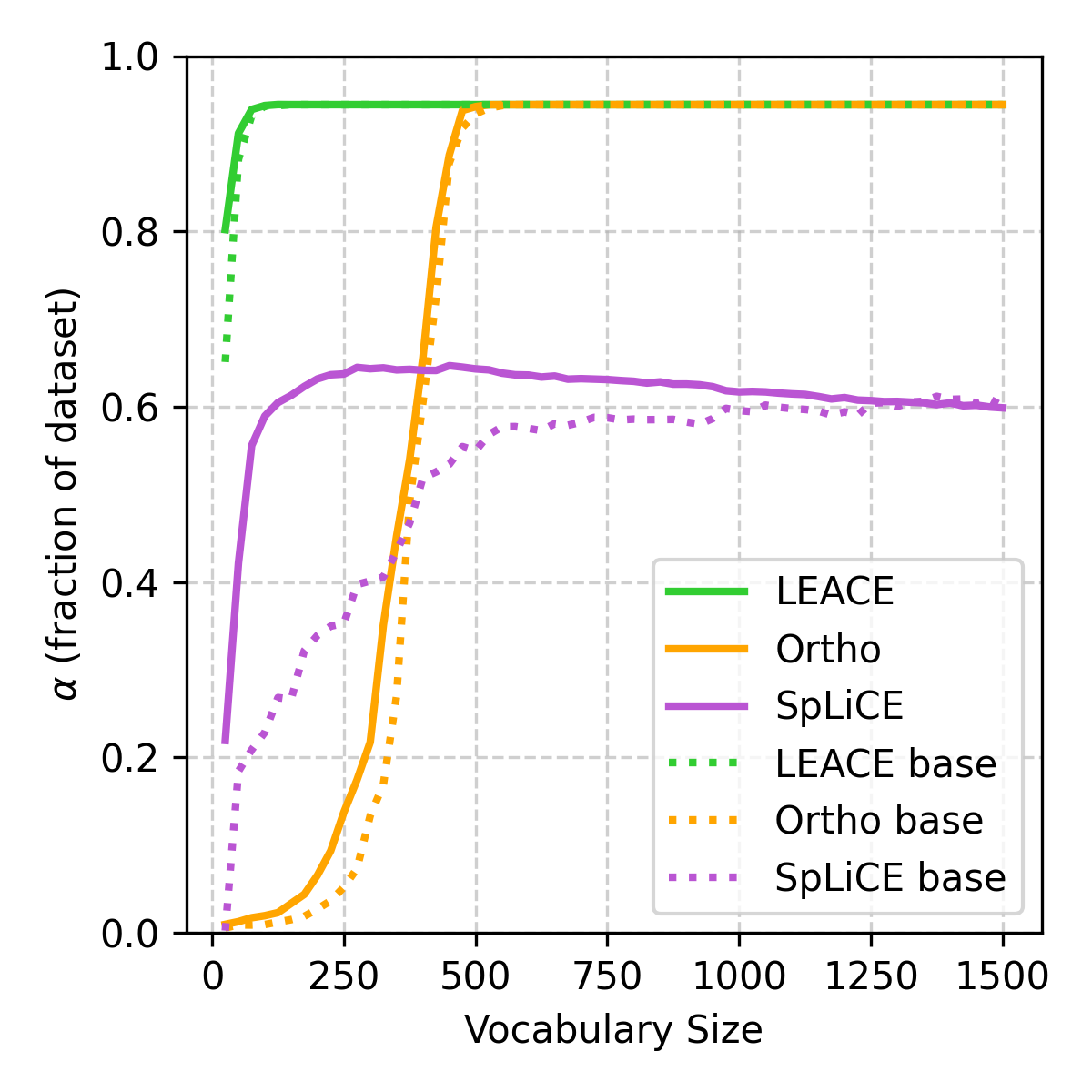}
        \caption{Explainable dataset fraction $\alpha$ vs. Vocabulary Size for ResNet18} 
    \end{subfigure}\hfill
    \begin{subfigure}[t]{0.24\linewidth}\centering
        \includegraphics[width=\linewidth]{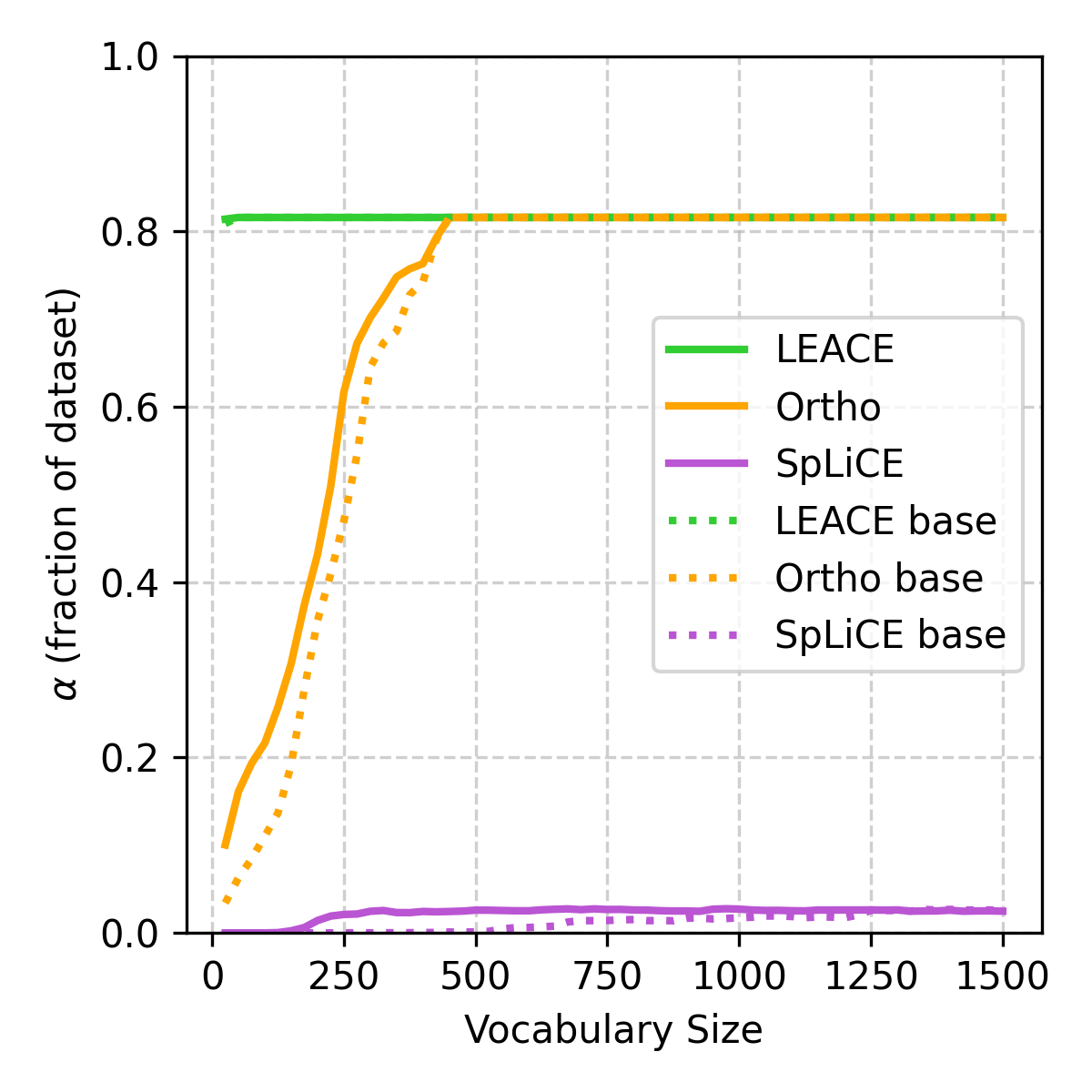}
        \caption{Explainable dataset fraction $\alpha$ vs. Vocabulary Size for VGG19} 
    \end{subfigure}
    \caption{Selecting concepts based on (a) average absolute activation strength leads to optimal $\alpha$ values across models (b–d), including (b) zero-shot CLIP, (c) ResNet18, and (d) VGG19 on the dataset \textit{RIVAL10} (higher $\alpha$ is better). Baseline (dotted line in b–d): frequency-based criteria for ordering concepts.%
    }
    \label{fig:energy_comparison}
\end{figure}

We repeat this process for each of the three erasure algorithms detailed in Subsection \ref{sec:erasure_algos}. %
Note, when testing with SpLiCE-based erasure, we have to account for the fact that the embedding can change even if no concepts are erased since SpLiCE first computes a sparse decomposition of the original embedding and then computes a new embedding from this sparse decomposition.  We discard any image for which SpLiCE's concept decomposition leads to a prediction different from that of the unmodified embedding, i.e., such images are not included in the numerator of the expression in Definition~\ref{defn:vocab_test}. 

Figure~\ref{fig:energy_comparison} shows the results of the test for different sizes of the vocabulary. Consider Figure \ref{fig:energy_comparison}b, it shows that for zero-shot CLIP, a vocabulary of 50 concepts leads to $\alpha=0.875$ using LEACE, but results in $\alpha < 0.5$ for Ortho and SpLiCE. On the other hand, although selecting 250 or more concepts leads to the highest $\alpha$ values, the computation time for enumerating all explanations given such large vocabularies can be prohibitive. A sweet spot occurs at 150 concepts, where all three erasure procedures reach an $\alpha$ value of at least 0.75. In contrast, using the baseline criterion (dotted line), 150 concepts result in an $\alpha$ below 0.375 for two erasure methods. Reaching the $\alpha=0.75$ mark with the baseline requires 650 concepts. Our approach uses the vocabulary sizes detailed in Table \ref{tab:models} for every model and erasure method combination. These vocabularies are used for all experiments described in the following sections. 

\textbf{Customized Vocabularies.} 
Beyond the specific vocabularies used in our evaluation, a key strength of our approach is its flexibility, as it is not restricted to a specific vocabulary. Concepts can be selected to suit specific needs, such as limiting the set to visual concepts for better visual grounding or incorporating domain-specific concepts to improve explanatory accuracy in specialized tasks, such as melanoma classification \cite{Dreyer_Samek_2025}. This adaptability allows for the personalization of explanations to match the background knowledge of different audiences, e.g., by selecting accessible concepts in educational settings. Furthermore, the ability to adjust the vocabulary aligns with the definitions of interactive and actionable explanations found in the literature \cite{Sokol_Flach_2020}, as it enables users to refine granularity to suit their needs and provides concrete actions toward a desired outcome.

\section{Maximum Coverage@K Algorithm}
\label{appendix:max_coverage}

We formulate the selection of explanations in Section \ref{sec:coverage} as an instance of the Maximum Coverage Problem. Given a collection of subsets $\{b_{x_1}, \dots, b_{x_n}\}$ representing images in $B_{M^{vis}}$ and an integer $K$, the goal is to select a subset of at most $K$ explanations that maximizes the total number of unique images covered. Since this problem is NP-hard, we employ a greedy strategy that iteratively selects the next explanation with the highest marginal gain in image coverage. This approach, adapted in Algorithm \ref{alg:max_coverage} to our needs, is one of the fastest yet simple to implement polynomial time approximations \cite{Hochbaum_1997}.

\begin{algorithm}[t]
    \caption{Maximum Coverage@K (Greedy Selection)}
    \label{alg:max_coverage}
    \KwIn{Behavior $B_{M^{vis}}$, Explanation sets $\hat{\mathcal{X}}_A$ and $\hat{\mathcal{X}}_C$, Coverage subsets $\{b_{x} \mid x \in \hat{\mathcal{X}}_A \cup \hat{\mathcal{X}}_C\}$, Size $K$}
    \KwOut{Pair of subsets $(S_A, S_C)$}
    
    $S_A \gets \emptyset$, $U_{covered} \gets \emptyset$\;
    \For{$i = 1$ \KwTo $K$}{
        $x \gets \arg \max_{x \in \hat{\mathcal{X}}_A \setminus S_A} | b_x \setminus U_{covered} |$\ \tcp*{Select $x$ with highest marginal gain}
            
        $S_A \gets S_A \cup \{x\}$\;
        $U_{covered} \gets U_{covered} \cup b_{x}$\;
    }
    
    $S_C \gets \emptyset$, $U_{covered} \gets \emptyset$\;
    \For{$i = 1$ \KwTo $K$}{
        $x \gets \arg \max_{x \in \hat{\mathcal{X}}_C \setminus S_C} | b_x \setminus U_{covered} |$\ \tcp*{Select $x$ with highest marginal gain}
            
        $S_C \gets S_C \cup \{x\}$\;
        $U_{covered} \gets U_{covered} \cup b_{x}$\;
    }
    \Return{$(S_A, S_C)$}\;
\end{algorithm}

\section{Additional Experimental Results}
\label{appendix:extended}

The main text employs three reductions when reporting results: omitting model $M_1$, including only a subset of the explored behaviors, and collapsing results by enumeration algorithm. This compact way of displaying information aids readers' understanding at the expense of not reporting data in a fine-grained way. 

This appendix provides the fine-grained results omitted from the main text. There are 12 behaviors included, 4 for each vision model backbone: ResNet18, VGG19, and zero-shot CLIP. These 4 behaviors are divided not only into two correct and two incorrect classification behaviors, but also into two behaviors per dataset explored: RIVAL10 and EuroSAT. This arrangement provides one behavior for each combination, so the reader has a small but complete picture of the empirical results. 
This arrangement applies to Figures~\ref{fig:sup_RQ1}~to~\ref{fig:CFL}; with the exception of Figure~\ref{fig:sup_time} and Table~\ref{tab:additional_avg_std}, which contain just eight behaviors each, complementing the four shown in Figure~\ref{fig:time_comparison} and Table~\ref{tab:total_xps}, respectively. 
For the rest of this section, we define $M_1$ as CLIP used as a zero-shot classifier. 
In this \textit{ad-hoc} vision model, the vision encoder is CLIP itself, and the vision head is an argmax of the cosine similarity between an embedding and the class vectors. 

\begin{definition}[Zero-shot Classification with CLIP]
We define the model $M^{CLIP}$ as a function of type $Img\rightarrow\mathcal{K}$ which can be decomposed into $M^{CLIP} = M^{CLIP}_{head}\circ M^{CLIP}_{enc}$ where $M^{CLIP}_{enc}:Img\rightarrow \mathbb{Z}$ and $M^{CLIP}_{head}:\mathbb{Z}\rightarrow\mathcal{K}$ where $\mathbb{Z}$ is the representation (or embedding) space.
$$M^{CLIP}_{enc}(x)=z,x\in Img,z\in \mathbb{Z} \text{\hspace{5mm}and\hspace{5mm}}
M^{CLIP}_{head} = \arg\max_{i} cos(z, \vec{k_i}), k_i\in\mathcal{K}$$
where $\vec{k_i}$ is the i-th class vector (obtained the same way as concept vectors, Appendix \ref{appendix:CLIP_concepts}) of the class set $\mathcal{K}$.    
\end{definition}

\begin{table}[t] \centering
    \caption{Average quantity of explanations found per image and its standard deviation for additional behaviors.}
    \label{tab:additional_avg_std}
    \begin{adjustbox}{max width=\textwidth}
    \begin{tabular}{cc ccc ccc ccc} \toprule
        \multicolumn{2}{c}{} & \multicolumn{3}{c}{\textbf{Ortho}} & \multicolumn{3}{c}{\textbf{SpLiCE}} & \multicolumn{3}{c}{\textbf{LEACE}} \\
        \cmidrule(lr){3-5} \cmidrule(lr){6-8} \cmidrule(lr){9-11}
        \textbf{Behavior} &  & 
        \textbf{XpEnum} & \textbf{XpSatEn.} & \textbf{NaiveEn.} & 
        \textbf{XpEnum} & \textbf{XpSatEn.} & \textbf{NaiveEn.} & 
        \textbf{XpEnum} & \textbf{XpSatEn.} & \textbf{NaiveEn.} \\
        \midrule
        \multirow{2}{*}{$B_{M_3}$(AnnualCrop)}%
        & ConAXps & 11.9±6.8 & 173.9±18.4 & 57.0±30.0 & 12.3±2.9 & 18.9±2.5 & 1.7±18.4 & - & - & - \\
        \cmidrule(lr){2-11}
        & ConCXps & 6.2±3.6 & 67.2±14.5 & 267.7±16.5 & 4.3±6.3 & 8.5±4.4 & 1.6±19.5 & 10.0±6.1 & 77.7±9.7 & 723.5±12.0 \\
        \midrule
        \multirow{2}{*}{$B_{M_3}$(River,AnnualCrop)}%
        & ConAXps & 170.0±0.2 & 1480.3±0.8 & 21.1±1.5 & - & - & - & - & - & - \\
        \cmidrule(lr){2-11}
        & ConCXps & 20.6±0.2 & 254.2±0.9 & 90.6±1.3 & - & - & - & 10.0±2.4 & 45.3±5.2 & 1349.3±6.2 \\
        \midrule
        \multirow{2}{*}{$B_{M_4}$(Deer,Car)}%
        & ConAXps & 16.6±0.2 & 25.4±0.5 & 99.0±0.5 & - & - & - & - & - & - \\
        \cmidrule(lr){2-11}
        & ConCXps & 6.6±0.2 & 13.2±0.9 & 251.0±0.5 & - & - & - & 10.0±0.1 & 25.6±0.8 & 344.2±0.5 \\
        \midrule
        \multirow{2}{*}{$B_{M_5}$(AnnualCrop)}%
        & ConAXps & 6.6±4.3 & 24.7±8.6 & 223.5±4.7 & - & - & - & - & - & - \\
        \cmidrule(lr){2-11}
        & ConCXps & 3.8±1.3 & 24.5±6.2 & 260.0±3.4 & - & - & - & 9.9±4.5 & 34.7±9.3 & 1354.2±7.5 \\
        \midrule
        \multirow{2}{*}{$B_{M_1}$(Deer)}%
        & ConAXps & 60.4±5.4 & 1432.4±14.4 & 123.2±23.3 & 29.1±0.6 & 65.2±1.2 & 0.0±4.6 & - & - & - \\
        \cmidrule(lr){2-11}
        & ConCXps & 8.7±0.2 & 1096.2±2.9 & 3.6±1.3 & 20.2±6.1 & 66.8±7.6 & 15.2±24.2 & 10.0±0.6 & - & 206.2±23.5 \\
        \midrule
        \multirow{2}{*}{$B_{M_1}$(Equine,Car)}%
        & ConAXps & 38.1±0.3 & 70.7±0.7 & 122.4±0.7 & 0.2±0.0 & 0.4±0.0 & 1.1±1.0 & - & - & - \\
        \cmidrule(lr){2-11}
        & ConCXps & 8.3±0.0 & 22.4±0.5 & 95.8±0.5 & 0.2±0.0 & 0.3±0.0 & 101.2±0.3 & 10.0±0.3 & 27.7±1.1 & 300.0±0.7 \\
        \midrule
        \multirow{2}{*}{$B_{M_1}$(Frog)}%
        & ConAXps & 84.5±5.8 & 1711.6±15.3 & 70.7±20.1 & 15.4±0.4 & 27.4±0.5 & 0.0±0.0 & - & - & - \\
        \cmidrule(lr){2-11}
        & ConCXps & 10.8±0.1 & 1910.5±2.2 & 0.1±2.1 & 14.0±12.5 & 30.2±11.9 & 30.0±52.4 & 10.0±0.8 & - & 75.0±33.9 \\
        \midrule
        \multirow{2}{*}{$B_{M_1}$(PermaCrop,Highway)}%
        & ConAXps & 24.1±6.2 & 139.8±8.3 & 16.3±16.3 & 15.7±1.7 & 28.9±2.0 & 1.0±9.0 & - & - & - \\
        \cmidrule(lr){2-11}
        & ConCXps & 9.4±2.2 & 276.3±2.9 & 19.8±5.3 & 11.0±2.9 & 24.9±3.0 & 7.1±5.6 & - & - & 83.0±8.3 \\
        \bottomrule
    \end{tabular}
    \end{adjustbox}
\end{table}

The rest of this appendix is divided into 8 subsections: 
i-iv) supplementary results for RQs 1-4, 
v) an in-depth discussion about the efficiency of computing ConXps, provided as an extension to our RQ5 in Section \ref{sec:RQ5}, 
vi) ``Relative Cumulative Frequency at Length $K$'', a complementary metric not described in the main text, 
vii) ``Signed Concept-based Explanations'' examples, and 
viii) ``Qualitative Analysis'', with further discussion and use cases about how ConXps can assist human analysts in evaluating vision models' behaviors.

\subsection{RQ1: Generalizability}
\label{appendix:extended_rq1}

Supplementary results for the Generalizability@K metric across all explored behaviors are reported in Figure~\ref{fig:sup_RQ1}. Each behavior is represented as a $2\times3$ matrix of subplots, where the columns correspond to erasure algorithms, and the rows correspond to ConCXps and ConAXps, respectively. Inside each subplot, results for all three explanation enumeration algorithms are presented, when available. The data consists of five columns, each one with a Generalizability score at $K\in[1,5]$; higher is better, following the logic that a high value of Intersection-over-Union (IoU) tells that those ConXps are applicable across different samples of the same behavior, therefore suggesting that they capture the essence of why the model exhibits a specific behavior. Note that some individual bars appear to be missing; this is the case when their IoU value is equal to zero. See Section~\ref{sec:quantitative} for a discussion about why some whole series are missing, e.g., LEACE ConAXps. 

\begin{figure}[t] \centering
    \includegraphics[width=0.5\textwidth,trim={2mm 11cm 2mm 2mm},clip]{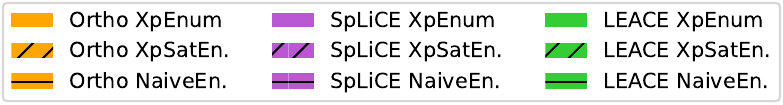}

    \includegraphics[width=0.24\linewidth]{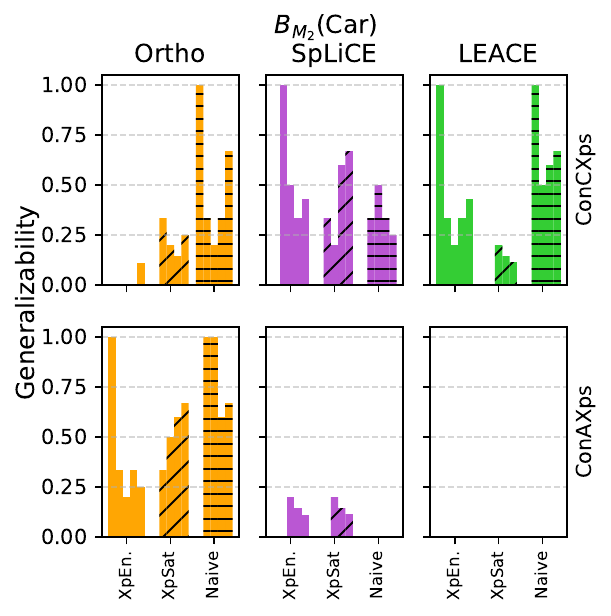}\hfill
    \includegraphics[width=0.24\linewidth]{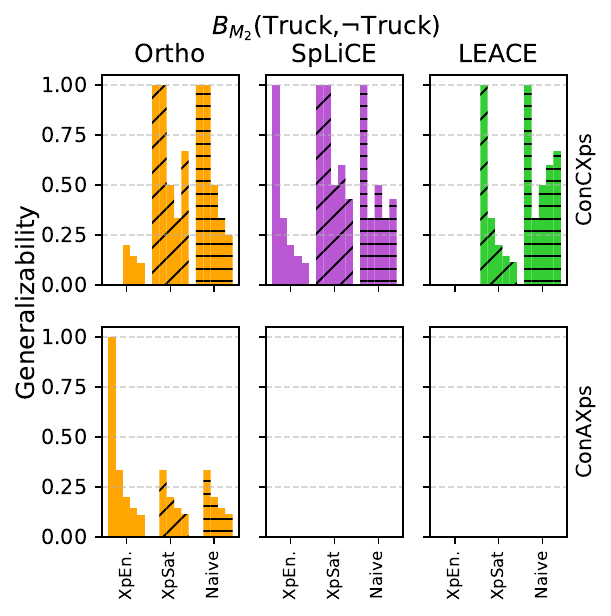}\hfill
    \includegraphics[width=0.24\linewidth]{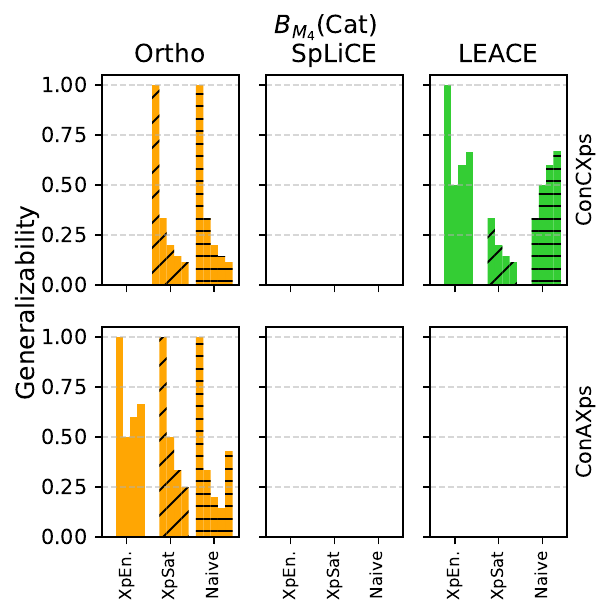}\hfill
    \includegraphics[width=0.24\linewidth]{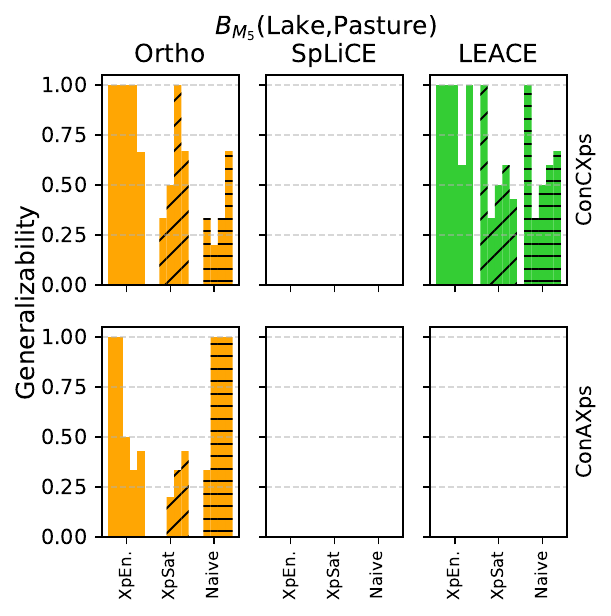}

    \includegraphics[width=0.24\linewidth]{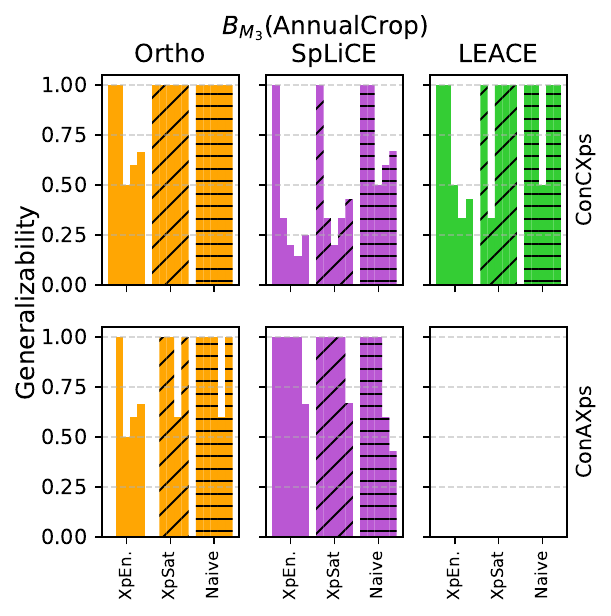}\hfill
    \includegraphics[width=0.24\linewidth]{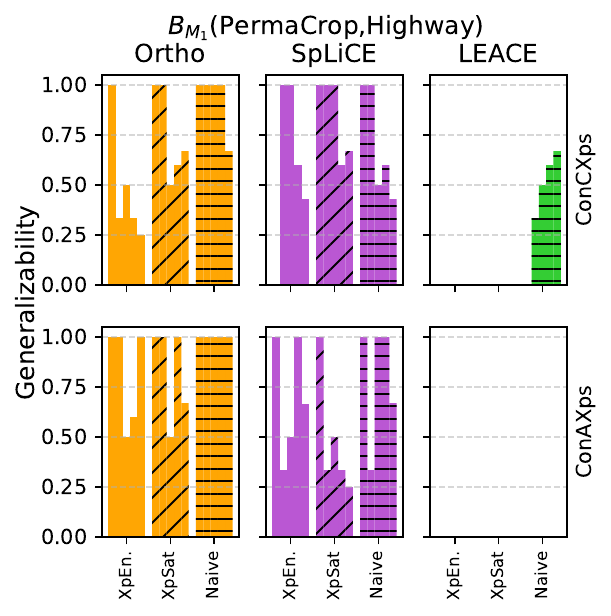}\hfill
    \includegraphics[width=0.24\linewidth]{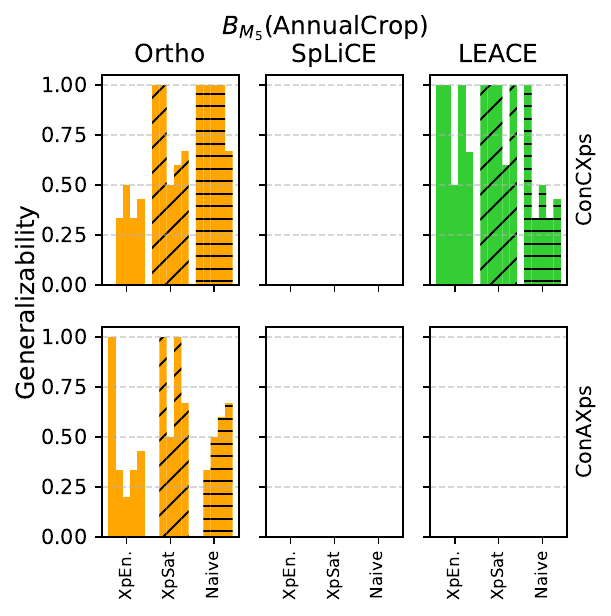}\hfill
    \includegraphics[width=0.24\linewidth]{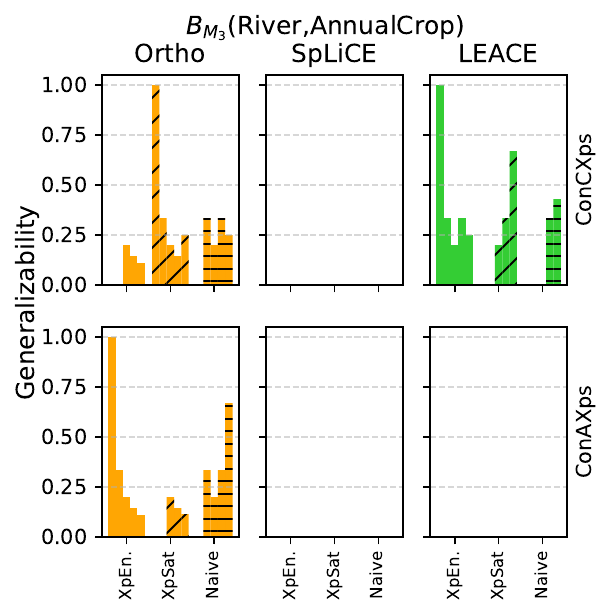}

    \includegraphics[width=0.24\linewidth]{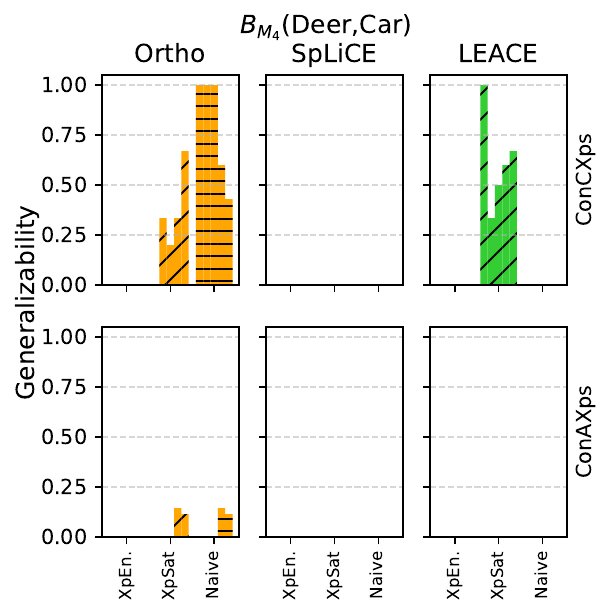}\hfill
    \includegraphics[width=0.24\linewidth]{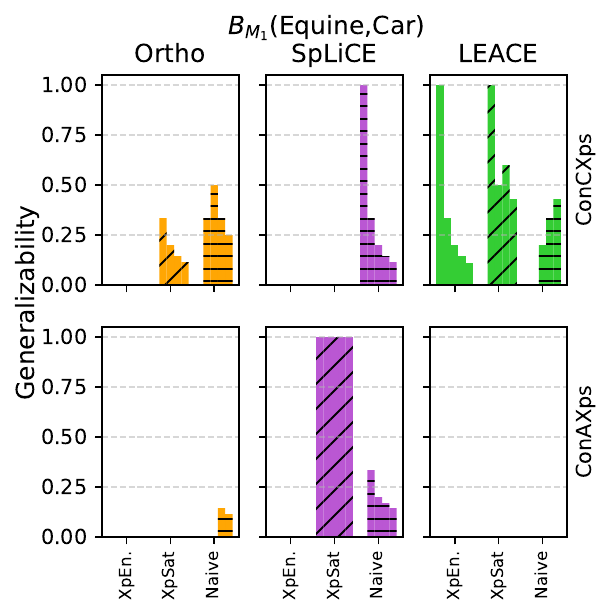}\hfill
    \includegraphics[width=0.24\linewidth]{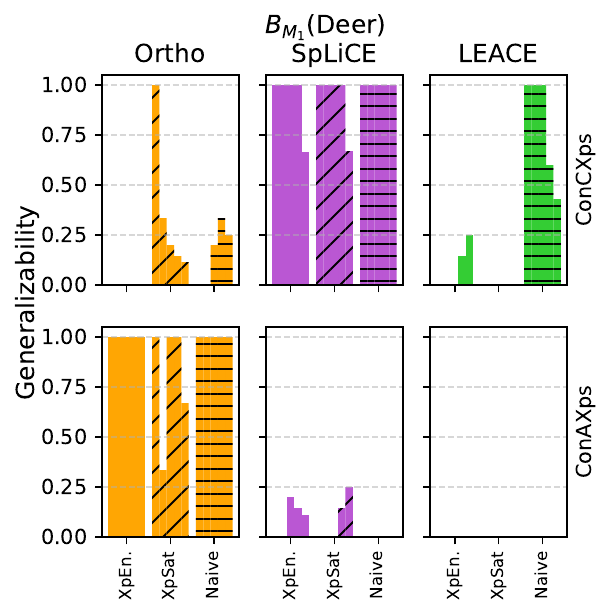}\hfill
    \includegraphics[width=0.24\linewidth]{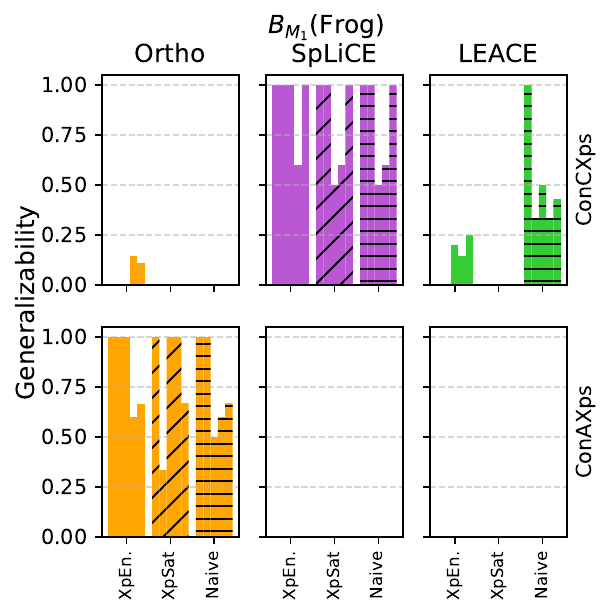}
    
    \caption{Supplementary experimental results for RQ1. (Metric: Generalizability@K)}
    \label{fig:sup_RQ1}
\end{figure}

\subsection{RQ2: Coverage}
\label{appendix:extended_rq2}

Supplementary results for the MaximumCoverage@K and Individual Coverage metrics across all explored behaviors are reported in Figures \ref{fig:sup_RQ2_1} and \ref{fig:sup_RQ2_2}, respectively. In both figures, each behavior is represented as a line plot containing up to 18 series. Each series reports an individual combination of three factors: i) erasure algorithm, shown using different colors, ii) explanation enumerator, differentiated via line patterns (straight, dashed, and dotted), and iii) ConAXps and ConCXps shown with and without markers, respectively. 

Figure \ref{fig:sup_RQ2_1} shows that LEACE $\times$ NaiveEnum $\times$ ConCXps (green dotted line with no markers) and Ortho $\times$ XpSatEnum $\times$ ConAXps (black dashed line with orange markers) consistently report high MaximumCoverage@K values, while Ortho $\times$ XpEnum $\times$ ConCXps (orange line with no markers) and SpLiCE $\times$ XpEnum $\times$ ConAXps (black line with purple markers) underperform in the majority of the cases. In addition to that, Figure \ref{fig:sup_RQ2_2} shows Individual Coverage values that are consistent with the previous observation.

\begin{figure}[t] \centering
    \includegraphics[width=\textwidth,trim={2mm 10cm 2mm 2mm},clip]{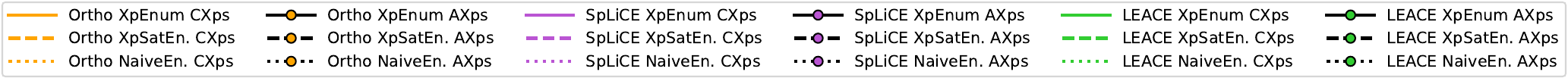}

    \includegraphics[width=0.24\linewidth]{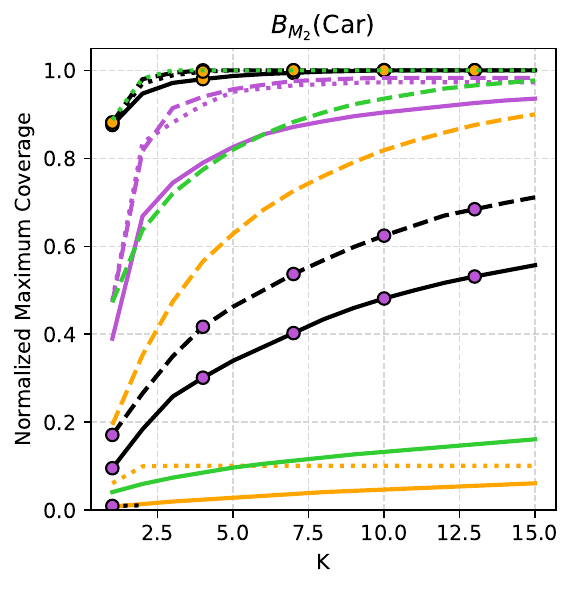}\hfill
    \includegraphics[width=0.24\linewidth]{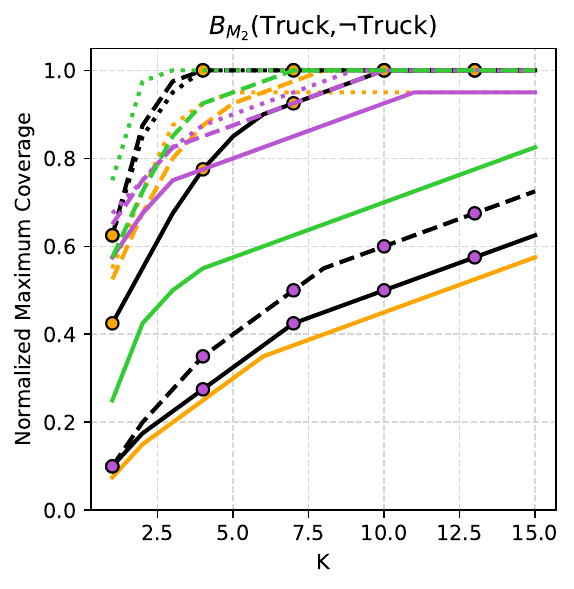}\hfill
    \includegraphics[width=0.24\linewidth]{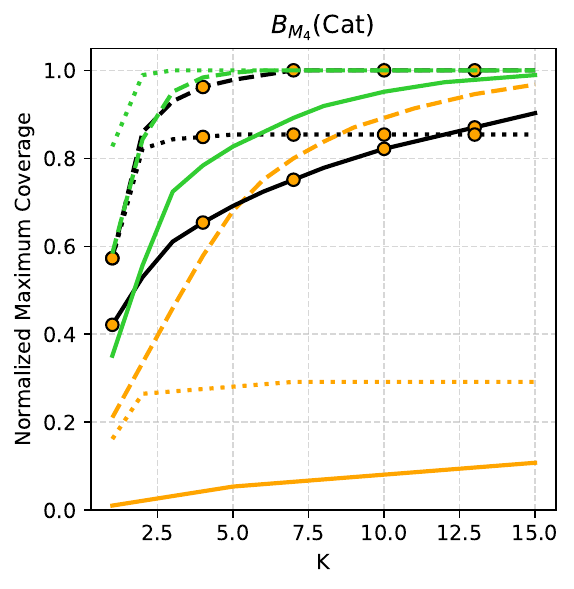}\hfill
    \includegraphics[width=0.24\linewidth]{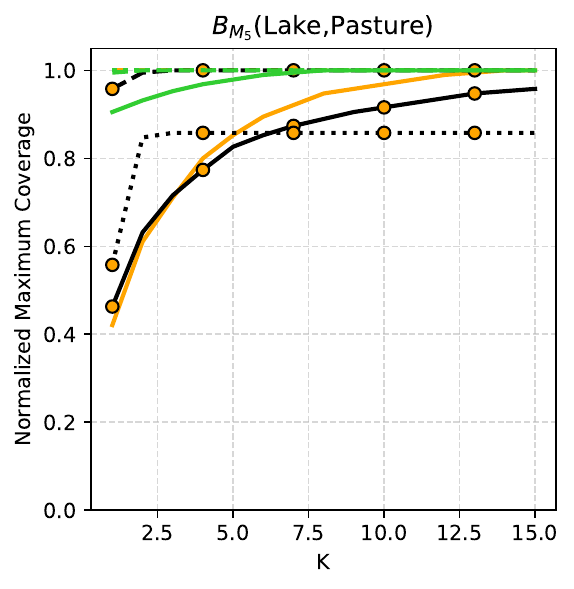}

    \includegraphics[width=0.24\linewidth]{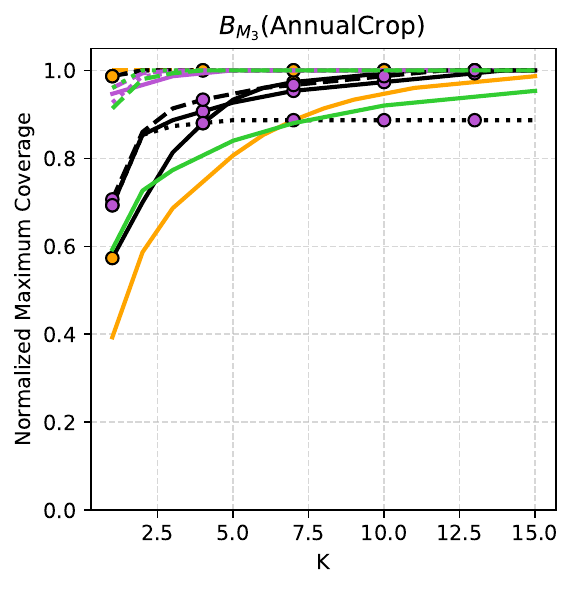}\hfill
    \includegraphics[width=0.24\linewidth]{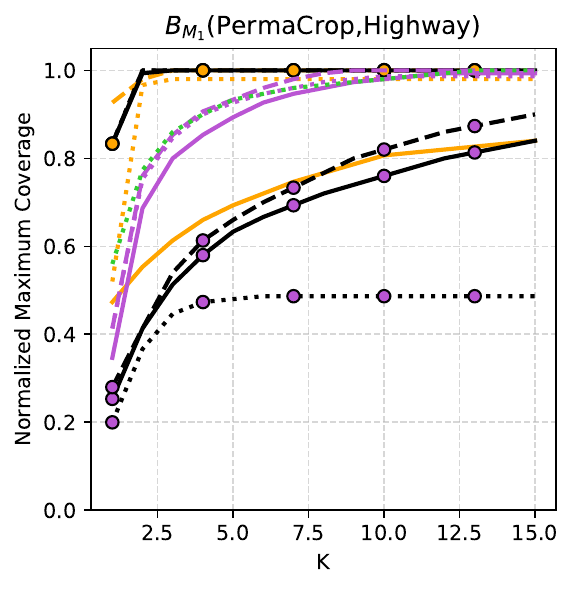}\hfill
    \includegraphics[width=0.24\linewidth]{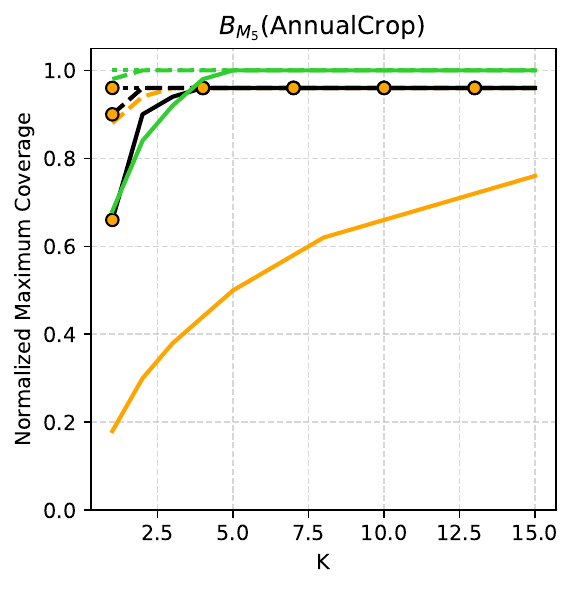}\hfill
    \includegraphics[width=0.24\linewidth]{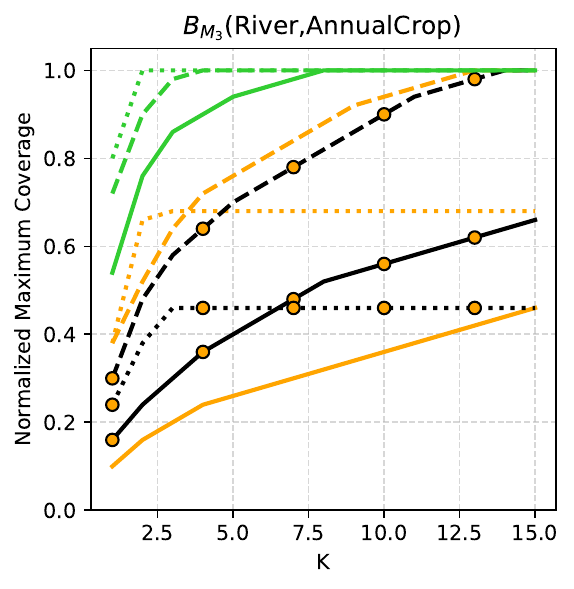}

    \includegraphics[width=0.24\linewidth]{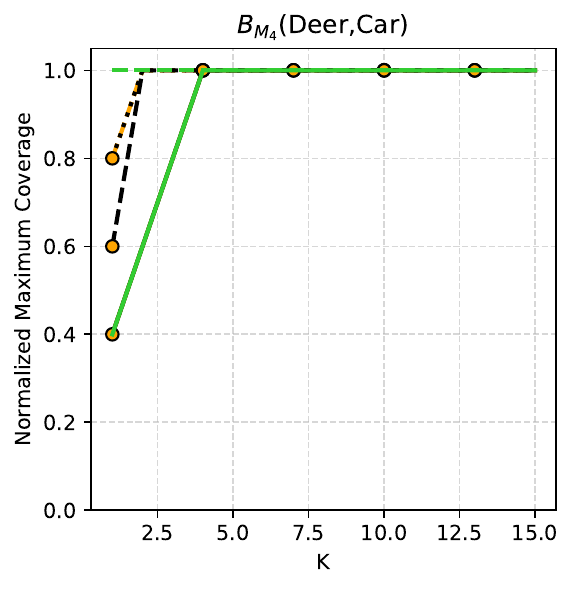}\hfill
    \includegraphics[width=0.24\linewidth]{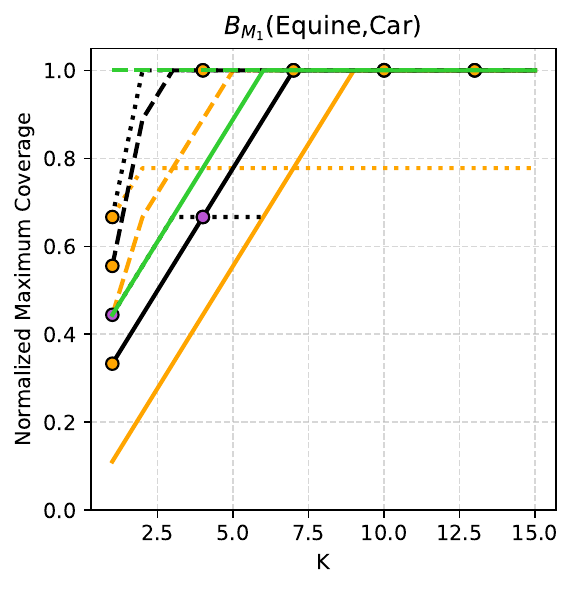}\hfill
    \includegraphics[width=0.24\linewidth]{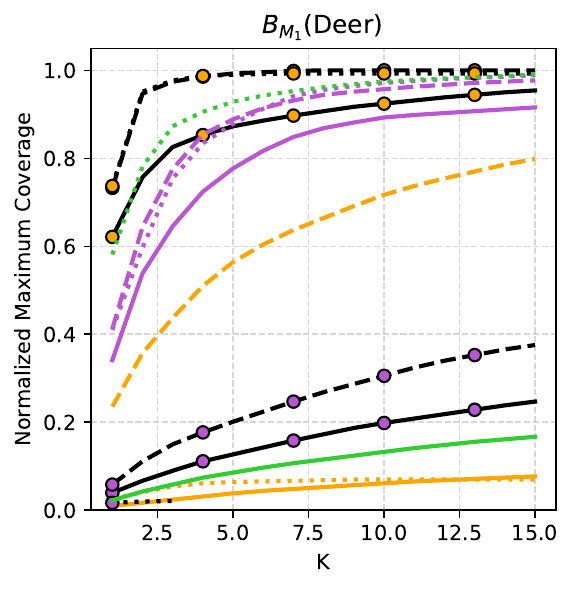}\hfill
    \includegraphics[width=0.24\linewidth]{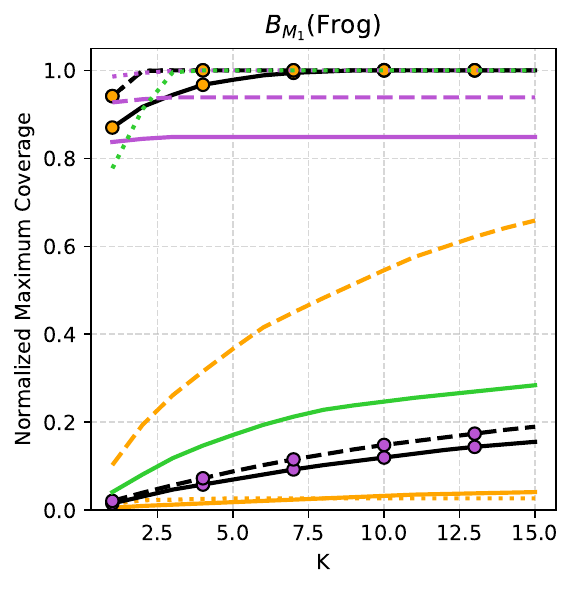}
    
    \caption{Supplementary experimental results for RQ2. (Metric: MaximumCoverage@K)}
    \label{fig:sup_RQ2_1}
\end{figure}

\begin{figure}[t] \centering
    \includegraphics[width=\textwidth,trim={2mm 10cm 2mm 2mm},clip]{Figures/new_ALL_line_series.pdf}

    \includegraphics[width=0.24\linewidth]{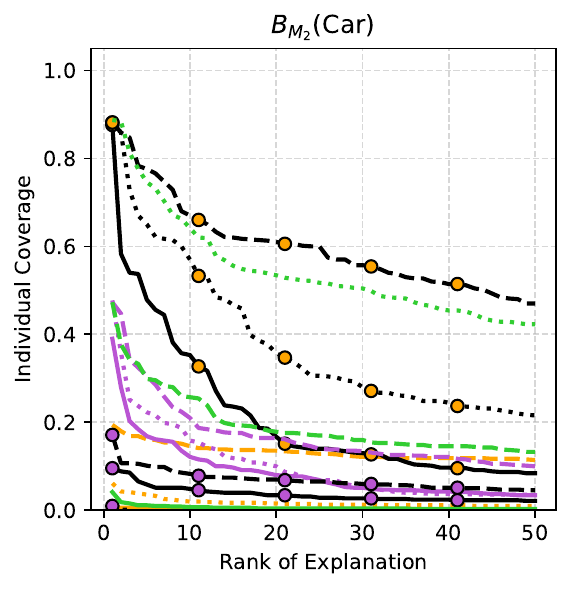}\hfill
    \includegraphics[width=0.24\linewidth]{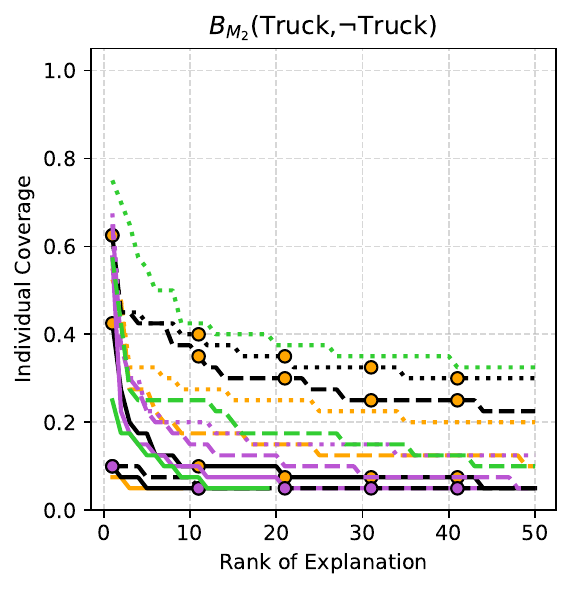}\hfill
    \includegraphics[width=0.24\linewidth]{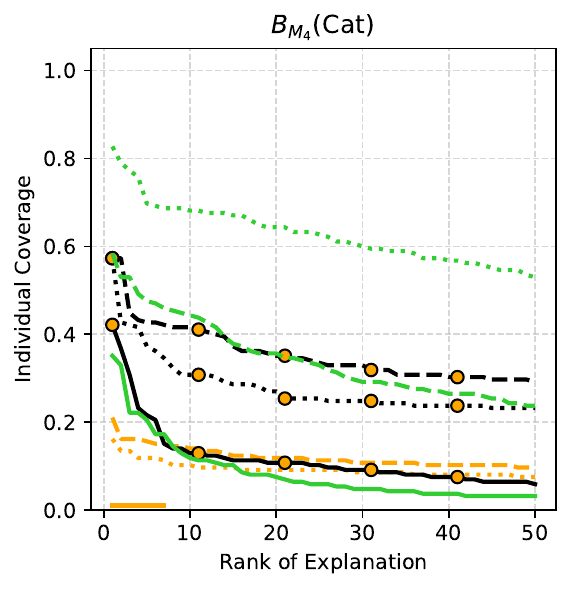}\hfill
    \includegraphics[width=0.24\linewidth]{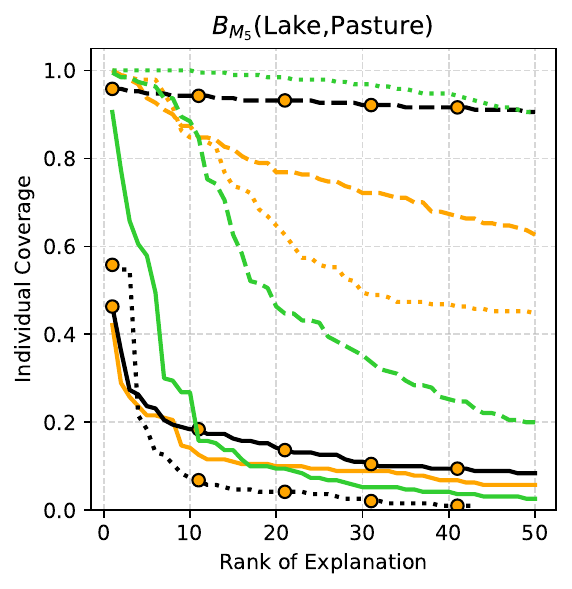}

    \includegraphics[width=0.24\linewidth]{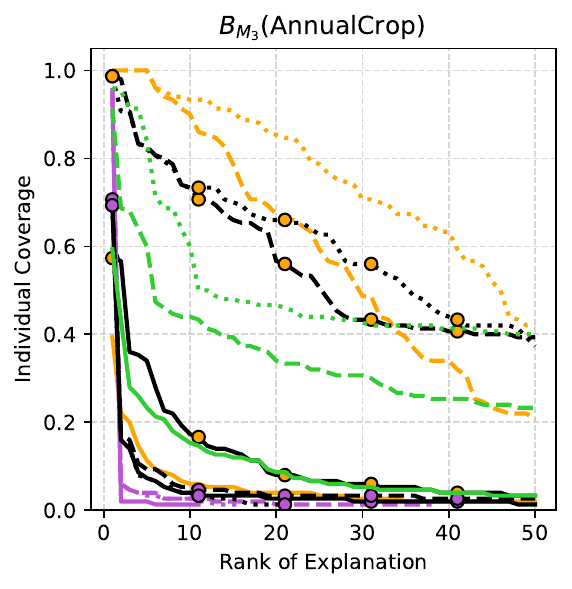}\hfill
    \includegraphics[width=0.24\linewidth]{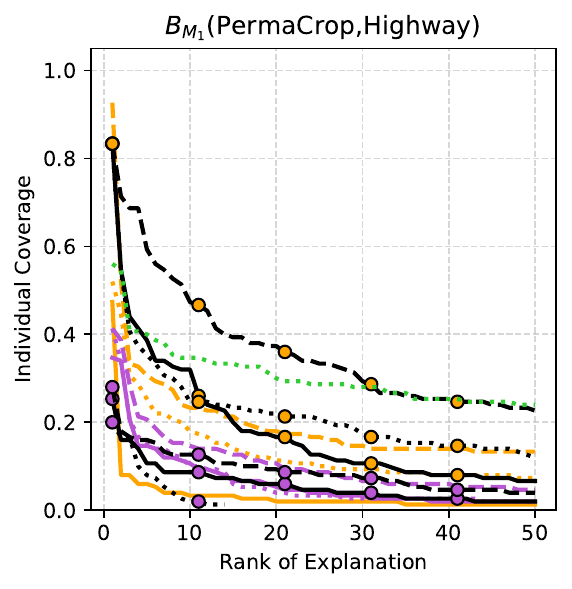}\hfill
    \includegraphics[width=0.24\linewidth]{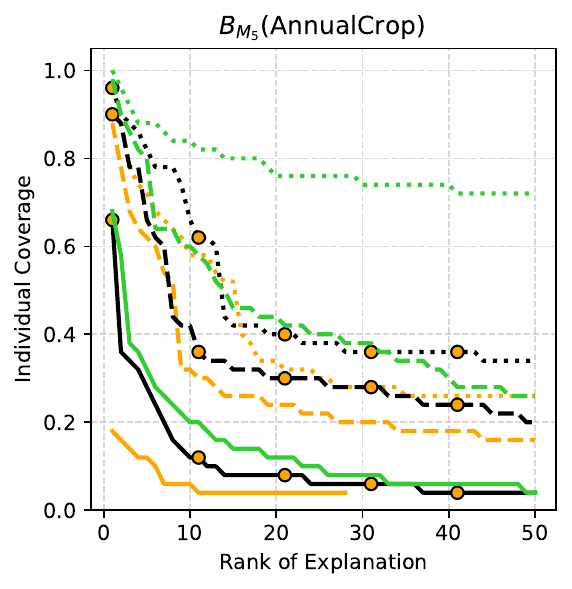}\hfill
    \includegraphics[width=0.24\linewidth]{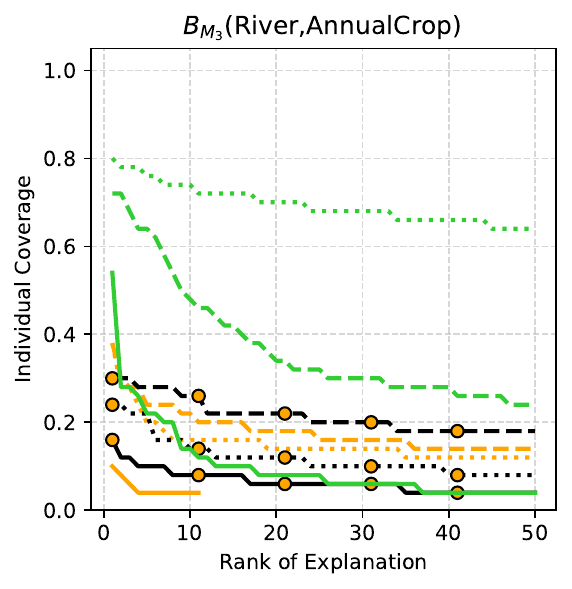}

    \includegraphics[width=0.24\linewidth]{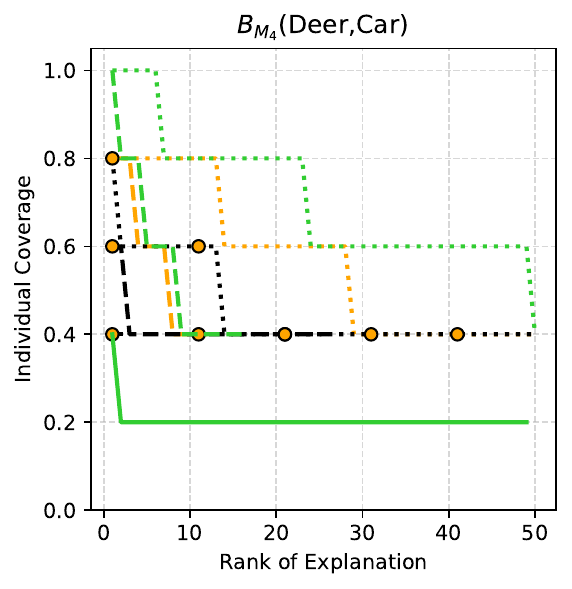}\hfill
    \includegraphics[width=0.24\linewidth]{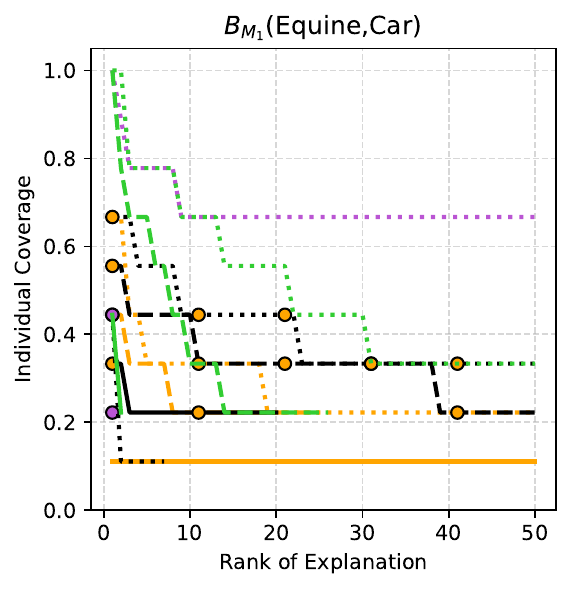}\hfill
    \includegraphics[width=0.24\linewidth]{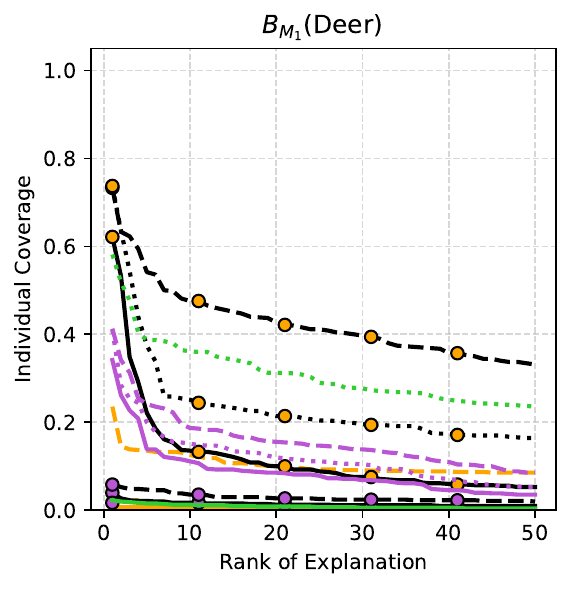}\hfill
    \includegraphics[width=0.24\linewidth]{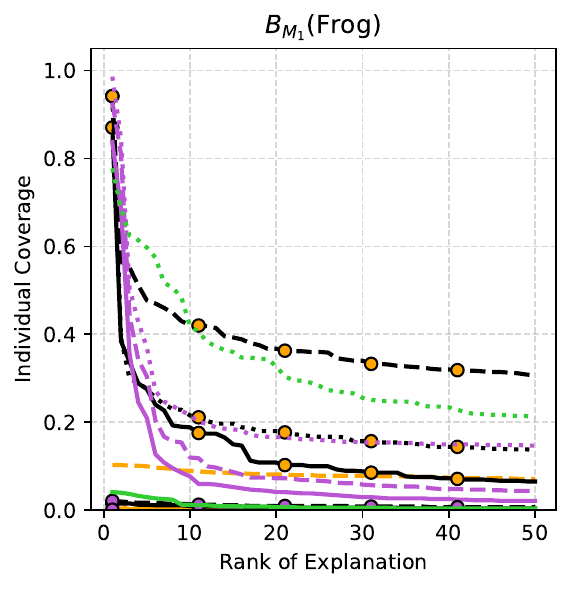}
    
    \caption{Supplementary experimental results for RQ2. (Metric: Individual Coverage)}
    \label{fig:sup_RQ2_2}
\end{figure}

\subsection{RQ3: Parsimony}
\label{appendix:extended_rq3}

Supplementary results for the comparison of ConXps' length vs. Individual Coverage across all explored behaviors are reported in Figure~\ref{fig:sup_RQ3}. Each behavior is represented as a $3\times1$ matrix of subplots, where each rows correspond to an explanation enumerator. Inside each subplot, results for all three erasure algorithms $\times$ ConXps are presented, when available. See Section~\ref{sec:quantitative} for a discussion about why the combination LEACE $\times$ ConAXps was not included. 
The data consists of three violin plots, each for a subset of ConXps, depending on their length $L$, defined as the number of concepts included in them, e.g., let $\mathcal{X}=\{con_1,\dots,con_n\}$ then we say its length is $|\mathcal{X}|=n$. 
Each violin plot's width at a given y-axis value represents the proportion of explanations reporting that value of individual coverage. The better combination is low $L$ and high individual coverage, following Occam's razor, which favors shorter explanations over longer ones. Note that violin plots can be seen as `box plots' with variable width alongside the y-axis; also, note that outliers are omitted. 

\begin{figure}[t] \centering
    \includegraphics[width=0.24\linewidth]{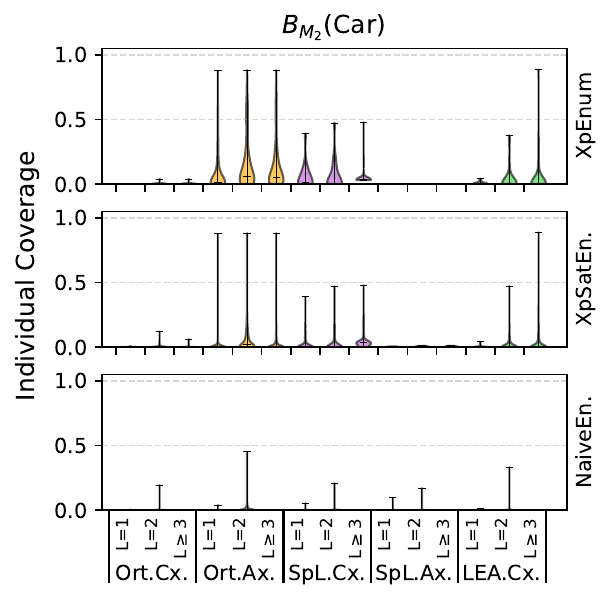}\hfill
    \includegraphics[width=0.24\linewidth]{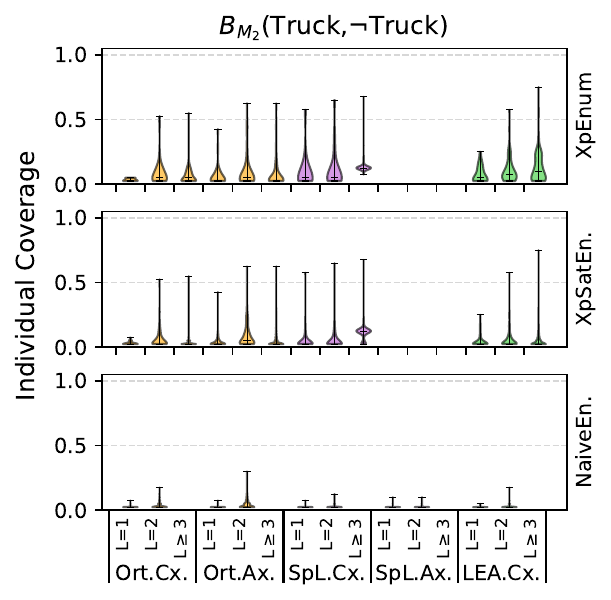}\hfill
    \includegraphics[width=0.24\linewidth]{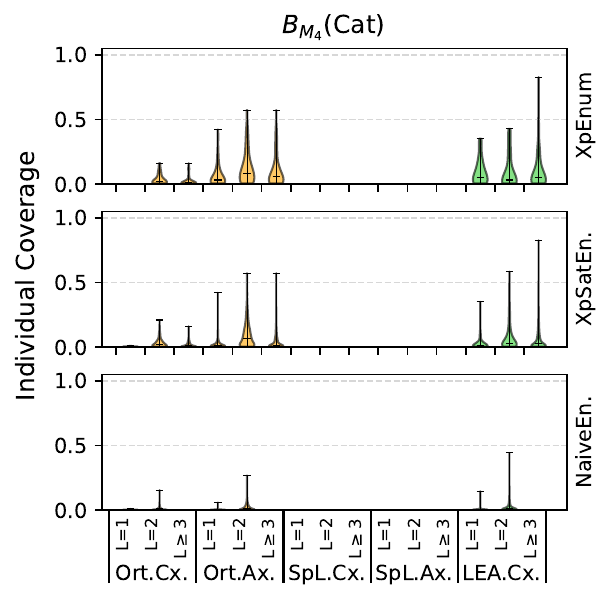}\hfill
    \includegraphics[width=0.24\linewidth]{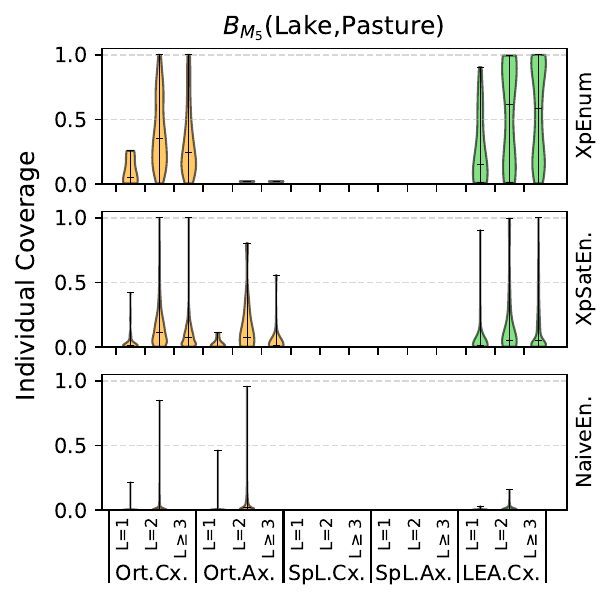}

    \includegraphics[width=0.24\linewidth]{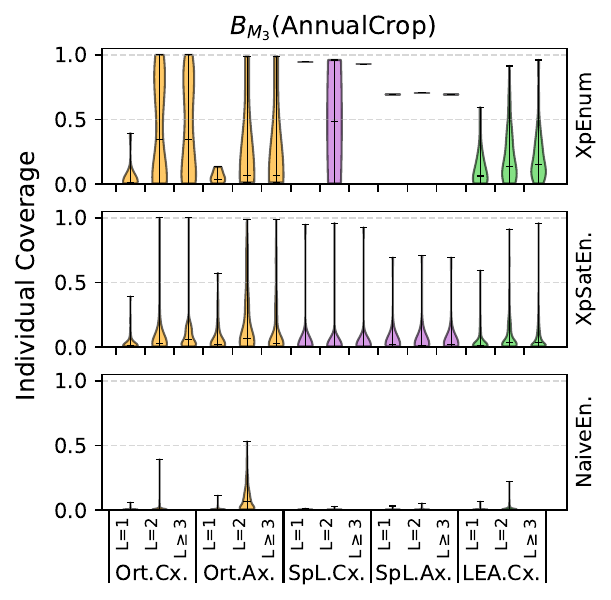}\hfill
    \includegraphics[width=0.24\linewidth]{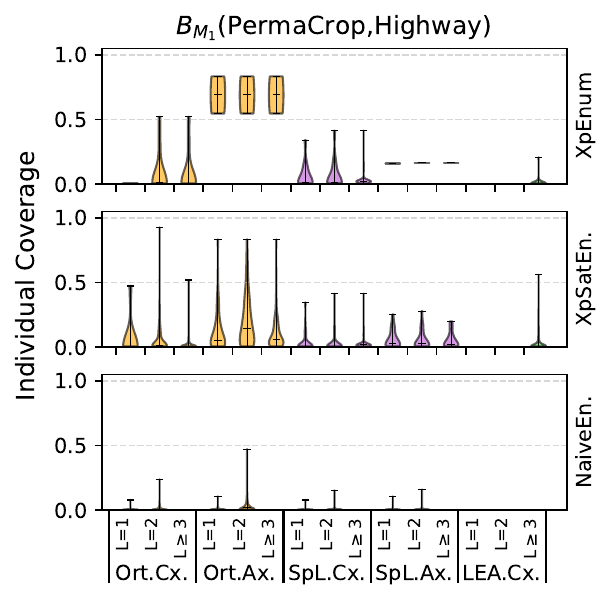}\hfill
    \includegraphics[width=0.24\linewidth]{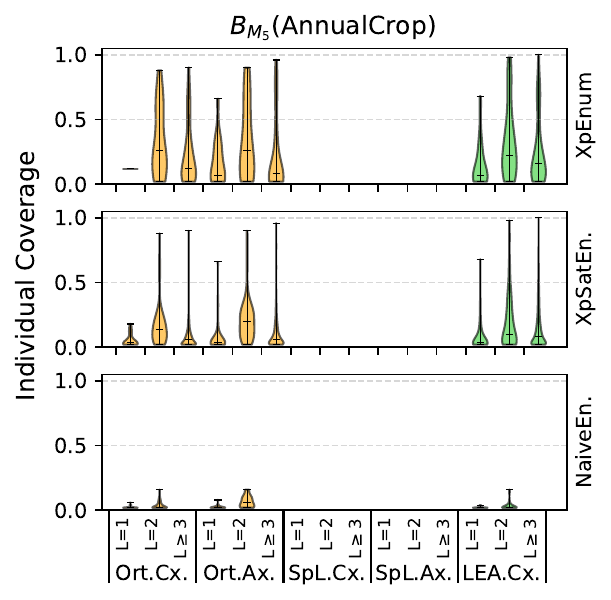}\hfill
    \includegraphics[width=0.24\linewidth]{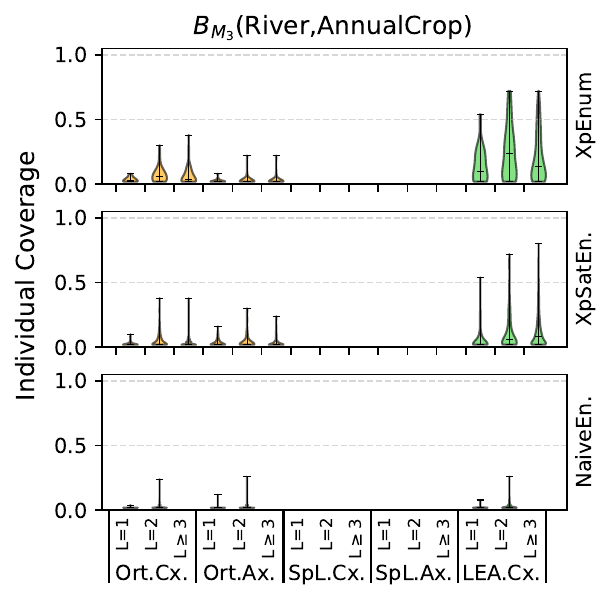}

    \includegraphics[width=0.24\linewidth]{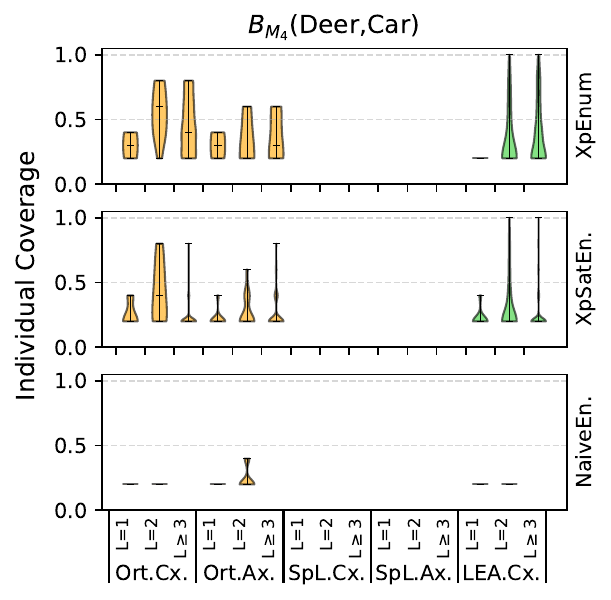}\hfill
    \includegraphics[width=0.24\linewidth]{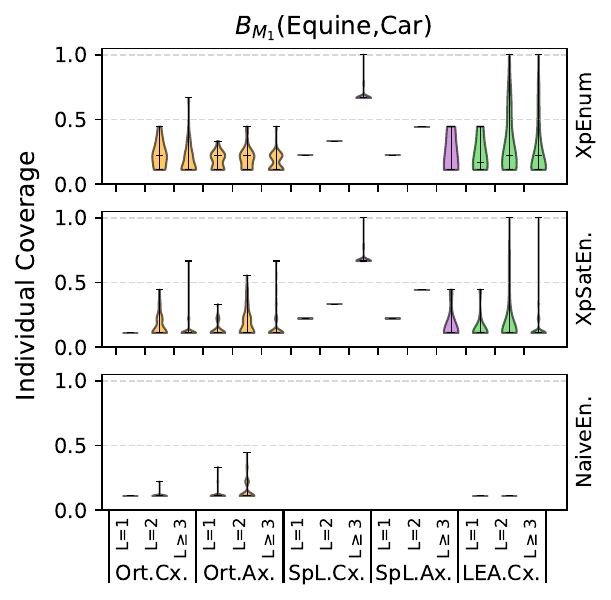}\hfill
    \includegraphics[width=0.24\linewidth]{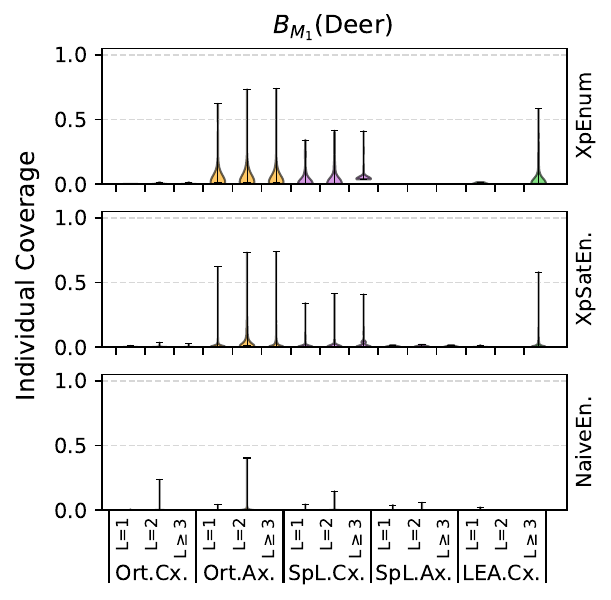}\hfill
    \includegraphics[width=0.24\linewidth]{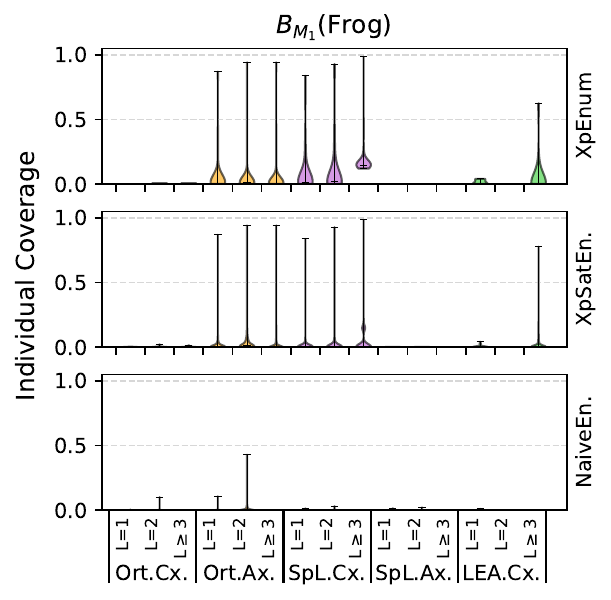}
    
    \caption{Supplementary experimental results for RQ3. (Metric: |Xp| vs. IndCov)}
    \label{fig:sup_RQ3}
\end{figure}

\subsection{RQ4: Plausibility}
\label{appendix:extended_rq4}

Supplementary results for the Plausibility Ratio metric across all explored behaviors are reported in Figure~\ref{fig:sup_RQ4}. Each behavior is represented as a $2\times3$ matrix of subplots, where the columns correspond to erasure algorithms, and the rows correspond to ConCXps and ConAXps, respectively. Inside each subplot, results for all three explanation enumeration algorithms are presented, when available. The data consists of three columns: fully plausible, partially plausible, and fully implausible explanations, following Section~\ref{subsec:plausibility}. On average, SpLiCE-based explanations tend to report a higher ratio of fully plausible explanations, although we leave a more comprehensive plausibility analysis as future work. Note that the sum of all three columns equals $100$\%. 

\begin{figure}[t] \centering
    
    \includegraphics[width=0.24\linewidth]{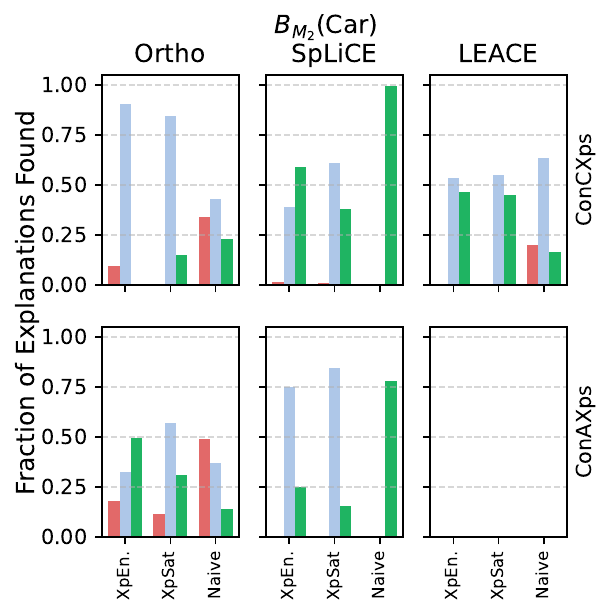}\hfill
    \includegraphics[width=0.24\linewidth]{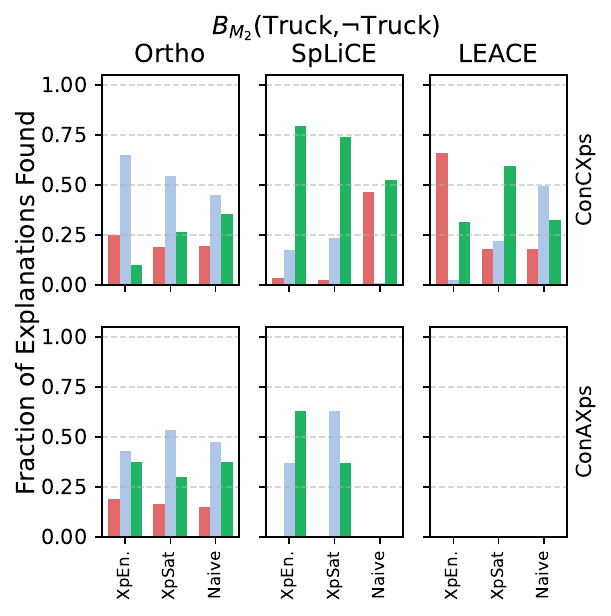}\hfill
    \includegraphics[width=0.24\linewidth]{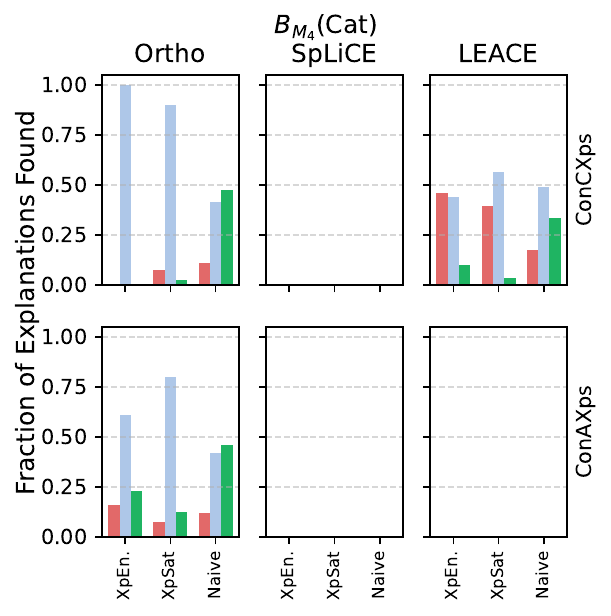}\hfill
    \includegraphics[width=0.24\linewidth]{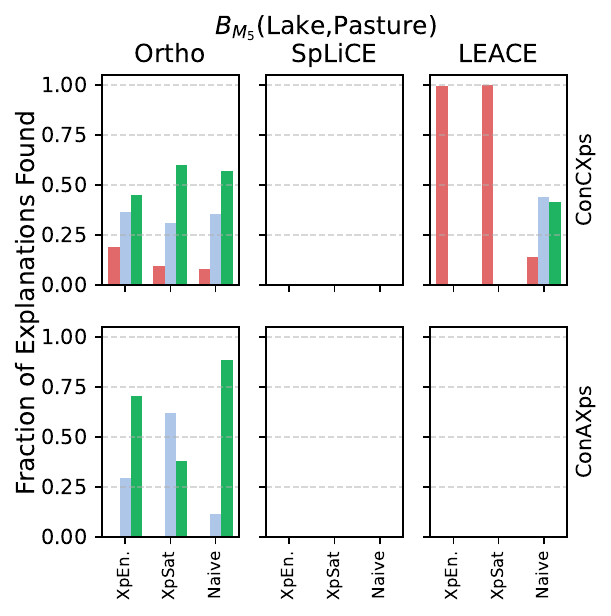}

    \includegraphics[width=0.24\linewidth]{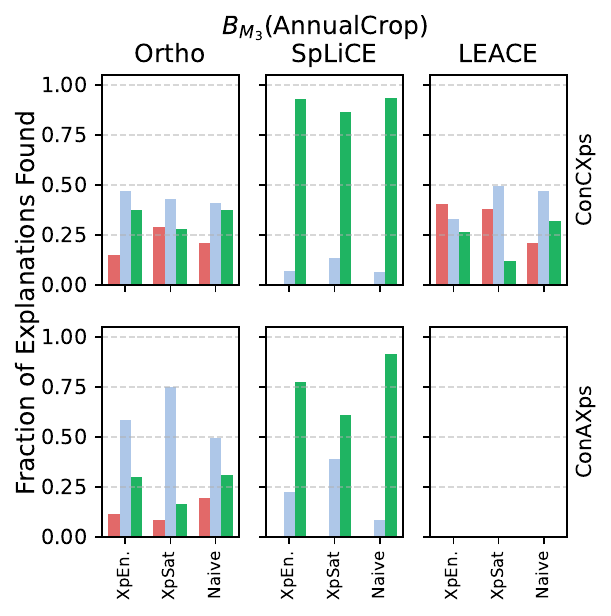}\hfill
    \includegraphics[width=0.24\linewidth]{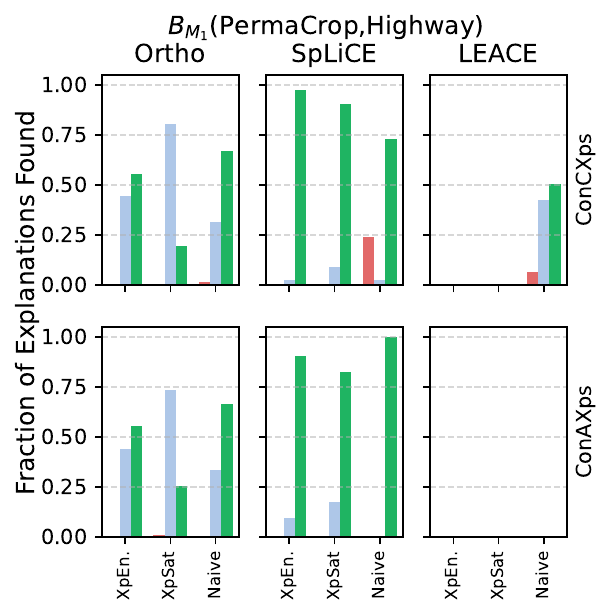}\hfill
    \includegraphics[width=0.24\linewidth]{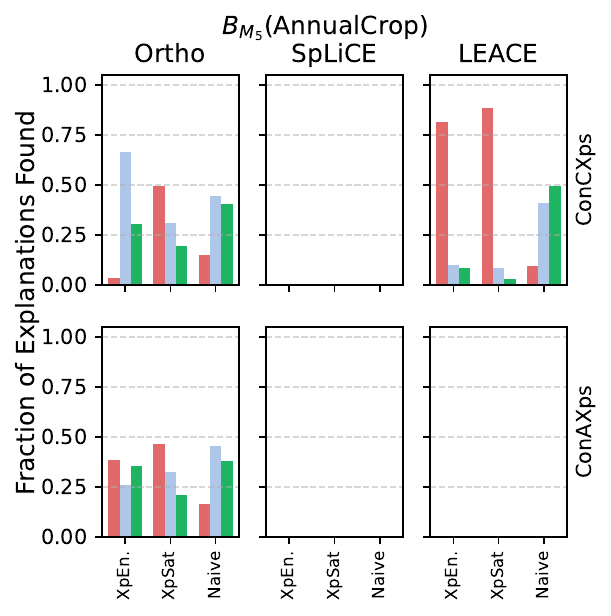}\hfill
    \includegraphics[width=0.24\linewidth]{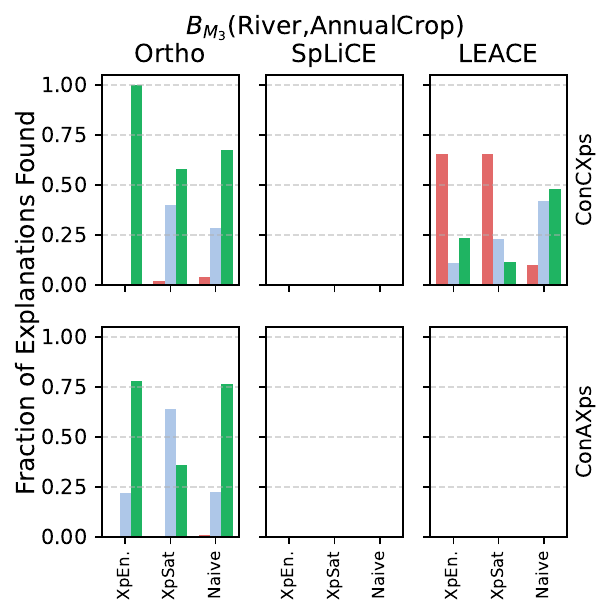}

    \includegraphics[width=0.24\linewidth]{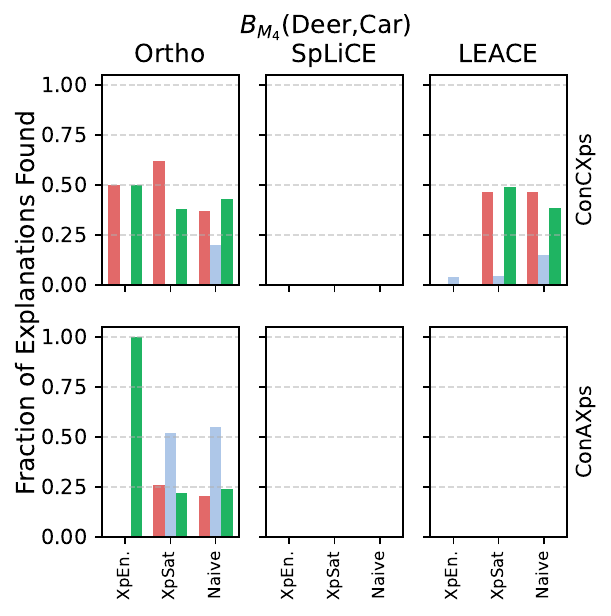}\hfill
    \includegraphics[width=0.24\linewidth]{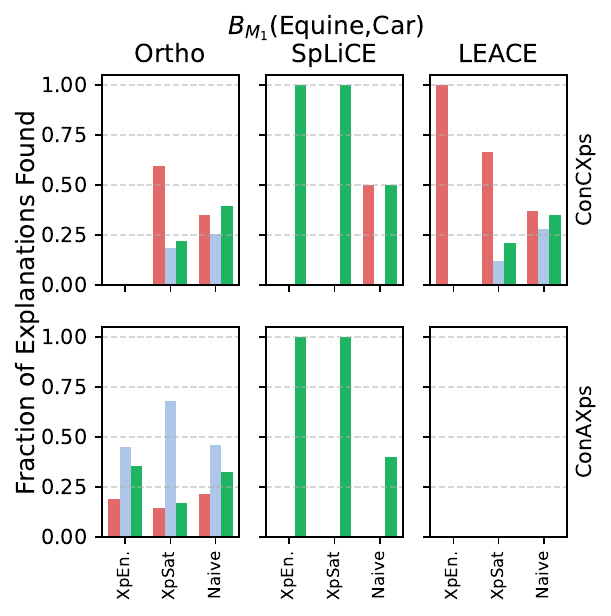}\hfill
    \includegraphics[width=0.24\linewidth]{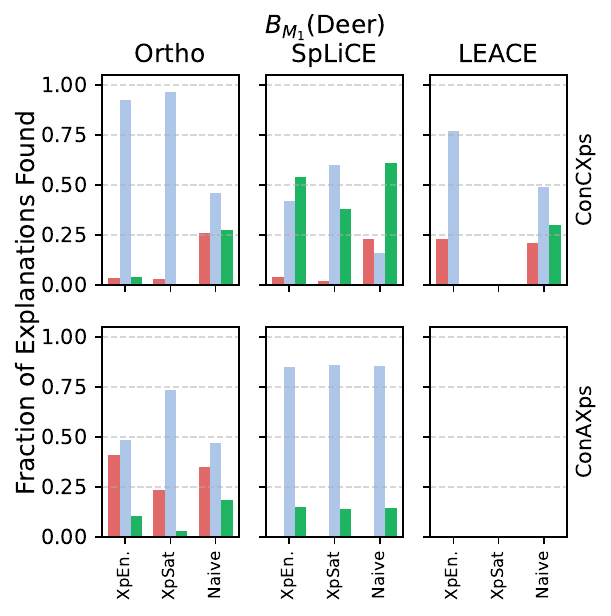}\hfill
    \includegraphics[width=0.24\linewidth]{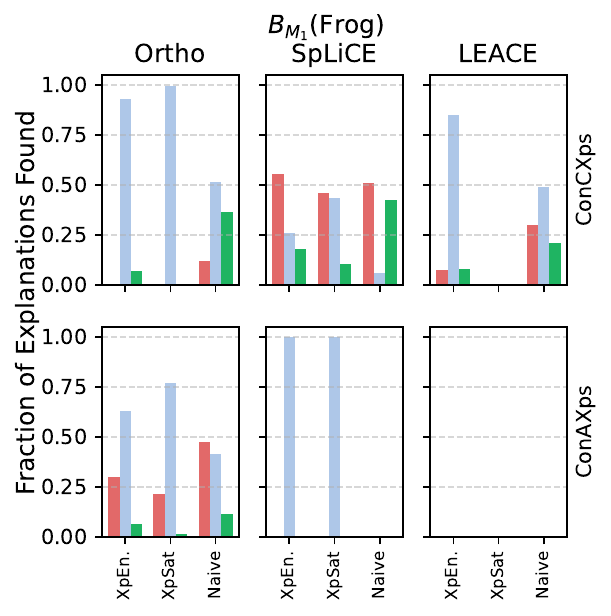}
    
    \caption{Supplementary experimental results for RQ4. Plots show the fraction of fully plausible (green), partially plausible (light blue), and fully implausible (red) explanations, as defined in Section \ref{subsec:plausibility}.}
    \label{fig:sup_RQ4}
\end{figure}

\subsection{RQ5: Efficiency}
\label{appendix:extended_rq5}

We compare the time required per combination of an erasure algorithm with an explanation enumeration algorithm to compute explanations per image. We refer to each such combination as a \emph{configuration}. 

The three erasure algorithms exhibit varying degrees of reusability of computation which directly impacts their computation times. Ortho utilizes a series of matrix operations that can be pre-calculated and reused, resulting in a short time for execution. Similarly, while SpLiCE initially requires solving a numerical optimization for each image embedding, the resulting concept strengths can be reused to make subsequent erasures efficient. In contrast, LEACE uses training data to compute a covariance matrix and calculates an affine transformation for each specific erasure. As a result, while repeating the same erasure is fast, erasing different sets of concepts requires new calculations, leading to longer computation times. A practical shorthand is to store the LEACE intermediate results in memory; however, this is only feasible when repeating identical erasures, such as in cases involving the NaiveEnum algorithm discussed below.

The three explanation enumeration algorithms explore the space of explanations in different manners which impacts their computation times. 
The computation time for NaiveEnum is primarily determined by the search depth $K$, followed by the vocabulary size and the number of images. By fixing the search depth at $K=2$, we maintain tractability while focusing on the most effective explanations, which are those with the highest individual coverage as empirically shown in Section \ref{sec:parsimony}. For behaviors consisting of 700 images each and vocabularies between 150 and 300 concepts, both Ortho and SpLiCE show similar computation times, ranging from 1h30m to 2h. In contrast, using LEACE is only feasible if the pre-computed intermediate results are stored for all searched explanations. In these cases, the first image requires a significant portion of the total time, but subsequent images are processed in a shorter time because no further training is involved. While this approach scales well with the number of images, the vocabulary size remains a critical constraint; exploring 150 concepts takes nearly 2h, whereas 500 concepts require about 10h.

In the case of XpEnum (detailed in Appendix \ref{appendix:xpenum}), when used in conjunction with Ortho or SpLiCE, this algorithm is the fastest configuration for generating explanations, reporting computation times between 5 and 10 minutes for behaviors of 700 images each. Empirically, the impact of vocabulary size and the number of images on the computation time is roughly equivalent, which contrasts with NaiveEnum, where vocabulary size was much more impactful than the number of images.
The unpredictable way XpEnum explores the search space makes it the second slowest configuration when paired with LEACE. We empirically observe that the algorithm typically finds ConCXps in early iterations, transitions into alternating between ConAXps and ConCXps, and finally focuses exclusively on ConAXps; however, because pre-computed intermediate results are rarely reused during this traversal, computation times frequently exceed our 10h timeout. In these cases, the process must be interrupted, often before any ConAXps are obtained. An in-depth study of this enumeration algorithm to reduce computation time is left as future work.

The computation time for XpSatEnum is directly determined by the number of discovered explanations and the images to be saturated. While vocabulary size plays a role, it is less impactful than these two factors. When paired with Ortho or SpLiCE for behaviors with 700 images each, computation time varies by behavior due to the number of explanations. Ortho averages 5 hours, and SpLiCE averages 2 hours among the behaviors explored, with specific executions spanning from 30 minutes to 8.5 hours.
The combination of XpSatEnum and LEACE is the slowest configuration due to scaling bottlenecks that can exceed a 40 GB GPU memory limit. First, the number of explanations to check is in the order of thousands. Second, an explanation that is subset-minimal for one image may only be a weak explanation for others, requiring further pruning to guarantee minimality. This leads to an exponential increase in required checks. In these cases, the lack of reusability across diverse candidates prevents the algorithm from completing in a reasonable time.

Overall, although formal explanation methods often struggle to scale to large models and datasets, our approach offers a reasonable trade-off between scalability and explanation quality. 
Ortho and SpLiCE paired with XpEnum are the fastest configurations for discovering both ConAXps and ConCXps within a short time. While XpSatEnum is the most computationally expensive due to the checks it performs to guarantee minimality, it offers the advantage of higher maximum coverage than XpEnum by construction. In contrast, LEACE is fastest when paired with NaiveEnum's predictable search pattern for finding short ConCXps, as this configuration leverages precomputed intermediate results.

\begin{figure}[t] \centering
    \includegraphics[width=0.10\textwidth,trim={35mm 18cm 66cm 16cm},clip]{Figures/new_freqB21-N300RN_All_Time.png}
    
    \includegraphics[width=0.24\linewidth]{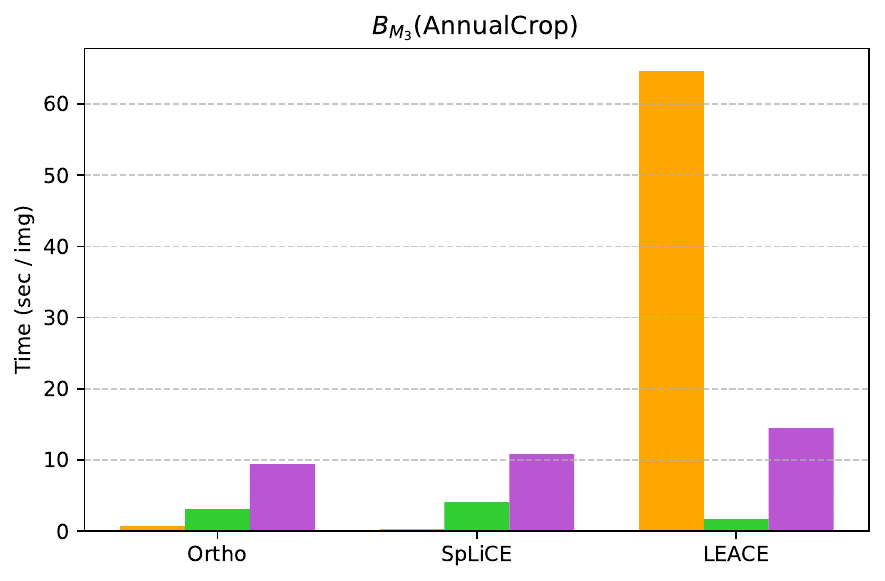}\hfill
    \includegraphics[width=0.24\linewidth]{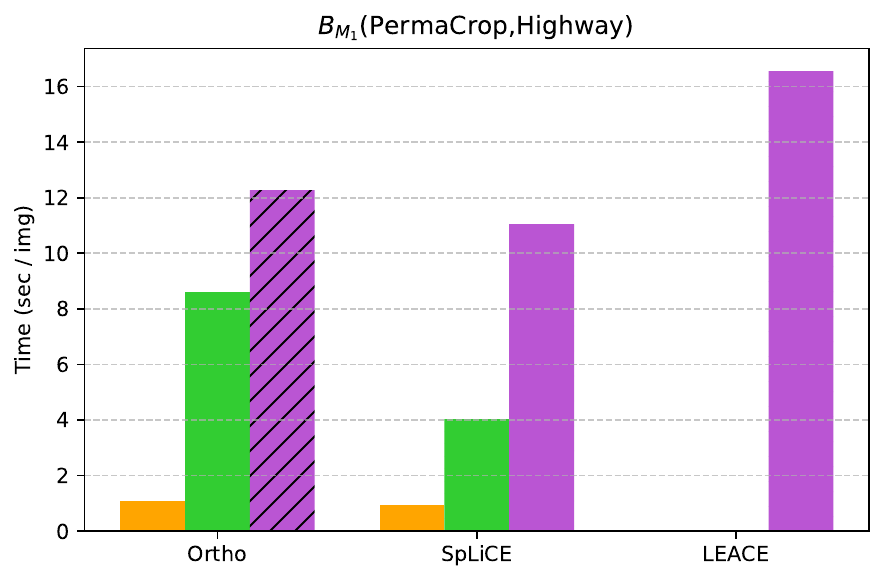}\hfill
    \includegraphics[width=0.24\linewidth]{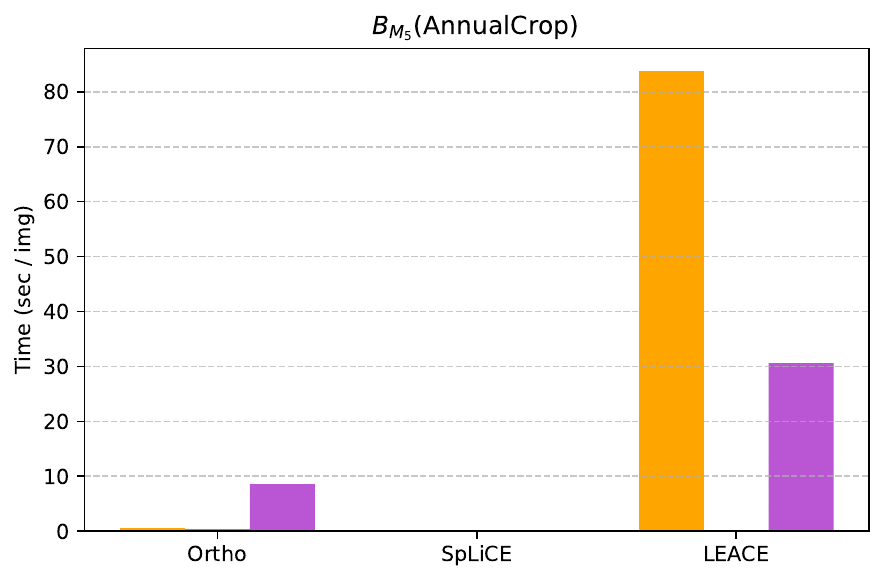}\hfill
    \includegraphics[width=0.24\linewidth]{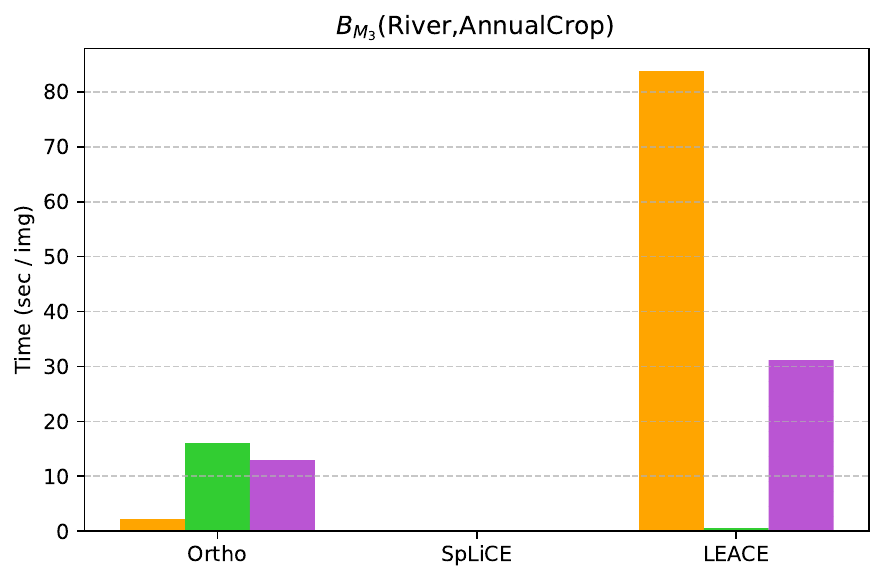}

    \includegraphics[width=0.24\linewidth]{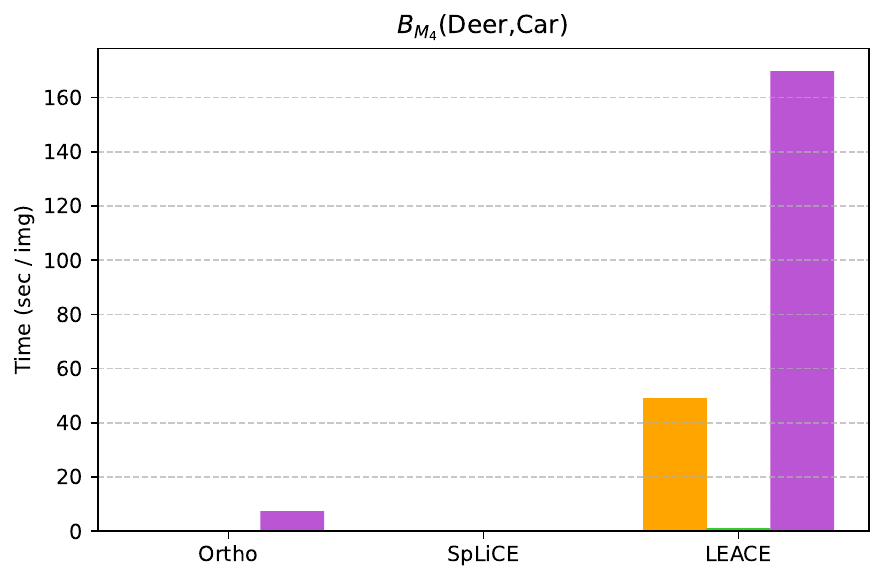}\hfill
    \includegraphics[width=0.24\linewidth]{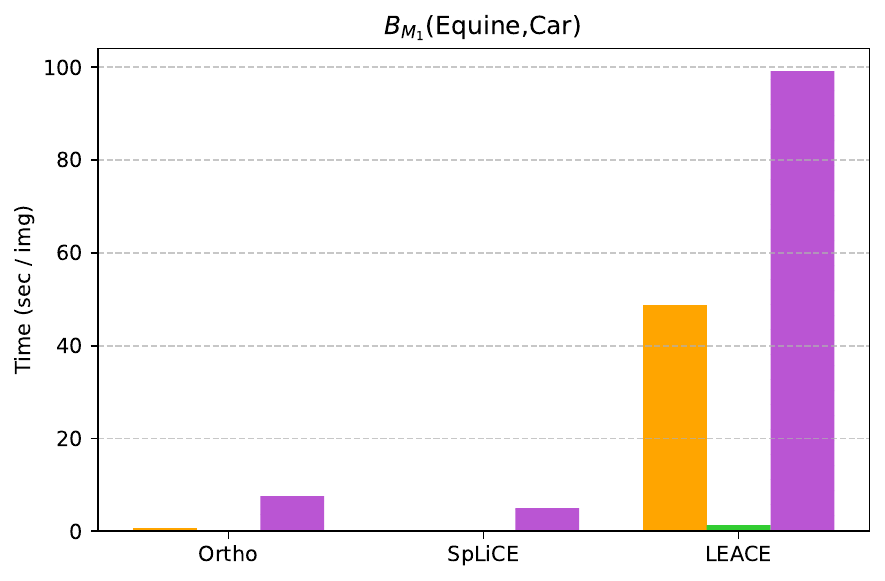}\hfill
    \includegraphics[width=0.24\linewidth]{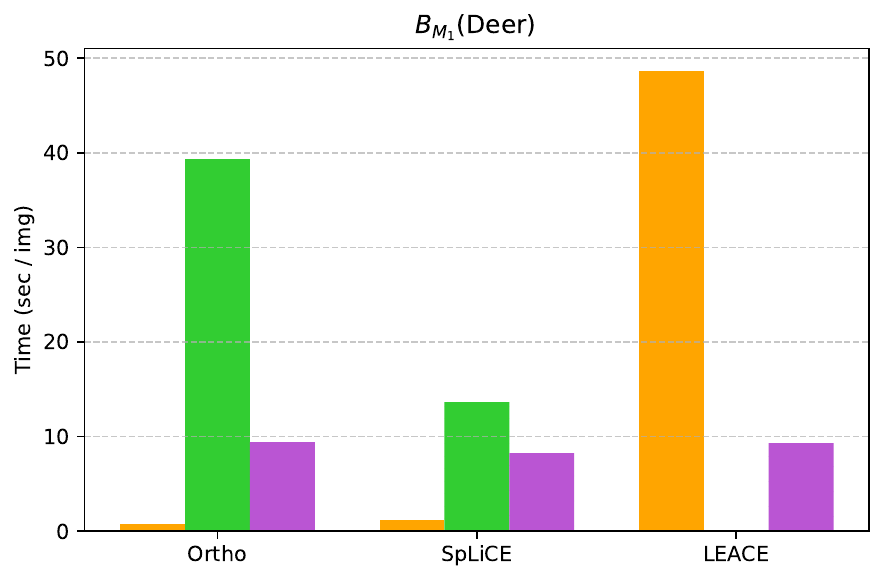}\hfill
    \includegraphics[width=0.24\linewidth]{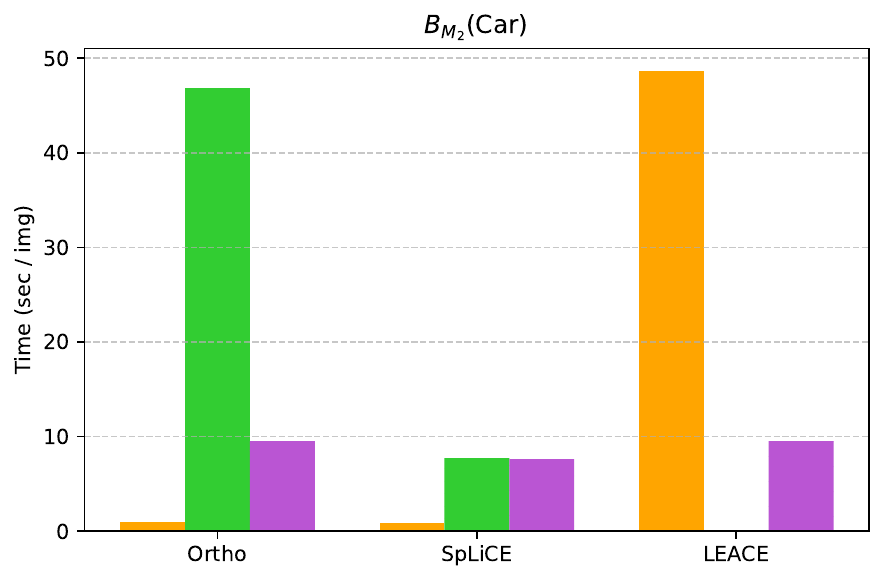}
    
    \caption{Compute time per image, measured in seconds, for additional behaviors.}
    \label{fig:sup_time}
\end{figure}

\subsection{Relative Cumulative Individual Coverage at Length K}

We define a complementary metric to quantify the conciseness of ConXps, motivated by the intuition that shorter explanations are more accessible to human analysts. Prior work ~\cite{ramaswamy_overlooked_2023} argues that the number of concepts contained in a single explanation should be limited by a strict upper bound of 32 to align with human preferences. This study also notes that the largest percentage of participants prefer fewer than 16 concepts, and the highest accuracy was achieved with a maximum of only 8 concepts per explanation. 

To evaluate how our results match these human preferences, we use the Relative Cumulative Individual Coverage at Length $K$ metric to measure the distribution of explanation lengths within a given behavior $B$. For a specific threshold $K$, this metric is calculated as the cumulative sum of the individual coverages of all explanations consisting of $K$ or fewer concepts, normalized by the total number of explanations across all lengths. A high score at a small $K$ indicates that the majority of explanations are short and easy to interpret.

Figure \ref{fig:CFL} plots the Relative Cumulative Individual Coverage against the Length $K$ for each evaluated behavior in the main text. Across all cases, the majority of explanations consist of $K\leq 8$ concepts, showing that our approach aligns with the range previously identified as optimal for human accuracy ~\cite{ramaswamy_overlooked_2023}. Among these results, LEACE CXp and Ortho AXp report the highest Cumulative Frequencies for small $K$ values, indicating they are the most concise. In contrast, Ortho CXp and SpLiCE AXp tend to produce the longest explanations in most cases. %

\begin{figure}[t] \centering
    \includegraphics[width=\textwidth,trim={2mm 10cm 2mm 2mm},clip]{Figures/new_ALL_line_series.pdf}
    
    \includegraphics[width=0.24\linewidth]{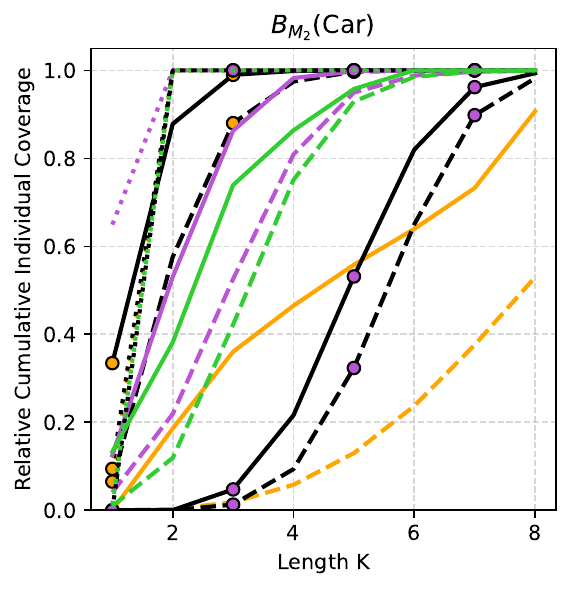}\hfill
    \includegraphics[width=0.24\linewidth]{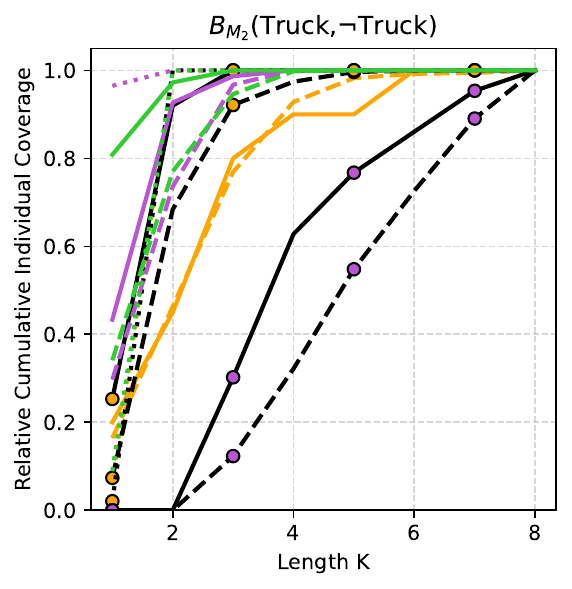}\hfill
    \includegraphics[width=0.24\linewidth]{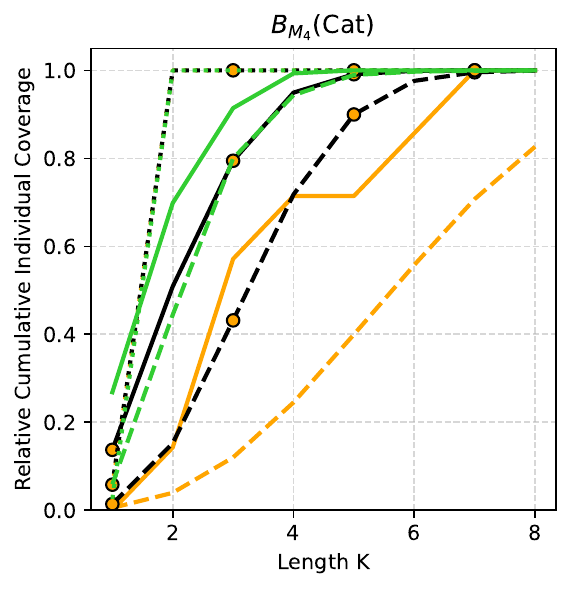}\hfill
    \includegraphics[width=0.24\linewidth]{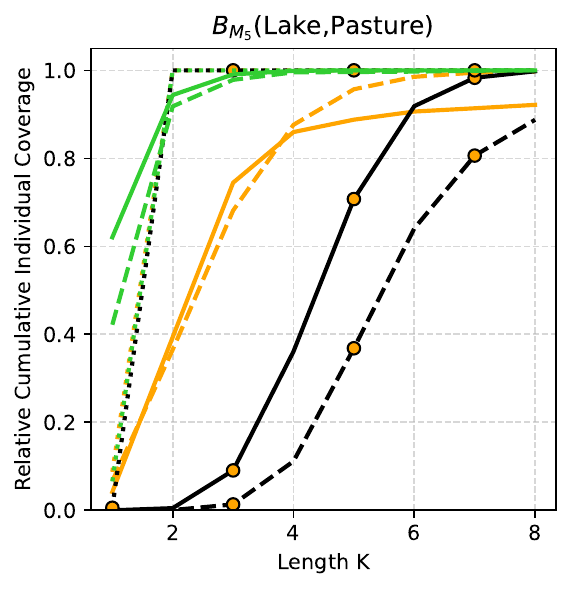}

    \includegraphics[width=0.24\linewidth]{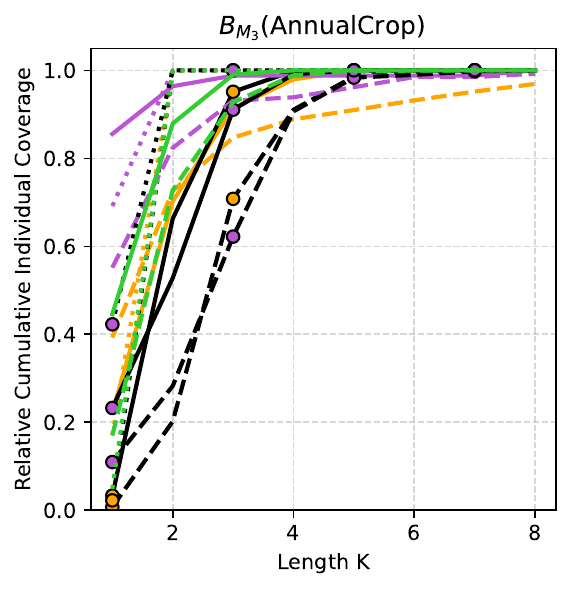}\hfill
    \includegraphics[width=0.24\linewidth]{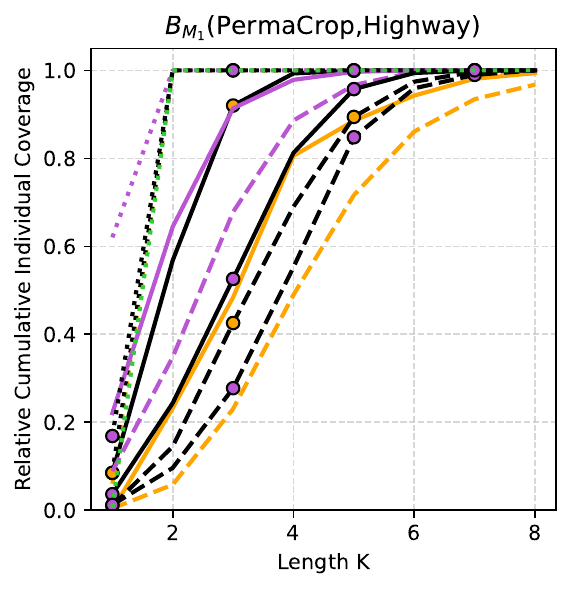}\hfill
    \includegraphics[width=0.24\linewidth]{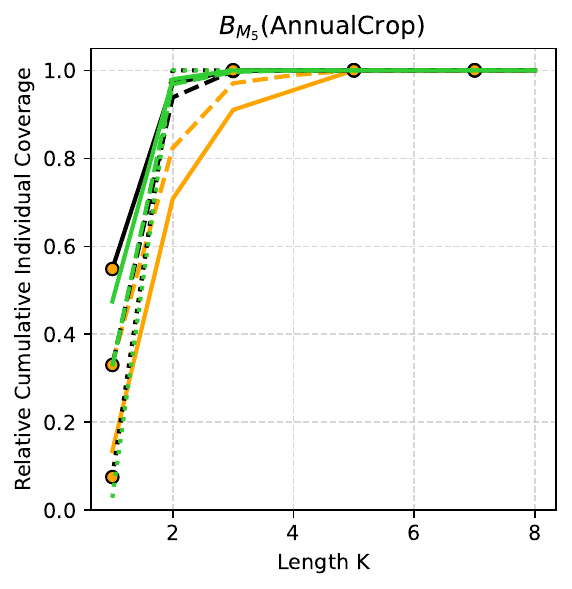}\hfill
    \includegraphics[width=0.24\linewidth]{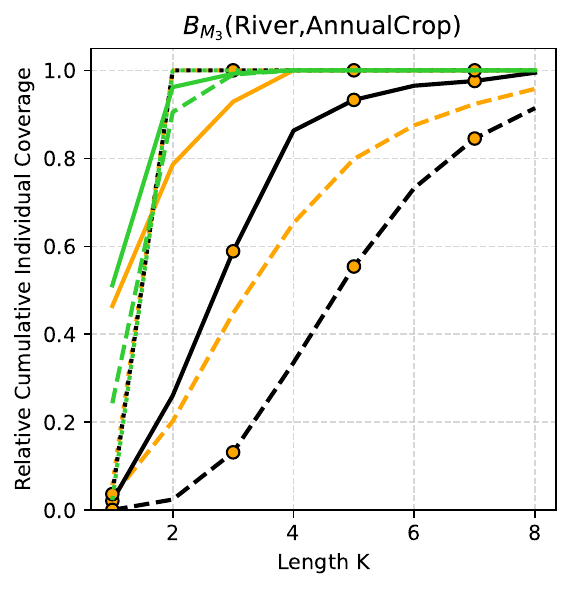}

    \includegraphics[width=0.24\linewidth]{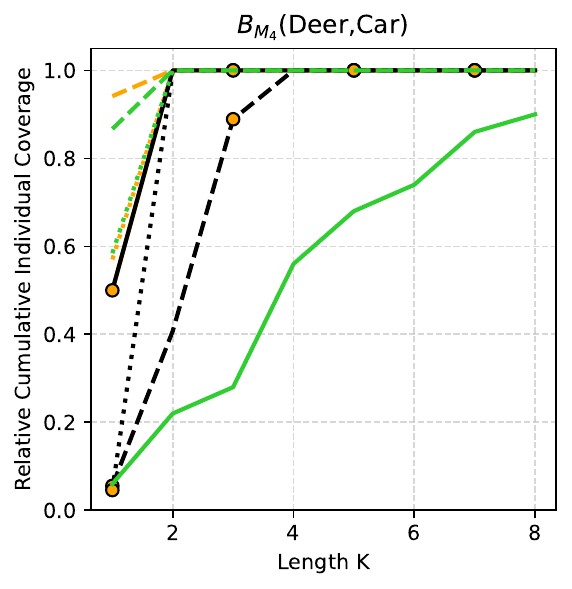}\hfill
    \includegraphics[width=0.24\linewidth]{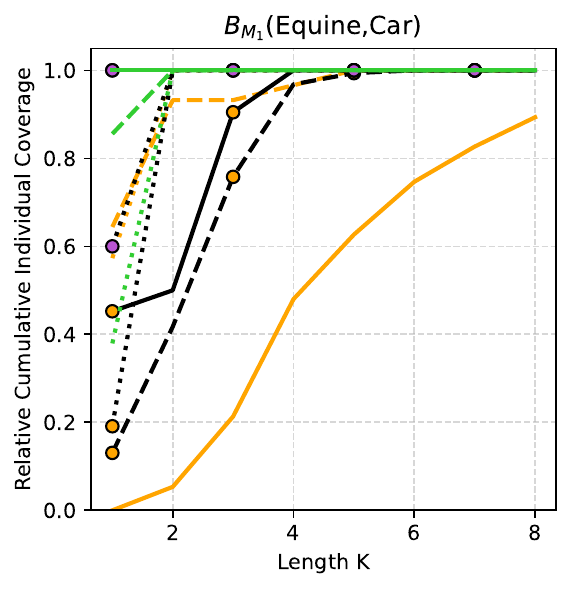}\hfill
    \includegraphics[width=0.24\linewidth]{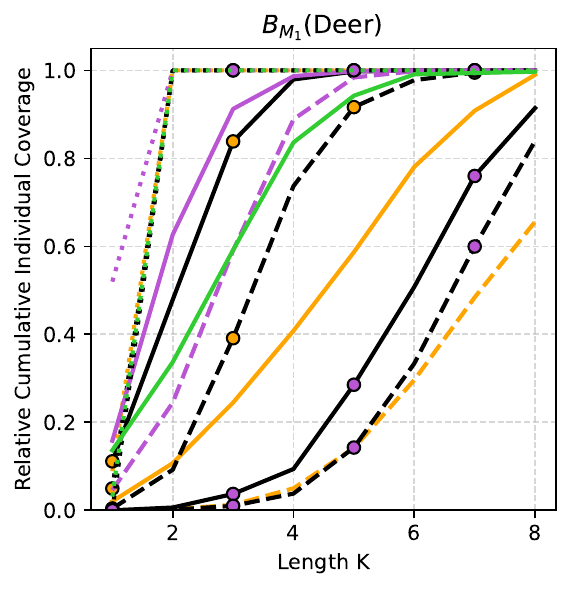}\hfill
    \includegraphics[width=0.24\linewidth]{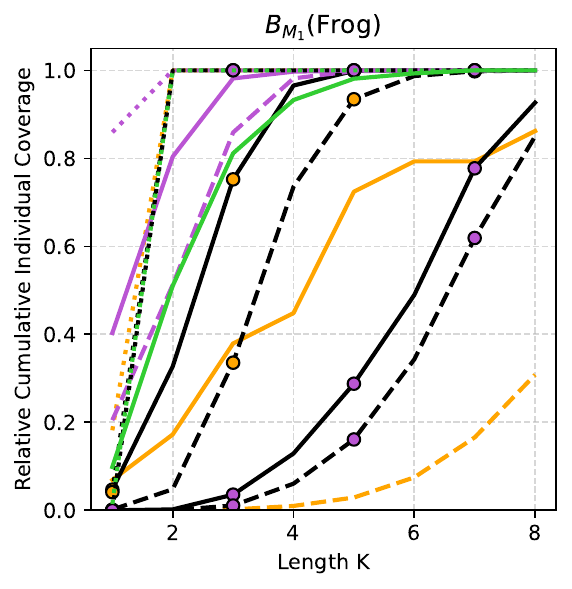}
    
    \caption{Relative Cumulative Frequency at Length $K$ for all model behaviors we analyze.}
    \label{fig:CFL}
\end{figure}

\subsection{Signed Concept-based Explanations}
\label{appendix:extended_signed_conxps}

The following three tables provide signed ConXps, each containing one or more pairs of concepts and their corresponding polarities as described in Definition \ref{def:signed_exp}, with the highest individual coverage among explored behaviors; note that to aid comprehension when reporting results, we list ConAXps and ConCXps separately for each erasure method but collapse by enumeration algorithm.
Firstly, Table \ref{tab:sup_RN} reports \textbf{ResNet18-based behaviors}. Note that while several explanations achieve high individual coverages between $85$\% and $90$\%, the specific concepts vary depending on the erasure algorithm employed. For example, in model $M_2$ fine-tuned on RIVAL10, the two most frequent ConAXps produced with Ortho for behavior $B_{M_2}$(Car) are \{Flight(-)\} and \{Taxi(+)\}; while the two most frequent ConAXps found with LEACE are \{Wheel(+)$\wedge$Barrier(-)\} and \{Civic(+)\}.
In the case of incorrectly classified trucks, $B_{M_2}$(Truck,$\neg$Truck), the most frequent ConCXps are \{Taxi(+)\}, \{Van(+)\}, and \{Civic(+)\} with individual coverages of $55$\%, $65$\%, and $75$\%, respectively.

\begin{table}[t]\centering\footnotesize
    \caption{Signed explanations with highest individual coverage for ResNet18-based behaviors.}
    \label{tab:sup_RN}
    \begin{tabular}{cccl} \toprule
        \multicolumn{3}{c}{\textbf{Setting}} & \textbf{Top Signed Concept-based Explanations} \\ \midrule
        \multirow[c]{9}{*}{\rotatebox{90}{$B_{M_2}$(Car)}}
        & \multirow[c]{3}{*}{\rotatebox{90}{Orth}} & AXp &
        \{Flight(-)\}x0.88,\{Taxi(+)\}x0.86,\{Drive(+)$\wedge$Pet(-)\}x0.85 \\
        \cmidrule(lr){3-4}
        & & CXp &
        \{Drive(+)$\wedge$Flight(-)$\wedge$Animal(-)$\wedge$Wild(-)$\wedge$Hunter(-)$\wedge$Farm(-)\}x0.19 \\
        \cmidrule(lr){2-4}
        & \multirow[c]{3}{*}{\rotatebox{90}{SpL}} & AXp & 
        \{Wagon(+)$\wedge$Continental(+)$\wedge$Golf(+)$\wedge$Civic(+)\}x0.17 \\
        \cmidrule(lr){3-4}
        & & CXp & 
        \{Civic(+)\}x0.47,\{Wagon(+)$\wedge$Civic(+)\}x0.45 \\
        \cmidrule(lr){2-4}
        & \multirow[c]{3}{*}{\rotatebox{90}{LEA}} & AXp & 
        - \\
        \cmidrule(lr){3-4}
        & & CXp & 
        \{Wheel(+)$\wedge$Barrier(-)\}x0.89,\{Civic(+)\}x0.88 \\
        \midrule
        \multirow[c]{8}{*}{\rotatebox{90}{$B_{M_2}$(Truck,$\neg$Truck)}}
        & \multirow[c]{3}{*}{\rotatebox{90}{Orth}} & AXp &
        \{Taxi(+)\}x0.63,\{Tail(-)\}x0.45,\{Whip(+)\}x0.45,\{Wagon(+)$\wedge$Pet(-)\}x0.45 \\
        \cmidrule(lr){3-4}
        & & CXp &
        \{Taxi(+)\}x0.55,\{Whip(+)\}x0.48,\{Civic(+)$\wedge$Flight(-)\}x0.33 \\
        \cmidrule(lr){2-4}
        & \multirow[c]{3}{*}{\rotatebox{90}{SpL}} & AXp & 
        \{Wagon(+)$\wedge$Van(+)$\wedge$Civic(+)\}x0.10,\{Van(+)$\wedge$Golf(+)$\wedge$Civic(+)\}x0.10 \\
        \cmidrule(lr){3-4}
        & & CXp & 
        \{Van(+)\}x0.65,\{Civic(+)\}x0.38,\{Wagon(+)\}x0.30,\{Driver(+)\}x0.23 \\
        \cmidrule(lr){2-4}
        & \multirow[c]{3}{*}{\rotatebox{90}{LEA}} & AXp & 
        - \\
        \cmidrule(lr){3-4}
        & & CXp & 
        \{Civic(+)\}x0.75,\{Taxi(+)\}x0.70,\{Van(+)\}x0.65,\{Golf(+)\}x0.58 \\
        \midrule
        \multirow[c]{4}{*}{\rotatebox{90}{{\tiny $B_{M_3}$(River,Crop)}}}
        & \multirow[c]{3}{*}{\rotatebox{90}{Orth}} & AXp &
        \{Villa(+)$\wedge$Kettle(+)\}x0.30,\{Villa(+)$\wedge$Shaft(+)\}x0.30 \\
        \cmidrule(lr){3-4}
        & & CXp &
        \{Villa(+)\}x0.38,\{Villa(+)$\wedge$Peanut(+)\}x0.28 \\
        \cmidrule(lr){2-4}
        & \multirow[c]{3}{*}{\rotatebox{90}{LEA}} & AXp & 
        - \\
        \cmidrule(lr){3-4}
        & & CXp & 
        \{Solar(+)$\wedge$Fast(+)\}x0.80,\{Villa(+)$\wedge$Fast(+)\}x0.78 \\
        \midrule
        \multirow[c]{9}{*}{\rotatebox{90}{$B_{M_3}$(AnnualCrop)}}
        & \multirow[c]{3}{*}{\rotatebox{90}{Orth}} & AXp &
        \{Plate(+)\}x0.99,\{Tennis(+)$\wedge$Box(+)\}x0.98,\{Board(-)$\wedge$Box(+)\}x0.91 \\
        \cmidrule(lr){3-4}
        & & CXp &
        \{Brown(+)\}x1.0,\{Board(-)\}x1.0,\{Box(+)\}x1.0,\{Tennis(+)\}x1.0 \\
        \cmidrule(lr){2-4}
        & \multirow[c]{3}{*}{\rotatebox{90}{SpL}} & AXp & 
        \{Plate(+)\}x0.71,\{Plate(+)$\wedge$Field(+)\}x0.17 \\
        \cmidrule(lr){3-4}
        & & CXp & 
        \{Plate(+)\}x0.96,\{Plate(+)$\wedge$Field(+)\}x0.06 \\
        \cmidrule(lr){2-4}
        & \multirow[c]{3}{*}{\rotatebox{90}{LEA}} & AXp & 
        - \\
        \cmidrule(lr){3-4}
        & & CXp & 
        \{Plate(+)\}x0.96,\{Brown(+)\}x0.95,\{Box(+)\}x0.91,\{Train(+)\}x0.91 \\
        \bottomrule
    \end{tabular}
\end{table}

Secondly, Table \ref{tab:sup_VGG} shows signed ConXps for \textbf{VGG19-based behaviors} obtained via Ortho and LEACE erasure algorithms; see Section \ref{sec:quantitative} for a discussion on why SpLiCE is not listed alongside VGG-based behaviors. Model $M_5$ fine-tuned on EuroSAT consistently reports explanations with very high individual coverages, within the $[99\%,100\%]$ range, such as \{Sea(+)\}, \{Smoke(+)\}, and \{Dust(+)\}.
In contrast, the model $M_4$ fine-tuned on RIVAL10 struggles to do so. Its most frequent explanation accounts for only $57$\% of behavior $B_{M_4}$(Cat), while the runner up covers $35$\%, \{Milk(+)\} and \{Agricultural(-)\}, respectively. 
The misclassification behavior $B_{M_4}$(Deer, Car) reports four ConCXps with $100$\% of individual coverage: \{Traffic(+)\}, \{Parking(+)\}, \{Van(+)\}, and \{Bus(+)\}. 

\begin{table}[t]\centering\footnotesize
    \caption{Signed explanations with highest individual coverage for VGG19-based behaviors.}
    \label{tab:sup_VGG}
    \begin{tabular}{cccl} \toprule
        \multicolumn{3}{c}{\textbf{Setting}} & \textbf{Top Signed Concept-based Explanations} \\ \midrule
        \multirow[c]{4}{*}{\rotatebox{90}{{\tiny $B_{M_5}$(Lake,Past.)}}}
        & \multirow[c]{3}{*}{\rotatebox{90}{Orth}} & AXp &
        \{Fog(+)$\wedge$Smoke(+)$\wedge$Comet(+)$\wedge$Duck(+)\}x0.96,\{Fog(+)$\wedge$Smoke(+)\}x0.56 \\
        \cmidrule(lr){3-4}
        & & CXp &
        \{Sea(+)\}x1.0,\{Ocean(+)\}x0.99,\{Fog(+)\}x0.98,\{Mist(+)\}x0.98,\{Mosquito(+)\}x0.97 \\
        \cmidrule(lr){2-4}
        & \multirow[c]{3}{*}{\rotatebox{90}{LEA}} & AXp & 
        - \\
        \cmidrule(lr){3-4}
        & & CXp & 
        \{Smoke(+)\}x1.0,\{Comet(+)\}x1.0,\{Dust(+)\}x1.0,\{Ocean(+)\}x1.0,\{Sea(+)\}x1.0 \\
        \midrule
        \multirow[c]{6}{*}{\rotatebox{90}{{\tiny $B_{M_5}$(AnnCrop)}}}
        & \multirow[c]{3}{*}{\rotatebox{90}{Orth}} & AXp &
        \{Modest(+)\}x0.96,\{Memorial(+)\}x0.90,\{Jacket(+)\}x0.88,\{Ghost(+)\}x0.86 \\
        \cmidrule(lr){3-4}
        & & CXp &
        \{Modest(+)\}x0.90,\{Memorial(+)\}x0.90,\{Jacket(+)\}x0.78,\{South(+)\}x0.74 \\
        \cmidrule(lr){2-4}
        & \multirow[c]{3}{*}{\rotatebox{90}{LEA}} & AXp & 
        - \\
        \cmidrule(lr){3-4}
        & & CXp & 
        \{Beach(+)\}x0.92,\{Kite(+)\}x0.90,\{Direct(+)$\wedge$Daylight(+)\}x0.88,\{Sky(+)\}x0.86 \\
        \midrule
        \multirow[c]{6}{*}{\rotatebox{90}{$B_{M_4}$(Cat)}}
        & \multirow[c]{3}{*}{\rotatebox{90}{Orth}} & AXp &
        \{Milk(+)\}x0.57,\{Steak(+)$\wedge$Smoke(+)\}x0.11,\{Steak(+)$\wedge$Owl(+)\}x0.09,\{Wild(+)\}x0.09 \\
        \cmidrule(lr){3-4}
        & & CXp &
        \{Milk(+)\}x0.16,\{Owl(+)\}x0.14,\{Smoke(+)\}x0.14,\{Wild(+)$\wedge$Smoke(+)$\wedge$Sail(+)\}x0.04 \\
        \cmidrule(lr){2-4}
        & \multirow[c]{3}{*}{\rotatebox{90}{LEA}} & AXp & 
        - \\
        \cmidrule(lr){3-4}
        & & CXp & 
        \{Agricultural(-)\}x0.35,\{Moon(+)\}x0.15,\{Ambulance(-)\}x0.15,\{Wing(-)\}x0.15 \\
        \midrule
        \multirow[c]{6}{*}{\rotatebox{90}{{\tiny $B_{M_4}$(Deer,Car)}}}
        & \multirow[c]{3}{*}{\rotatebox{90}{Orth}} & AXp &
        \{Wolf(+)$\wedge$Pier(-)\}x0.80,\{Parking(+)$\wedge$Wild(+)\}x0.60,\{Driver(+)\}0.60 \\
        \cmidrule(lr){3-4}
        & & CXp &
        \{Wild(+)\}x0.80,\{Lion(+)\}x0.80,\{Pasture(+)\}x0.80,\{Extinct(+)\}x0.80 \\
        \cmidrule(lr){2-4}
        & \multirow[c]{3}{*}{\rotatebox{90}{LEA}} & AXp & 
        - \\
        \cmidrule(lr){3-4}
        & & CXp & 
        \{Traffic(+)\}x1.0,\{Parking(+)\}x1.0,\{Van(+)\}x1.0,\{Bus(+)\}x1.0,\{Road(+)\}x0.80 \\
        \bottomrule
    \end{tabular}
\end{table}

Lastly, Table \ref{tab:sup_zsCLIP} indicates that there is no clear winner on which erasure algorithm consistently produces the explanations with the highest individual coverage for \textbf{zero-shot CLIP-based behaviors}. It is interesting to note that this model shows a greater tendency to reuse the same concepts across different erasure algorithms than other models. For instance, in behavior $B_{M_1}$(Deer), the most frequent ConAXp and ConCXp are both \{Hunting(+)\} via Ortho and LEACE, with relative frequencies of $74$\% and $58$\%, respectively. 
Similarly, in the case of behavior $B_{M_1}$(Frog), the most frequent ConAXp and ConCXp are both \{Reptile(+)\} via Ortho and SpLiCE, with relative frequencies of $94$\% and $99$\%, respectively. 
Concept reuse also happens in behavior $B_{M_1}$(Equine, Car), where \{Taxi(+)\} and \{Driver(+)\} appear frequently; note that the concept \textsc{road} appears both positively and negatively in SpLiCE and Ortho-based ConAXps, respectively. 
Finally, behavior $B_{M_1}$(Crop, Highway), observed in the EuroSAT dataset, also shows partial overlap of concepts; being \{Solar(+)\} and \{Solar(+)$\wedge$Court(+)\} the most frequent ConAXp and ConCXp respectively, both found via Ortho, reporting relative frequencies of $83$\% and $92$\%, respectively. It is interesting to note that this is the only behavior with LEACE $\times$ ConAXps, although their individual coverage is a negligible $\approx 2$\%. 

\begin{table}[t]\centering\footnotesize
    \caption{Signed explanations with highest individual coverage for zero-shot CLIP-based behaviors.}
    \label{tab:sup_zsCLIP}
    \begin{tabular}{cccl} \toprule
        \multicolumn{3}{c}{\textbf{Setting}} & \textbf{Top Signed Concept-based Explanations} \\ \midrule
        \multirow[c]{9}{*}{\rotatebox{90}{$B_{M_1}$(Deer)}}
        & \multirow[c]{3}{*}{\rotatebox{90}{Orth}} & AXp &
        \{Hunting(+)\}x0.74,\{Wildlife(+)\}x0.64,\{Rack(+)$\wedge$Forest(+)\}x0.62 \\
        \cmidrule(lr){3-4}
        & & CXp &
        \{Wildlife(+)$\wedge$Hunting(+)$\wedge$Rack(+)$\wedge$Forest(+)\}x0.24 \\
        \cmidrule(lr){2-4}
        & \multirow[c]{3}{*}{\rotatebox{90}{SpL}} & AXp & 
        \{Horn(+)$\wedge$Hunter(+)$\wedge$Rack(+)$\wedge$Reserve(+)$\wedge$Goat(+)\}x0.06 \\
        \cmidrule(lr){3-4}
        & & CXp & 
        \{Goat(+)\}x0.41,\{Hunting(+)\}x0.34,\{Rack(+)\}x0.31,\{Calf(+)\}x0.25 \\
        \cmidrule(lr){2-4}
        & \multirow[c]{3}{*}{\rotatebox{90}{LEA}} & AXp & 
        - \\
        \cmidrule(lr){3-4}
        & & CXp & 
        \{Hunting(+)\}x0.58,\{Hunter(+)\}x0.48,\{Rack(+)$\wedge$Industrial(-)\}x0.39 \\
        \midrule
        \multirow[c]{9}{*}{\rotatebox{90}{$B_{M_1}$(Equine,Car)}}
        & \multirow[c]{3}{*}{\rotatebox{90}{Orth}} & AXp &
        \{Pasture(+)$\wedge$Wild(-)\}x0.67,\{Wild(+)$\wedge$Road(-)\}x0.67 \\
        \cmidrule(lr){3-4}
        & & CXp &
        \{Wild(+)\}x0.67,\{Taxi(+)\}x0.67,\{Forest(+)\}x0.44,\{Lion(+)\}x0.44,\{Grass(+)\}x0.33 \\
        \cmidrule(lr){2-4}
        & \multirow[c]{3}{*}{\rotatebox{90}{SpL}} & AXp & 
        \{Taxi(+)\}x0.44,\{Road(+)\}x0.11,\{Driver(+)\}x0.11,\{Cart(+)$\wedge$Road(-)\}x0.11 \\
        \cmidrule(lr){3-4}
        & & CXp & 
        \{Taxi(+)\}x1.0,\{Driver(+)\}x0.78,\{Traffic(+)\}x0.78,\{Trunk(+)\}x0.78 \\
        \cmidrule(lr){2-4}
        & \multirow[c]{3}{*}{\rotatebox{90}{LEA}} & AXp & 
        - \\
        \cmidrule(lr){3-4}
        & & CXp & 
        \{Taxi(+)\}x1.0,\{Driver(+)\}x1.0,\{Parking(+)\}x0.78,\{Traffic(+)\}x0.78 \\
        \midrule
        \multirow[c]{9}{*}{\rotatebox{90}{$B_{M_1}$(Frog)}}
        & \multirow[c]{3}{*}{\rotatebox{90}{Orth}} & AXp &
        \{Reptile(+)\}x0.94,\{Snail(+)$\wedge$Wheat(-)\}x0.57,\{Insect(+)$\wedge$Wheat(-)\}x0.55 \\
        \cmidrule(lr){3-4}
        & & CXp &
        \{Reptile(+)$\wedge$Insect(+)$\wedge$Snail(+)$\wedge$Pigeon(-)$\wedge$Beef(-)$\wedge$Wheat(-)\}x0.10 \\
        \cmidrule(lr){2-4}
        & \multirow[c]{3}{*}{\rotatebox{90}{SpL}} & AXp & 
        \{Reptile(+)$\wedge$Marsh(+)$\wedge$Snail(+)$\wedge$Cricket(+)$\wedge$Aquarium(+)\}x0.02 \\
        \cmidrule(lr){3-4}
        & & CXp & 
        \{Reptile(+)\}x0.99,\{Cricket(+)\}x0.85,\{Snail(+)\}x0.50,\{Aquarium(+)\}x0.43 \\
        \cmidrule(lr){2-4}
        & \multirow[c]{3}{*}{\rotatebox{90}{LEA}} & AXp & 
        - \\
        \cmidrule(lr){3-4}
        & & CXp & 
        \{Insect(+)$\wedge$Monarch(-)\}x0.78,\{Cricket(+)$\wedge$Feather(-)\}x0.69,\{Reptile(+)\}x0.63 \\
        \midrule
        \multirow[c]{9}{*}{\rotatebox{90}{$B_{M_1}$(Crop,Highway)}}
        & \multirow[c]{3}{*}{\rotatebox{90}{Orth}} & AXp &
        \{Solar(+)\}x0.83,\{Court(+)$\wedge$Navy(+)\}x0.72,\{Court(+)$\wedge$Frog(-)\}x0.70 \\
        \cmidrule(lr){3-4}
        & & CXp &
        \{Solar(+)$\wedge$Court(+)\}x0.92,\{Solar(+)\}x0.51,\{Flag(+)$\wedge$Court(+)\}x0.32 \\
        \cmidrule(lr){2-4}
        & \multirow[c]{3}{*}{\rotatebox{90}{SpL}} & AXp & 
        \{Solar(+)$\wedge$Navy(+)\}x0.26,\{Solar(+)\}x0.19,\{Solar(+)$\wedge$Court(+)\}x0.18 \\
        \cmidrule(lr){3-4}
        & & CXp & 
        \{Solar(+)\}x0.40,\{Solar(+)$\wedge$Court(+)\}x0.39\{Solar(+)$\wedge$Navy(+)\}x0.27 \\
        \cmidrule(lr){2-4}
        & \multirow[c]{3}{*}{\rotatebox{90}{LEA}} & AXp & 
        \{Solar(+)$\wedge$Display(+)\}x0.02,\{Solar(+)$\wedge$Court(+)\}x0.02 \\
        \cmidrule(lr){3-4}
        & & CXp & 
        \{Court(+)$\wedge$Bus(+)\}x0.56,\{Level(+)$\wedge$Court(+)\}x0.52,\{Solar(+)$\wedge$Spray(+)\}x0.42 \\
        \bottomrule
    \end{tabular}
\end{table}

\subsection{Qualitative Analysis}
\label{sec:qualitative}

We explore how ConXps can assist human analysts in evaluating vision models' behaviors through two use cases. First, we find how models disambiguate concepts with multiple potential meanings. Second, we perform pixel-space transformations based on ConXps and report the model's response. 

\subsubsection{Polysemantic Disambiguation}

We have found that our explanations are able to identify polysemous terms and disambiguate them. In the $B_{M_1}$(Ship) behavior, while $\{\text{Bay}(+)\}$ is a frequent explanation, so is the conjunction $\{\text{Bay}(+)\wedge\text{Velvet(-)}\}$, where \textsc{bay} is positive and \textsc{velvet} is negative. This explanation suggests that the model may initially associate \textsc{bay} with its usage in equine contexts, such as the popular horse color. The requirement for \textsc{velvet} to be negatively present serves to discard this animal-related meaning, effectively grounding the explanation in the maritime domain. Similarly, for $B_{M_2}$(Deer), our method found the explanation $\{\text{Racks}(+)\wedge\text{Industrial(-)}\}$. This indicates that the model recognizes the potential for \textsc{racks} to refer to industrial storage, with the negative concept ensuring the explanation remains relevant to the zoological context. 
In a similar fashion, we observed that in $B_{M_4}$(Plane), the explanation $\{\text{Flight}(+)\wedge\text{Nest(-)}\}$ distinguishes \textsc{plane} images from other flying entities like the \textsc{bird} class in the RIVAL10 dataset, successfully removing ambiguity.

\begin{tcolorbox}[colback=gray!10!white,colframe=gray!75!black]
\textbf{Finding:} ConAXps and ConCXps are able to naturally disambiguate polysemous terms without explicit supervision. They highlight that the vision model also potentially internally separates different semantic meanings of polysemous concepts to enable precise classification.
\end{tcolorbox}

\subsubsection{Pixel-space Transformations}
\label{appendix:pixel}

Obtaining a ConCXp implies that erasing specific concepts from a model’s internal representation will necessarily shift its prediction. To evaluate the model's response to pixel-level interventions, we conducted an experiment using a diffusion model ~\cite{qwen_image} to perform semantic edits directly on the input images. We tested whether erasing these concepts at the pixel level would flip the model's prediction, mimicking the effect of concept erasure. We transformed the pixels corresponding to all concepts in the ConCXps $\{\text{Sea}(+)\wedge\text{Freight(+)}\}, \{\text{Racks}(+)\wedge\text{Wildlife(+)}\}, \{\text{Driver}(+)\wedge\text{Parking(+)}\},$ and $\{\text{Marsh}(+)\wedge\text{Aquarium(+)}\}$ from images of the classes \textsc{ship}, \textsc{deer}, \textsc{car}, and \textsc{frog}, respectively; as shown in Figure~\ref{fig:diffusion}. 
Figure~\ref{fig:deer_car_frog} contains additional examples for the \textsc{deer}, \textsc{car}, and \textsc{frog} classes and their corresponding prompt template. We also explored alternative pixel replacements for the \textsc{sea} concept in \textsc{ship} images, as shown in Figure~\ref{fig:ship_dirt_grass}. Instead of replacing these pixels with white, we substitute them with textures representing \textsc{dirt} or \textsc{grass}.

\begin{figure}[t]\centering
    \includegraphics[width=\textwidth,trim={15mm 10cm 15mm 1cm},clip]{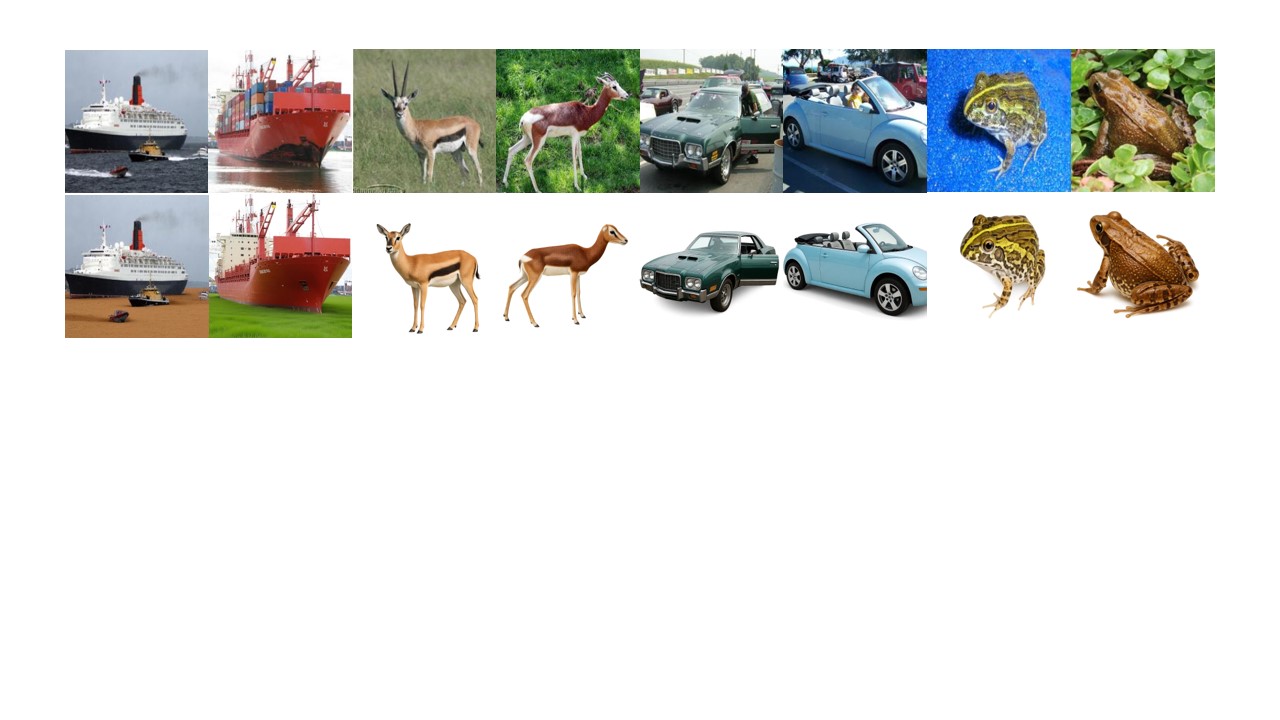}
    \caption{Pixel-space transformations performed using a diffusion model to remove concepts from images based on ConCXps extracted from them. We evaluated vision models' response to these targeted interventions.} 
    \label{fig:diffusion}
\end{figure}

Despite these semantic interventions, 96\% of the modified images across all behaviors are still correctly predicted as their original classes, e.g., replacing \textsc{sea} and \textsc{freight}-related pixels in \textsc{ship} images did not change the model's output. To analyze why, we provided the modified images through the vision encoders and measured the cosine similarity between the resulting embeddings and the concept vectors, as defined in Section \ref{sec:CLIP_concept_vectors} and Appendix \ref{appendix:CLIP_concepts}. We found that concepts removed at the pixel level remain positively present at the embedding level. This contrasts with erasure algorithms applied directly to the embeddings, which reduce concept strength to zero. The model's correct classification is justified because the concepts required for the prediction remain present in the embedding despite the pixel-level changes. These results support the finding that vision encoders infer and encode concepts from contextual cues, even when the specific pixels representing those concepts are replaced.

\begin{figure}[t!] \centering
    \includegraphics[width=\textwidth,trim={15mm 1cm 15mm 10cm},clip]{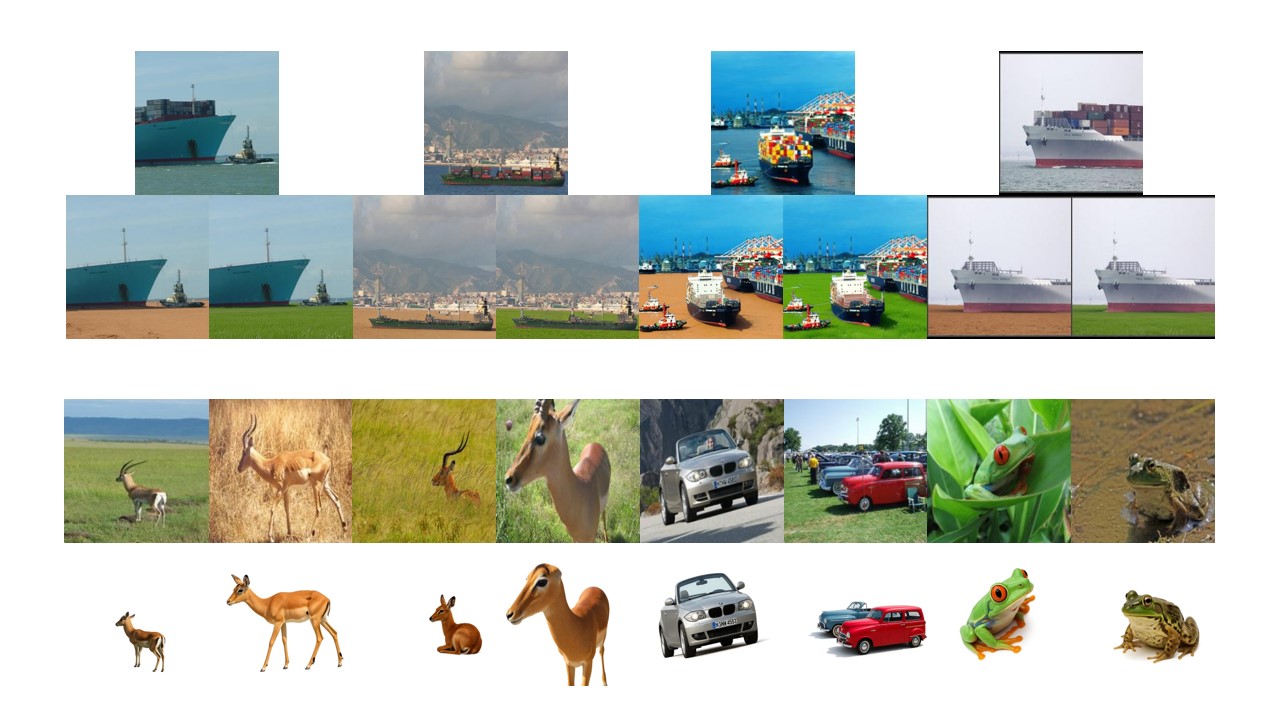}
    \caption{Supplementary examples for pixel-space transformations.  
    Prompt used: `Remove the \{\textsc{racks| driver|marsh}\}. Remove \{\textsc{wildlife|parking|aquarium}\} background. Replace with a white background.'}
    \label{fig:deer_car_frog}
\end{figure}

\begin{figure}[t!] \centering
    \includegraphics[width=\textwidth,trim={15mm 10cm 15mm 1cm},clip]{Figures/Slide8.jpg}
    \caption{Supplementary examples for pixel-space transformations. 
    Prompt used: `Remove the \{\textsc{sea}\} under the \{\textsc{ship}\}. Replace with \{\textsc{dirt|grass}\}. Remove \{\textsc{freight}\} on the \{\textsc{ship}\}.'}
    \label{fig:ship_dirt_grass}
\end{figure}

\begin{tcolorbox}[colback=gray!10!white,colframe=gray!75!black]
\textbf{Finding:} Vision encoders potentially infer and encode concepts from contextual cues despite the erasure of the corresponding pixels in the image. 
\end{tcolorbox}

\section{\textit{LLM-as-a-judge} Implementation}
\label{appendix:gemini}

As described in Section \ref{subsec:plausibility}, we categorize each concept in our base vocabulary as relevant or irrelevant in our plausibility evaluation, using Gemini~3~Pro \cite{gemini3pro2026}, following the \textit{LLM-as-a-judge} strategy \cite{Li_Camunas_2025}. This appendix provides the exact prompts used to generate these sets. The instructions are specific to the type of behavior under consideration: correct classifications, where ground truth equals the prediction, and misclassifications, where the model confuses the ground truth with a different class. 

\subsection{Correct Classification Behavior Instructions}

\begin{tcolorbox}[colback=gray!10!white,colframe=gray!75!black]
\textbf{Role:} You are acting as an expert in Computer Vision and Explainable Artificial Intelligence (XAI). Your task is to evaluate the semantic relevance of concepts within a predefined vocabulary relative to a vision model's classification behavior.

\textbf{Context:} A vision model has correctly classified an image.
\begin{itemize}[nosep, leftmargin=*]
    \item Ground Truth Label: \{class\_a\}
    \item Model Prediction: \{class\_a\}
\end{itemize}

\textbf{Task:} You will be provided with a vocabulary of \{N\} concepts. Classify each concept as either RELEVANT or IRRELEVANT for the prediction of \{class\_a\}.

\textbf{Evaluation Criteria:}
\begin{itemize}[nosep, leftmargin=*]
    \item RELEVANT: A concept is relevant if its presence provides useful information about the class. Examples: `wheels' is relevant for `car'; `horns' is relevant for `deer'; `wet' is relevant for `ship'.
    \item IRRELEVANT: Concepts that are semantically unrelated; these concepts provide no information about the class. Examples: `rectangular' is irrelevant for `cat'; `text' is irrelevant for `bird'.
\end{itemize}

\textbf{Vocabulary:} [comma-separated list of N words]

\textbf{Output:} 
RELEVANT: [comma-separated list], 
IRRELEVANT: [comma-separated list].
\end{tcolorbox}

\subsection{Misclassification Behavior Instructions}

\begin{tcolorbox}[colback=gray!10!white,colframe=gray!75!black]
\textbf{Role:} You are acting as an expert in Computer Vision and Explainable Artificial Intelligence (XAI). Your task is to evaluate the semantic relevance of concepts within a predefined vocabulary relative to a vision model's incorrect classification behavior.

\textbf{Context:} A vision model has misclassified an image.
\begin{itemize}[nosep, leftmargin=*]
    \item Ground Truth Label: \{class\_a\}
    \item Model Prediction: \{class\_b\}
\end{itemize}

\textbf{Task:} You will be provided with a vocabulary of \{N\} concepts. Classify each concept as either RELEVANT or IRRELEVANT in explaining the confusion between \{class\_a\} and \{class\_b\}.

\textbf{Evaluation Criteria:}
\begin{itemize}[nosep, leftmargin=*]
    \item RELEVANT: Concepts semantically associated with either \{class\_a\} or \{class\_b\}.\\ Example: `wings' is relevant when mistaking a `frog' (ground truth) for a `bird' (prediction).
    \item IRRELEVANT: Concepts semantically unrelated to both \{class\_a\} and \{class\_b\}.\\ Example: `metallic' is irrelevant when mistaking a `cat' (ground truth) for a `dog' (prediction).
\end{itemize}

\textbf{Vocabulary:} [comma-separated list of N words]

\textbf{Output:} 
RELEVANT: [comma-separated list], 
IRRELEVANT: [comma-separated list].
\end{tcolorbox}

\end{document}